\documentclass[10pt,a4paper]{article}
\usepackage[margin=0.95in]{geometry}
\usepackage{amsmath,amssymb,amsthm}
\usepackage{mathtools}
\usepackage{booktabs}
\usepackage{multirow}
\usepackage{graphicx}
\usepackage[ruled,linesnumbered]{algorithm2e}
\usepackage{hyperref}
\usepackage[capitalize]{cleveref}
\usepackage{microtype}
\usepackage{xcolor}
\usepackage{enumitem}
\usepackage{caption}
\usepackage{subcaption}
\usepackage{float}
\usepackage{bm}
\usepackage{placeins}
\usepackage{etoolbox}
\usepackage{tcolorbox}
\tcbuselibrary{breakable}

\newtheorem{theorem}{Theorem}[section]
\newtheorem{proposition}[theorem]{Proposition}
\newtheorem{lemma}[theorem]{Lemma}
\newtheorem{corollary}[theorem]{Corollary}
\theoremstyle{definition}
\newtheorem{definition}[theorem]{Definition}

\theoremstyle{remark}
\newtheorem{remark}[theorem]{Remark}

\definecolor{TheoremGray}{HTML}{F5F5F5}
\newtcolorbox{theoremblock}{
    breakable,
    colback=TheoremGray,
    colframe=TheoremGray,
    boxrule=0pt,
    sharp corners,
    left=0pt,
    right=0pt,
    top=0pt,
    bottom=0pt,
    boxsep=0pt,
    before skip=6pt,
    after skip=6pt
}
\BeforeBeginEnvironment{theorem}{\begin{theoremblock}}
\AfterEndEnvironment{theorem}{\end{theoremblock}}
\BeforeBeginEnvironment{proposition}{\begin{theoremblock}}
\AfterEndEnvironment{proposition}{\end{theoremblock}}
\BeforeBeginEnvironment{lemma}{\begin{theoremblock}}
\AfterEndEnvironment{lemma}{\end{theoremblock}}
\BeforeBeginEnvironment{corollary}{\begin{theoremblock}}
\AfterEndEnvironment{corollary}{\end{theoremblock}}
\BeforeBeginEnvironment{definition}{\begin{theoremblock}}
\AfterEndEnvironment{definition}{\end{theoremblock}}
\BeforeBeginEnvironment{example}{\begin{theoremblock}}
\AfterEndEnvironment{example}{\end{theoremblock}}
\BeforeBeginEnvironment{remark}{\begin{theoremblock}}
\AfterEndEnvironment{remark}{\end{theoremblock}}

\newcommand{\R}{\mathbb{R}}

\newcommand{\cL}{\mathcal{L}}
\newcommand{\cR}{\mathcal{R}}
\newcommand{\cT}{\mathcal{T}}

\newcommand{\dimB}{\dim_{\mathrm{B}}}
\newcommand{\abs}[1]{\lvert #1 \rvert}
\newcommand{\norm}[1]{\lVert #1 \rVert}

\hypersetup{
  colorlinks=true,
  linkcolor=blue!70!black,
  citecolor=green!50!black,
  urlcolor=blue!70!black
}

\makeatletter
\renewcommand{\maketitle}{%
  \begin{center}%
    {\Large\bfseries FI-KAN: Fractal Interpolation Kolmogorov--Arnold Networks\par}%
    \vspace{0.6em}%
    {\large \textbf{Gnankan Landry Regis N'guessan}$^{1,2,3}$\par}%
    \vspace{0.5em}%
    {\footnotesize
      $^1$Axiom Research Group\par
      \vspace{0.1em}%
      $^2$Dept.\ of Applied Mathematics and Computational Science, NM-AIST, Arusha, Tanzania\par
      \vspace{0.1em}%
      $^3$AIMS Research and Innovation Centre, Kigali, Rwanda\par
      \vspace{0.1em}%
      \texttt{rnguessan@aimsric.org}\par
    }%
  \end{center}%
  \vspace{-0.2em}%
}
\makeatother

\date{}

\begin{document}
\maketitle
\vspace{-1.0em}

{\small
\begin{abstract}
\noindent Kolmogorov--Arnold Networks (KAN) employ B-spline bases on a fixed grid, providing no intrinsic multi-scale decomposition for non-smooth function approximation. We introduce \textbf{Fractal Interpolation KAN} (FI-KAN), which incorporates learnable fractal interpolation function (FIF) bases from iterated function system (IFS) theory into KAN.
Two variants are presented: \textbf{Pure FI-KAN} (Barnsley, 1986) replaces B-splines entirely with FIF bases; \textbf{Hybrid FI-KAN} (Navascu\'es, 2005) retains the B-spline path and adds a learnable fractal correction.
The IFS contraction parameters give each edge a differentiable fractal dimension that adapts to target regularity during training.
On a H\"older regularity benchmark ($\alpha \in [0.2, 2.0]$), Hybrid FI-KAN outperforms KAN at every regularity level ($1.3\times$ to $\mathbf{33\times}$).
On fractal targets, FI-KAN achieves up to $\mathbf{6.3\times}$ MSE reduction over KAN, maintaining $4.7\times$ advantage at $5$~dB SNR.
On non-smooth PDE solutions (\texttt{scikit-fem}), Hybrid FI-KAN achieves up to $\mathbf{79\times}$ improvement on rough-coefficient diffusion and $3.5\times$ on L-shaped domain corner singularities.
Pure FI-KAN's complementary behavior, dominating on rough targets while underperforming on smooth ones, provides controlled evidence that basis geometry must match target regularity.
A fractal dimension regularizer provides interpretable complexity control whose learned values recover the true fractal dimension of each target.
These results establish regularity-matched basis design as a principled strategy for neural function approximation.
\end{abstract}
}

\vspace{0.2em}
{\small\noindent\textbf{Keywords:} Kolmogorov--Arnold Networks, fractal interpolation, iterated function systems, H\"older regularity, function approximation, neural architecture design.}

\vspace{0.2em}
\noindent\begin{minipage}{\textwidth}
\centering
\includegraphics[width=0.78\textwidth]{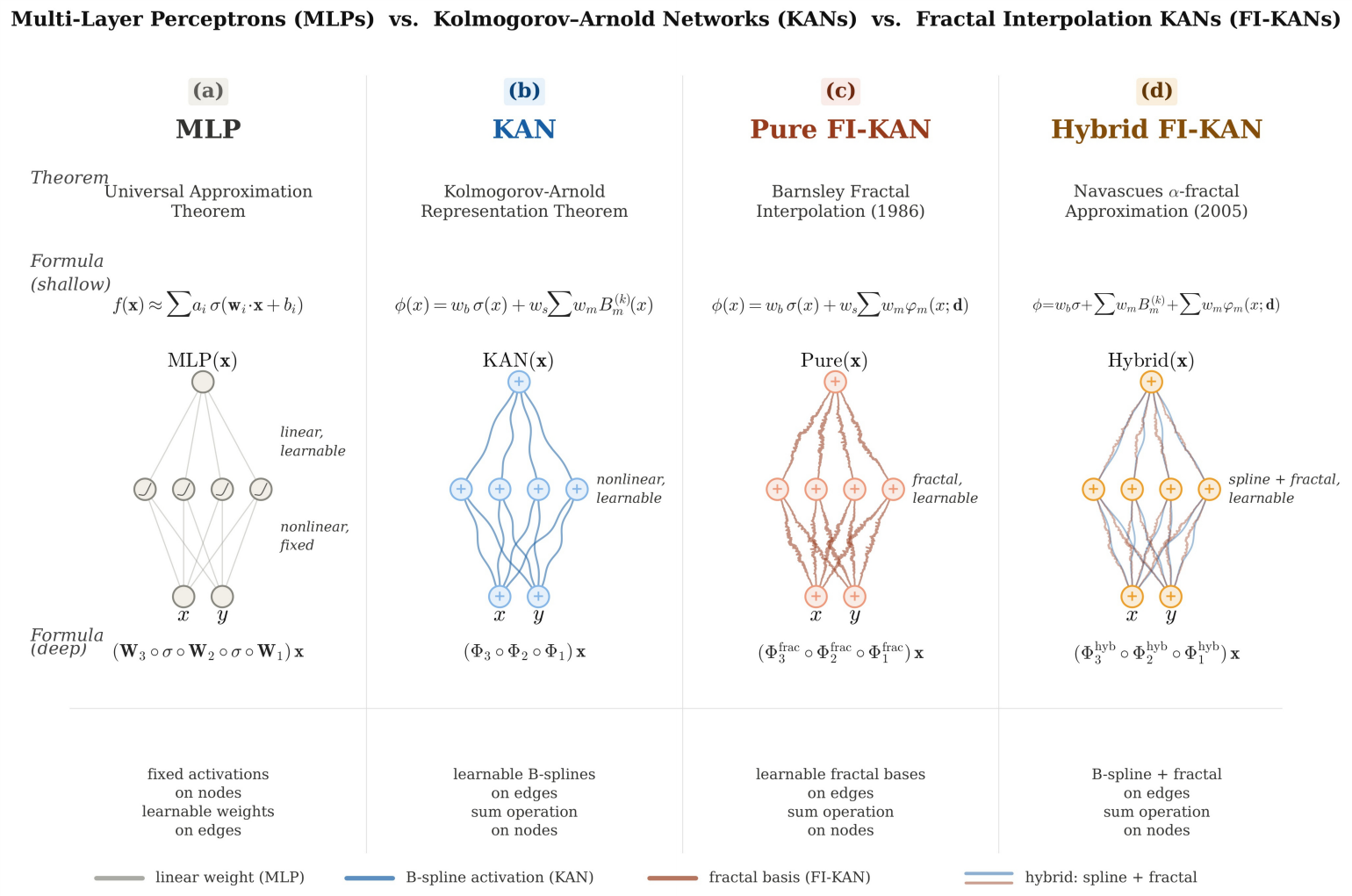}
\captionof{figure}{\small MLPs vs.\ KANs vs.\ FI-KANs. \textbf{(a)}~MLP: scalar weights on edges, fixed activations on nodes. \textbf{(b)}~KAN: learnable B-spline functions on edges. \textbf{(c)}~Pure FI-KAN: replaces B-splines with fractal interpolation bases $\varphi_m(x;\mathbf{d})$. \textbf{(d)}~Hybrid FI-KAN: retains B-splines and adds a fractal correction ($f_b^\alpha = b + h$). When $\mathbf{d} = \mathbf{0}$, (d) reduces to (b).}
\label{fig:architecture-comparison}
\end{minipage}

\newpage

\section{Introduction}
\label{sec:intro}

Neural function approximation architectures embed implicit assumptions about the regularity of their targets through their choice of basis functions.
Multi-layer perceptrons (MLPs) with smooth activation functions (ReLU, SiLU, GELU) construct approximants in spaces of piecewise smooth or analytic functions~\cite{cybenko1989approximation,hornik1989multilayer}.
Kolmogorov--Arnold Networks (KAN)~\cite{liu2024kan}, motivated by the Kolmogorov--Arnold representation theorem~\cite{kolmogorov1957representation,arnold1957functions}, replace fixed activations with learnable univariate functions parameterized as B-spline expansions~\cite{deboor2001practical}.
B-splines of order $k$ reproduce polynomials of degree at most $k-1$ and provide near-optimal approximation rates for targets in Sobolev and Besov spaces with integer or high fractional smoothness.

However, many functions of scientific and engineering interest are not smooth.
Turbulence velocity fields, financial time series, fracture surfaces, natural terrain profiles, and biomedical signals with multi-scale oscillations all exhibit non-trivial H\"older regularity, fractal self-similarity, or nowhere-differentiable character.
For such targets the smooth basis functions used by both MLPs and KANs are fundamentally mismatched: approximating a function with box-counting dimension $\dimB > 1$ using a smooth basis at resolution $h$ requires $O(h^{-1/\alpha})$ basis elements (where $\alpha$ is the H\"older exponent), with no gain from the polynomial reproduction properties that make splines efficient for smooth targets.

This paper introduces Fractal Interpolation KAN (FI-KAN), which augments or replaces the B-spline bases in KAN with fractal interpolation function (FIF) bases derived from iterated function system (IFS) theory~\cite{barnsley1986fractal,hutchinson1981fractals}.
The key innovation is that the vertical contraction parameters $\{d_i\}$ of the IFS are treated as trainable parameters, giving each edge activation a \emph{differentiable fractal dimension} that adapts to the regularity structure of the target function during training.

\paragraph{Contributions.}
\begin{enumerate}[leftmargin=2em,itemsep=2pt]
\item \textbf{Two architectures grounded in fractal approximation theory.}
  \emph{Pure FI-KAN} (Barnsley framework) replaces B-splines entirely with FIF bases.
  \emph{Hybrid FI-KAN} (Navascu\'es framework) retains B-splines and adds a fractal correction path.
  Both architectures are derived from classical mathematical frameworks, not ad hoc modifications.
\item \textbf{Learnable fractal dimension.}
  The contraction parameters provide a continuous, differentiable knob from smooth (piecewise linear, $\dimB = 1$) to rough (fractal, $\dimB > 1$) basis functions, learned from data.
\item \textbf{Fractal dimension regularization.}
  A differentiable regularizer penalizes unnecessary fractal complexity, implementing Occam's razor at the level of function geometry rather than parameter count.
\item \textbf{Comprehensive experimental validation.}
  Across functions spanning the H\"older regularity spectrum, FI-KAN demonstrates that matching basis regularity to target regularity yields substantial approximation gains on non-smooth targets, with additional advantages in noise robustness and continual learning.
\item \textbf{Empirical validation of the regularity-matching hypothesis.}
  The contrast between Pure and Hybrid FI-KAN provides controlled evidence that the geometric structure of the basis functions, not merely their number, is a critical design variable.
\item \textbf{Validation on non-smooth PDE solutions.}
  On reference solutions computed via \texttt{scikit-fem}~\cite{gustafsson2020scikit} for elliptic PDEs with corner singularities (H\"older $2/3$) and rough coefficients generated by fractional Brownian motion (\texttt{fbm} package~\cite{flynn2019fbm}), Hybrid FI-KAN achieves $65$--$79\times$ improvement over KAN, demonstrating that the regularity-matching advantage extends to structured roughness inherited from PDE operators.
\end{enumerate}

\paragraph{Scope.}
We do not claim FI-KAN as a general-purpose replacement for KAN.
We claim it as a principled extension for function classes with non-trivial geometric regularity, supported by both theory and experiment.
On smooth targets where B-splines are near-optimal, the Hybrid variant remains competitive (because the spline path carries the load), while the Pure variant underperforms (because its fractal bases cannot efficiently represent smooth curvature).
This asymmetry is not a weakness but a confirmation of the regularity-matching principle.
The advantage is most pronounced on targets with \emph{structured} roughness: PDE solutions inheriting non-smooth character from corner singularities, rough coefficients, or stochastic forcing, where Hybrid FI-KAN achieves up to $79\times$ improvement over KAN.

\paragraph{Organization.}
\Cref{sec:prelim} reviews the mathematical background on KAN, fractal interpolation functions, and $\alpha$-fractal approximation.
\Cref{sec:architecture} presents the FI-KAN architecture in both Pure and Hybrid variants.
\Cref{sec:theory} develops the approximation-theoretic analysis.
\Cref{sec:experiments} provides comprehensive experimental results.
\Cref{sec:related} discusses related work.
\Cref{sec:discussion} addresses limitations and future directions.
\Cref{sec:conclusion} concludes.

\begin{figure}[!htbp]
\centering
\includegraphics[width=\textwidth]{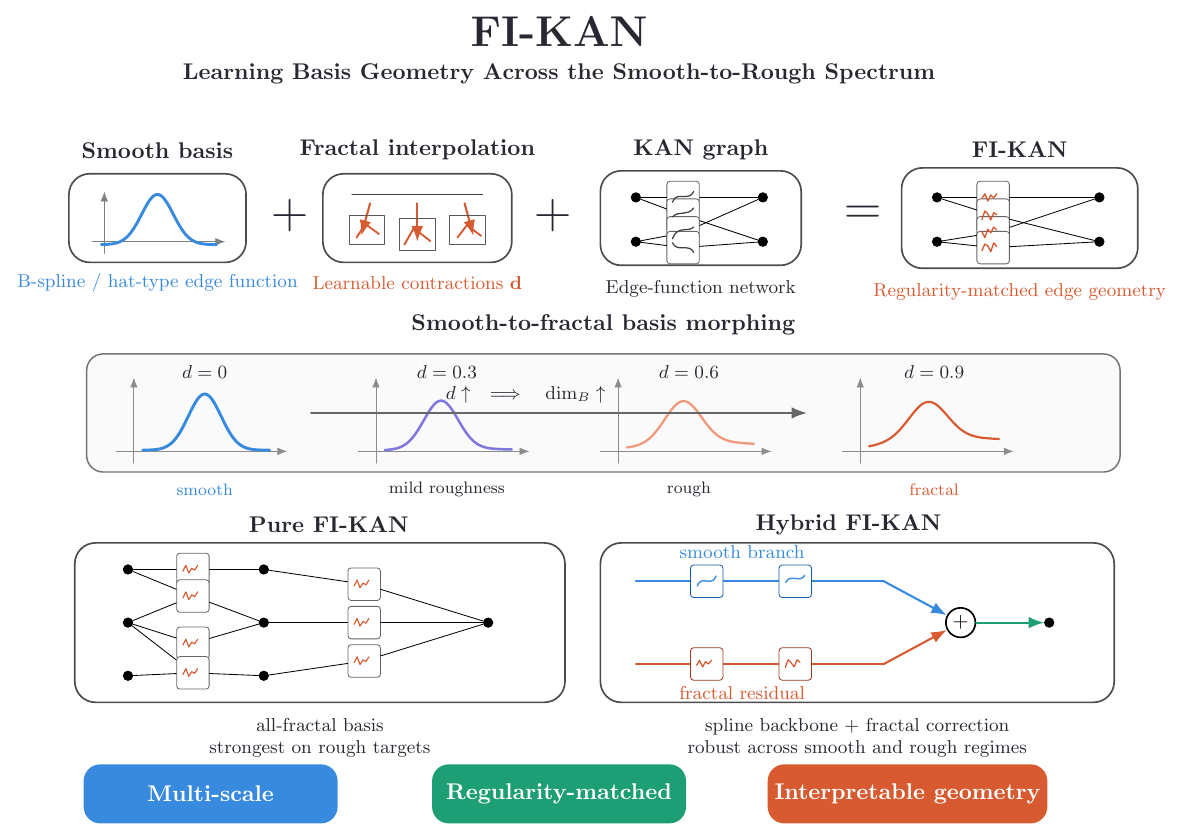}
\caption{FI-KAN: learning basis geometry across the smooth-to-rough spectrum.
\textbf{Top row:} FI-KAN combines smooth basis functions (B-spline/hat-type),
fractal interpolation with learnable contraction parameters $\mathbf{d}$,
and the KAN edge-function graph to produce regularity-matched edge geometry.
\textbf{Middle row:} Smooth-to-fractal basis morphing as $d_i$ increases from $0$ to $0.9$.
At $d_i = 0$ the basis is a smooth hat function ($\dim_B = 1$); as $d_i$ increases,
the basis acquires progressively finer self-affine structure with $\dim_B > 1$.
\textbf{Bottom row:} The two FI-KAN variants.
Pure FI-KAN (Barnsley framework) uses all-fractal bases, strongest on rough targets.
Hybrid FI-KAN (Navascu\'es framework) retains a spline backbone and adds a fractal correction,
providing robustness across both smooth and rough regimes.}
\label{fig:basis-geometry-spectrum}
\end{figure}

\section{Preliminaries}
\label{sec:prelim}

\subsection{Kolmogorov--Arnold Networks}
\label{sec:prelim:kan}

The Kolmogorov--Arnold representation theorem~\cite{kolmogorov1957representation,arnold1957functions} states that every continuous function $f\colon [0,1]^n \to \R$ admits a representation
\begin{equation}
  f(x_1, \ldots, x_n) = \sum_{q=0}^{2n} \Phi_q\!\left(\sum_{p=1}^n \psi_{q,p}(x_p)\right),
  \label{eq:ka-theorem}
\end{equation}
where $\Phi_q\colon \R \to \R$ and $\psi_{q,p}\colon [0,1] \to \R$ are continuous univariate functions.
KAN~\cite{liu2024kan} generalizes this by constructing neural networks whose edges carry learnable univariate functions rather than scalar weights.
Specifically, a KAN layer maps $\R^{n_{\mathrm{in}}} \to \R^{n_{\mathrm{out}}}$ via
\begin{equation}
  \mathbf{y} = \begin{pmatrix} \phi_{1,1}(\cdot) & \cdots & \phi_{1,n_{\mathrm{in}}}(\cdot) \\ \vdots & \ddots & \vdots \\ \phi_{n_{\mathrm{out}},1}(\cdot) & \cdots & \phi_{n_{\mathrm{out}},n_{\mathrm{in}}}(\cdot) \end{pmatrix} \mathbf{x},
  \label{eq:kan-layer}
\end{equation}
where each $\phi_{j,i}\colon \R \to \R$ is parameterized as a B-spline expansion plus a residual base activation.

In the efficient-KAN implementation~\cite{blealtan2024efficientkan}, each edge function takes the form
\begin{equation}
  \phi_{j,i}(x) = w^{(\mathrm{base})}_{j,i} \, \sigma(x) + w^{(\mathrm{scale})}_{j,i} \sum_{m=0}^{G+k-1} w^{(\mathrm{spline})}_{j,i,m} B_m^{(k)}(x),
  \label{eq:kan-edge}
\end{equation}
where $\sigma$ is SiLU, $\{B_m^{(k)}\}$ are B-spline basis functions of order $k$ on a grid of size $G$, and $w^{(\mathrm{base})}$, $w^{(\mathrm{scale})}$, $w^{(\mathrm{spline})}$ are learnable parameters.

\paragraph{B-spline basis properties.}
B-splines of order $k$ form a partition of unity, are $C^{k-2}$ smooth, and reproduce polynomials of degree at most $k-1$~\cite{deboor2001practical}.
For a target $f \in C^s([a,b])$ with $s \leq k$, the best $B$-spline approximation error on a uniform grid of spacing $h$ satisfies $\norm{f - f_h}_\infty = O(h^s)$.
This makes B-splines near-optimal for smooth targets but provides no structural advantage when $s < 1$ (i.e., H\"older-continuous but not Lipschitz) or when the target has fractal character.

\subsection{Fractal Interpolation Functions}
\label{sec:prelim:fif}

Fractal interpolation, introduced by Barnsley~\cite{barnsley1986fractal}, constructs continuous functions whose graphs can have prescribed fractal dimension.
The construction uses the theory of iterated function systems (IFS)~\cite{hutchinson1981fractals,barnsley1988fractals}.

\begin{definition}[Fractal Interpolation Function]
\label{def:fif}
Given interpolation data $\{(x_i, y_i)\}_{i=0}^N$ with $a = x_0 < x_1 < \cdots < x_N = b$, consider the IFS $\{w_i\}_{i=1}^N$ on $[a,b] \times \R$ defined by
\begin{equation}
  w_i(x, y) = \begin{pmatrix} a_i & 0 \\ c_i & d_i \end{pmatrix} \begin{pmatrix} x \\ y \end{pmatrix} + \begin{pmatrix} e_i \\ f_i \end{pmatrix}, \quad i = 1, \ldots, N,
  \label{eq:ifs}
\end{equation}
where:
\begin{itemize}[leftmargin=2em,itemsep=2pt]
  \item $\abs{d_i} < 1$ for all $i$ (vertical contractivity);
  \item $a_i, e_i$ are determined by the interpolation constraints:
    $w_i(x_0, y_0) = (x_{i\text{-}1}, y_{i\text{-}1})$,
    $w_i(x_N, y_N) = (x_i, y_i)$;
  \item $d_i \in (-1,1)$ are the \emph{vertical scaling} (contraction) factors, the only free parameters of the IFS;
  \item $c_i$ and $f_i$ are determined by the interpolation constraints and the choice of~$d_i$:
    \begin{equation}
      c_i = \frac{y_i - y_{i-1} - d_i(y_N - y_0)}{x_N - x_0}, \qquad f_i = y_{i-1} - c_i x_0 - d_i y_0.
      \label{eq:ci-fi-determined}
    \end{equation}
\end{itemize}
The \emph{fractal interpolation function} (FIF) $f^*$ is the unique continuous function whose graph $G(f^*)$ is the attractor of this IFS.
\end{definition}

The maps $L_i(x) = a_i x + e_i$ project $[x_0, x_N]$ onto $[x_{i-1}, x_i]$.
The FIF $f^*$ satisfies the Read--Bajraktarevi\'c (RB) functional equation~\cite{barnsley1986fractal}:
\begin{equation}
  f^*(L_i(x)) = c_i \, x + d_i \, f^*(x) + f_i, \quad x \in [x_0, x_N], \quad i = 1, \ldots, N.
  \label{eq:rb-equation}
\end{equation}

\begin{theorem}[Barnsley, 1986~\cite{barnsley1986fractal}]
\label{thm:barnsley-existence}
If $\abs{d_i} < 1$ for all $i = 1, \ldots, N$, then the IFS~\eqref{eq:ifs} has a unique attractor that is the graph of a continuous function $f^*\colon [a,b] \to \R$ satisfying $f^*(x_i) = y_i$ for all $i$.
\end{theorem}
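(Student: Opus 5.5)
The plan is to work through the Read--Bajraktarevi\'c operator on a complete metric space of continuous functions, not through the Hausdorff metric directly. Let
\[
\mathcal{F} = \{g \in C([a,b]) : g(x_0) = y_0,\ g(x_N) = y_N\}
\]
with the sup norm. It is a closed affine subset of the Banach space $C([a,b])$, hence complete. Define
\[
(Tg)(x) = c_i\, L_i^{-1}(x) + d_i\, g\bigl(L_i^{-1}(x)\bigr) + f_i, \qquad x \in [x_{i-1}, x_i],
\]
mirroring \eqref{eq:rb-equation}. Here $L_i^{-1}$ is affine from $[x_{i-1},x_i]$ onto $[x_0,x_N]$, since the constraints force $a_i = (x_i - x_{i-1})/(x_N - x_0)$ with $0 < a_i < 1$.

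The first step is to check that $T$ maps $\mathcal{F}$ into itself. This is the bookkeeping step that uses \eqref{eq:ci-fi-determined}. At the left end of the $i$-th subinterval we have $L_i^{-1}(x_{i-1}) = x_0$, so
\[
(Tg)(x_{i-1}) = c_i x_0 + d_i y_0 + f_i = y_{i-1}.
\]
At the right end we have $L_i^{-1}(x_i) = x_N$, so
\[
(Tg)(x_i) = c_i x_N + d_i y_N + f_i = y_i
\]
by the formula for $c_i$. The piecewise definitions therefore agree at each interior node $x_i$: the $i$-th piece gives $y_i$ and the $(i+1)$-th piece also gives $y_i$. So $Tg$ is well defined and continuous, and it has the right endpoint values. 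I expect this to be the main obstacle, because it is exactly where the interpolation constraints enter. It needs only careful algebra with $c_i$ and $f_i$.

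The second step is contraction. For $g, h \in \mathcal{F}$,
\[
\norm{Tg - Th}_\infty \le \max_i \abs{d_i}\, \norm{g - h}_\infty .
\]
The hypothesis $\abs{d_i} < 1$ makes $\max_i \abs{d_i} < 1$. Banach's fixed point theorem then gives a unique $f^* \in \mathcal{F}$ with $Tf^* = f^*$. Evaluating the argument of step one at the nodes shows $f^*(x_i) = y_i$ for every $i$, and by construction $f^*$ satisfies \eqref{eq:rb-equation}.

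The third step identifies the graph $G = G(f^*)$ as the unique attractor. The fixed-point equation says that the pieces of $G$ over the subintervals are $\{(L_i(x),\, c_i x + d_i f^*(x) + f_i) : x \in [x_0, x_N]\}$, and these are exactly the sets $w_i(G)$. Hence $G = \bigcup_i w_i(G)$. Since $G$ is compact and nonempty, it is an invariant set of the IFS.

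For uniqueness, the IFS need not be contractive in the Euclidean metric. Following Barnsley, I use instead the equivalent metric
\[
\rho\bigl((x,y),(x',y')\bigr) = \abs{x - x'} + \theta\, \abs{y - y'} ,
\]
where $\theta > 0$ is small enough that each $w_i$ is a $\rho$-contraction. Concretely,
\[
\rho\bigl(w_i(p), w_i(p')\bigr) \le \bigl(a_i + \theta \abs{c_i}\bigr) \abs{x - x'} + \abs{d_i}\, \theta\, \abs{y - y'},
\]
so it suffices to choose $\theta$ with $a_i + \theta \abs{c_i} < 1$ for all $i$. Hutchinson's theorem in the induced Hausdorff metric then says the invariant compact set is unique, so it equals $G$. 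This completes the plan.
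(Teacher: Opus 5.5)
Your proposal is correct and is essentially Barnsley's original argument, which the paper states without proof and cites: the Read--Bajraktarevi\'c operator is a $\max_i \abs{d_i}$-contraction on the closed affine subset of $C([a,b])$ defined by the two endpoint constraints; the graph of its fixed point satisfies $G = \bigcup_i w_i(G)$; and uniqueness of the attractor comes from the IFS being contractive in the equivalent metric $\abs{x-x'} + \theta\abs{y-y'}$. The one hypothesis you should state explicitly is $N \ge 2$, since that is what gives $a_i < 1$ and hence a valid $\theta$. For $N = 1$ the map $L_1$ is the identity, $w_1$ fixes every point of the graph, and every nonempty compact subset of the graph is invariant, so uniqueness genuinely fails.
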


\begin{theorem}[Fractal Dimension; Barnsley, 1986~\cite{barnsley1986fractal}]
\label{thm:fractal-dim}
For the FIF $f^*$ with $c_i = 0$ and $\abs{d_i} < 1$, the box-counting dimension of the graph satisfies
\begin{equation}
  \dimB\bigl(\mathrm{Graph}(f^*)\bigr) =
  \begin{cases}
    1 & \text{if } \displaystyle\sum_{i=1}^N \abs{d_i} \leq 1, \\[8pt]
    1 + \dfrac{\log \sum_{i=1}^N \abs{d_i}}{\log N} & \text{if } \displaystyle\sum_{i=1}^N \abs{d_i} > 1.
  \end{cases}
  \label{eq:fractal-dim}
\end{equation}
\end{theorem}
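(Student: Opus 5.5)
The plan is to compute the box-counting dimension directly by counting grid boxes over the self-affine pieces of the graph. The engine is the Read--Bajraktarevi\'c equation \eqref{eq:rb-equation}, which with $c_i = 0$ becomes a pure vertical rescaling.

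I will work under the standing assumptions implicit in Barnsley's setting. First, the nodes are equally spaced, so $a_i = 1/N$ and each $L_i$ has ratio $1/N$. Second, the data are not all collinear; with $c_i = 0$ this means the $y_i$ are not all equal. This guarantees that $f^*$ is nonconstant, so $R := \operatorname{osc}(f^*,[a,b]) > 0$. (If the data are constant then $f^*$ is constant and the dimension is trivially $1$.)

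\textbf{Step 1 (exact oscillation scaling).} For $x \in [a,b]$, \eqref{eq:rb-equation} with $c_i=0$ gives $f^*(L_i(x)) = d_i f^*(x) + f_i$. Hence $\operatorname{osc}(f^*, L_i(J)) = \abs{d_i}\operatorname{osc}(f^*,J)$ for every subinterval $J$.

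Iterating over a word $w = i_1\cdots i_k$ with $L_w = L_{i_1}\circ\cdots\circ L_{i_k}$ gives
\[
\operatorname{osc}\bigl(f^*, L_w([a,b])\bigr) = \abs{d_{i_1}}\cdots\abs{d_{i_k}}\, R =: \abs{d_w} R .
\]
The intervals $L_w([a,b])$ with $\abs{w}=k$ are exactly the $N^k$ columns of width $\delta_k = (b-a)N^{-k}$. Because $c_i=0$, no linear drift term is added, so this scaling is an equality and not merely an upper bound. I expect this to be the key point: with $c_i \neq 0$ the linear term would spoil the lower bound.

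\textbf{Step 2 (two-sided box count at scale $\delta_k$).} Let $\mathcal N_k$ be the number of $\delta_k$-mesh squares meeting the graph. The graph is continuous over each column, so its part in a column of oscillation $\omega$ meets at least $\max(1,\omega/\delta_k)$ squares and at most $\omega/\delta_k + 2$ of them. Summing over columns and using $\sum_{\abs{w}=k}\abs{d_w} = S^k$ with $S=\sum_i\abs{d_i}$ gives
\[
\max\Bigl(N^k,\ \tfrac{R}{b-a}(NS)^k\Bigr) \;\le\; \mathcal N_k \;\le\; \tfrac{R}{b-a}(NS)^k + 2N^k .
\]

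\textbf{Step 3 (dimension).} Take $\log\mathcal N_k / \log(1/\delta_k)$ with $\log(1/\delta_k) = k\log N + O(1)$.

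If $S>1$, the $(NS)^k$ term dominates on both sides. The limit is then $\log(NS)/\log N = 1 + \log S/\log N$.

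If $S\le 1$, both bounds are $\Theta(N^k)$ up to constants, so the limit is $1$.

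Finally, the box dimension may be computed along the geometric sequence $\delta_k$, since $\delta_{k+1}/\delta_k = 1/N$ is fixed and standard monotonicity of $\mathcal N_\delta$ handles intermediate $\delta$. This yields \eqref{eq:fractal-dim}.

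The main obstacle is the lower bound in Step 2, which needs $R>0$ together with the exact identity of Step 1. That identity holds only because $c_i=0$, which is why the hypothesis appears in the statement.
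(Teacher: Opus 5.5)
The paper does not prove this statement; it cites Barnsley (1986), whose theorem covers equally spaced nodes with arbitrary data-determined $c_i$, under the hypothesis that the interpolation points are not collinear. Your self-contained $N$-adic box count is correct, and it takes a more elementary route that depends on $c_i=0$. The identity $\operatorname{osc}(f^*,L_w([a,b]))=\abs{d_w}R$ is exact, so counting column by column gives $\max\bigl(N^k,\tfrac{R}{b-a}(NS)^k\bigr)\le\mathcal N_k\le\tfrac{R}{b-a}(NS)^k+2N^k$, and the dimension follows immediately.

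This makes the argument transparent, and it exposes two hypotheses that the printed statement omits but genuinely needs. The first is equal spacing: the $\log N$ presupposes the common ratio $1/N$ of the $L_i$. The second is non-constant data, and it is not cosmetic. Constant data satisfy $c_i=0$ for every $\mathbf d$ and produce a constant $f^*$, whose graph has dimension $1$ even when $\sum_i\abs{d_i}>1$. So the statement as written is false without this hypothesis.

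What your route does not give is the general case. For $c_i\neq 0$ one only has $f^*\circ L_w=d_w f^*+\ell_w$ with $\ell_w$ affine, so the exact identity is lost. The lower bound then needs Barnsley's non-collinearity input, namely a positive lower bound on $\operatorname{osc}(f^*-\ell)$ over affine $\ell$. That general version is what the paper actually relies on in \cref{rem:ci-zero}, where the FIF basis functions carry nonzero implicit $c_i$. Your argument proves the theorem as stated, but not that later use of it.

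One small imprecision: under $c_i=0$, ``not collinear'' is not equivalent to ``not all $y_i$ equal''. Non-constant collinear data are compatible with $c_i=0$ precisely when all $d_i=1/N$. Since you only use $R>0$, your proof also covers that case, where $S=1$ and the formula correctly returns $1$.
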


\paragraph{Linearity in the ordinates.}
A key structural property is that the FIF depends \emph{linearly} on the interpolation ordinates: $f^*(x; \mathbf{y}, \mathbf{d}) = \sum_{i=0}^N y_i \, \varphi_i(x; \mathbf{d})$, where the \emph{FIF basis functions} $\varphi_i$ satisfy $\varphi_i(x_j; \mathbf{d}) = \delta_{ij}$ (the Kronecker property).
This parallels the structure of B-spline expansions and is crucial for embedding FIF bases in the KAN framework.

\paragraph{The $c_i = 0$ specialization.}
When $c_i = 0$ for all $i$, the RB equation~\eqref{eq:rb-equation} simplifies to
\begin{equation}
  f^*(L_i(x)) = d_i \, f^*(x) + f_i, \quad x \in [x_0, x_N].
  \label{eq:rb-simple}
\end{equation}
This is the standard ``recurrent'' FIF studied extensively in fractal approximation theory~\cite{barnsley1986fractal,massopust2010interpolation,massopust1994fractal}.
The basis functions $\varphi_i(x; \mathbf{d})$ depend only on $\mathbf{d}$ and the grid structure.
\Cref{thm:fractal-dim} applies directly.
When $d_i = 0$ for all $i$, the FIF reduces to the piecewise linear interpolant through $\{(x_i, y_i)\}$, i.e., the basis functions become the standard hat functions.

\begin{remark}[The $c_i = 0$ specialization and basis function construction]
\label{rem:ci-zero}
Setting $c_i = 0$ in~\eqref{eq:ci-fi-determined} imposes $d_i = (y_i - y_{i-1})/(y_N - y_0)$ when $y_0 \neq y_N$, constraining $d_i$ rather than leaving it free.
For the FIF basis functions $\varphi_j$ with Kronecker data $(x_i, \delta_{ij})$, the interior bases ($0 < j < N$) have $y_0 = y_N = 0$, so the constraint degenerates and $d_i$ remains free, but the endpoint conditions yield non-trivial $c_i$ values for subintervals adjacent to the $j$-th grid point.
\Cref{alg:fractal-bases} handles this correctly: the piecewise linear base case and boundary corrections implicitly encode the $c_i$ contributions determined by the Kronecker data.
The fractal dimension formula (\cref{thm:fractal-dim}) remains valid because $c_i$ affects only the linear skeleton of the FIF, not the self-affine scaling structure that governs the box-counting dimension~\cite{barnsley1986fractal,falconer2003fractal}.
Thus ``$c_i = 0$'' should be understood as a simplifying label for the analysis, not a literal constraint on the implementation.
\end{remark}

\subsection{Alpha-Fractal Interpolation}
\label{sec:prelim:alpha}

Navascu\'es~\cite{navascues2005fractal} generalized Barnsley's construction by introducing the $\alpha$-fractal operator.
Given a base function $b \in C([a,b])$ that interpolates the data $\{(x_i, y_i)\}_{i=0}^N$, the \emph{$\alpha$-fractal function} $f_b^\alpha$ is defined as the FIF satisfying
\begin{equation}
  f_b^\alpha(L_i(x)) = \alpha_i \, f_b^\alpha(x) + b(L_i(x)) - \alpha_i \, b(x),
  \label{eq:navascues}
\end{equation}
where $\alpha_i \in (-1,1)$ are the fractal parameters and $b$ is the \emph{base function} (a classical, typically smooth, approximant).

Setting $h = f_b^\alpha - b$, one obtains
\begin{equation}
  h(L_i(x)) = \alpha_i \, h(x), \quad h(x_j) = 0 \text{ for all } j,
  \label{eq:navascues-perturbation}
\end{equation}
so $h$ is a self-affine perturbation that vanishes at the interpolation points.
The $\alpha$-fractal function thus decomposes as
\begin{equation}
  f_b^\alpha = \underbrace{b}_{\text{classical approximant}} + \underbrace{h}_{\text{fractal perturbation}}.
  \label{eq:alpha-decomposition}
\end{equation}

This framework has three key properties:
\begin{enumerate}[leftmargin=2em,itemsep=2pt]
  \item \textbf{Recovery:} When $\alpha_i = 0$ for all $i$, $f_b^\alpha = b$ (the base function is recovered exactly).
  \item \textbf{Continuous bridge:} The parameters $\alpha_i$ provide a continuous transition from classical to fractal approximation.
  \item \textbf{Residual structure:} The perturbation $h$ captures precisely what the smooth base function $b$ misses. If $b$ is a spline and the target has fractal structure, then $h$ encodes the non-smooth residual.
\end{enumerate}

This decomposition provides the theoretical foundation for our Hybrid FI-KAN architecture.

\section{FI-KAN Architecture}
\label{sec:architecture}

We present two variants of Fractal Interpolation KAN, each grounded in one of the mathematical frameworks described in \cref{sec:prelim}.

\subsection{FIF Basis Function Computation}
\label{sec:arch:bases}

\begin{figure}[!htbp]
\centering
\includegraphics[width=\textwidth]{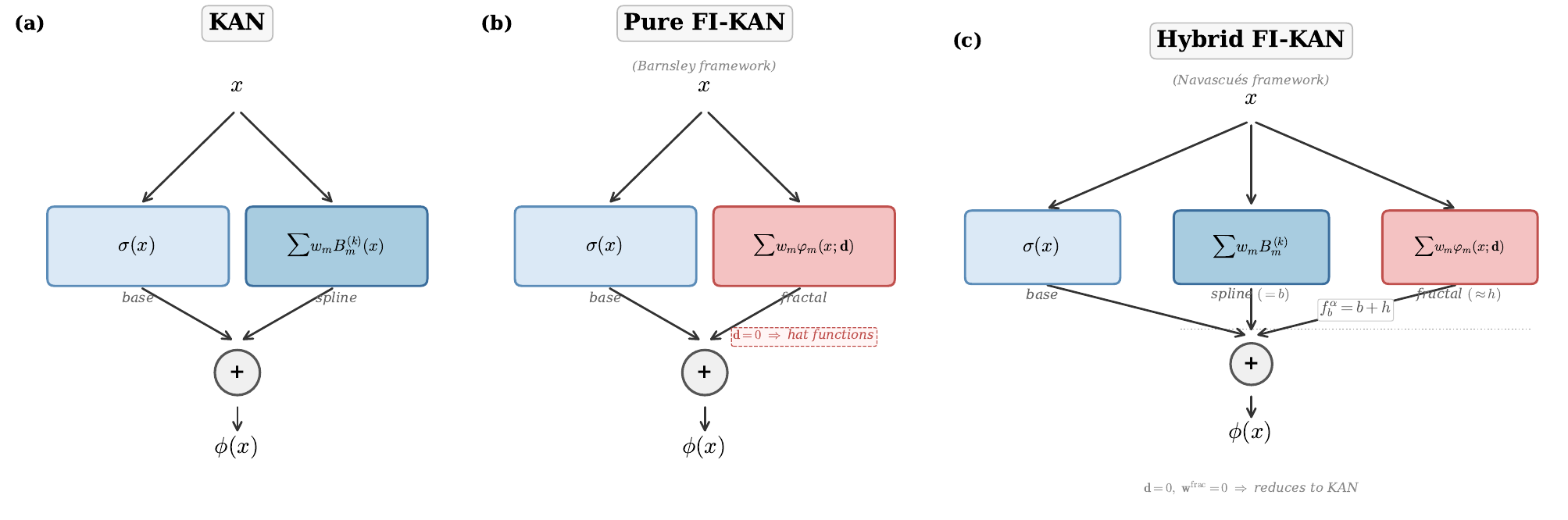}
\caption{Edge function architecture for the three models. \textbf{(a) KAN}: base activation plus B-spline path. \textbf{(b) Pure FI-KAN} (Barnsley): replaces B-splines with fractal interpolation bases $\varphi_m(x; \mathbf{d})$. When $\mathbf{d} = \mathbf{0}$, the FIF bases reduce to piecewise linear hat functions. \textbf{(c) Hybrid FI-KAN} (Navascu\'es): retains the B-spline path and adds a fractal correction, implementing the $\alpha$-fractal decomposition $f_b^\alpha = b + h$. When $\mathbf{d} = \mathbf{0}$ and fractal weights vanish, Hybrid reduces to standard KAN.}
\label{fig:architecture}
\end{figure}

We construct FIF basis functions $\{\varphi_i(x; \mathbf{d})\}_{i=0}^N$ on a uniform grid $x_i = a + i(b-a)/N$ for $i = 0, \ldots, N$, using the $c_i = 0$ specialization of Barnsley's framework (\cref{sec:prelim:fif}).
Evaluation uses a truncated iteration of the RB operator to depth $K$.

\begin{algorithm}[t]
\caption{Fractal Basis Function Evaluation}
\label{alg:fractal-bases}
\KwIn{$\mathbf{x} \in [a,b]^{B \times D}$ (batch $\times$ features), $\mathbf{d} \in (-1,1)^{D \times N}$ (contraction parameters), grid size $N$, recursion depth $K$}
\KwOut{$\boldsymbol{\Phi} \in \R^{B \times D \times (N+1)}$ (basis function values)}
$u \gets (x - a) / (b - a)$, clamped to $(\varepsilon, 1 - \varepsilon)$ \tcp*{Normalize to $[0,1]$}
$\boldsymbol{\Phi} \gets \mathbf{0}$, \quad $r \gets \mathbf{1}$ \tcp*{Initialize bases and running product}
\For{$k = 0, \ldots, K-1$}{
    $j \gets \lfloor u \cdot N \rfloor$, clamped to $\{0, \ldots, N-1\}$ \tcp*{Interval index}
    $t \gets u \cdot N - j$ \tcp*{Local coordinate}
    $d_j \gets \mathbf{d}[\cdot, j]$ \tcp*{Gather contraction factor}
    $\boldsymbol{\Phi}[\cdot, \cdot, j] \mathrel{+}= r \cdot (1 - t)$;
    \quad $\boldsymbol{\Phi}[\cdot, \cdot, j+1] \mathrel{+}= r \cdot t$ \\
    $\boldsymbol{\Phi}[\cdot, \cdot, 0] \mathrel{-}= r \cdot d_j \cdot (1-t)$;
    \quad $\boldsymbol{\Phi}[\cdot, \cdot, N] \mathrel{-}= r \cdot d_j \cdot t$ \tcp*{Boundary corrections}
    $r \gets r \cdot d_j$; \quad $u \gets t$
}
\tcp{Base case: piecewise linear}
$j \gets \lfloor u \cdot N \rfloor$; \quad $t \gets u \cdot N - j$ \\
$\boldsymbol{\Phi}[\cdot, \cdot, j] \mathrel{+}= r \cdot (1 - t)$; \quad $\boldsymbol{\Phi}[\cdot, \cdot, j+1] \mathrel{+}= r \cdot t$
\end{algorithm}

The boundary corrections at indices $0$ and $N$ enforce the endpoint constraints of the RB operator. The running product $r_k = \prod_{m=0}^{k-1} d_{\sigma_m}$ tracks the cumulative contraction through $k$ levels of recursion, where $\sigma_m$ is the interval index at recursion depth $m$.

\begin{proposition}[Truncation Error]
\label{prop:truncation}
Let $d_{\max} = \max_{i} \abs{d_i} < 1$. The truncated evaluation at depth $K$ satisfies
\begin{equation}
  \norm{f^{(K)} - f^*}_\infty \leq C \cdot d_{\max}^K,
\end{equation}
where $C$ depends on the interpolation ordinates and the grid.
\end{proposition}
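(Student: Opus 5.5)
The plan is to view $f^*$ as the unique fixed point of the Read--Bajraktarevi\'c operator and $f^{(K)}$ as the $K$-th Picard iterate started from the piecewise linear interpolant. The estimate then follows from the contraction property of that operator in the sup norm.

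First, on the complete metric space $\mathcal{F}=\{g\in C([a,b]) : g(x_0)=y_0,\ g(x_N)=y_N\}$ with the sup norm, define
\begin{equation*}
(Tg)(L_i(x)) = c_i x + d_i\, g(x) + f_i, \qquad x\in[x_0,x_N],\ i=1,\ldots,N.
\end{equation*}
The interpolation constraints \eqref{eq:ci-fi-determined} make $Tg$ well defined at the shared nodes $x_i$ and keep it in $\mathcal{F}$. For $g_1,g_2\in\mathcal{F}$ and a point $L_i(x)$,
\begin{equation*}
\abs{(Tg_1-Tg_2)(L_i(x))} = \abs{d_i}\,\abs{g_1(x)-g_2(x)} \le d_{\max}\norm{g_1-g_2}_\infty .
\end{equation*}
Hence $T$ is a contraction with constant $d_{\max}$. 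Its fixed point is $f^*$, consistent with \cref{thm:barnsley-existence} and \eqref{eq:rb-equation}.

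Second, I identify the output of \cref{alg:fractal-bases} with $T^K f^{(0)}$, where $f^{(0)}$ is the piecewise linear interpolant (the $\mathbf{d}=\mathbf{0}$ hat-function expansion). By linearity in the ordinates, it suffices to do this for Kronecker data $y=e_m$, i.e.\ for the basis functions $\varphi_m$. One step of the loop localizes $u$ to interval $j$ with local coordinate $t=L_j^{-1}(u)$, rescaled to $[0,1]$. It then adds the linear skeleton $(1-t)e_j+t\,e_{j+1}$, subtracts $d_j$ times the linear interpolant of the endpoint values $y_0,y_N$ (the boundary corrections encode the $c_i,f_i$ terms as in \cref{rem:ci-zero}), and multiplies the remaining unknown by $d_j$. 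This is exactly one unfolding of $Tg(L_j(t)) = \ell_j(t) + d_j\bigl(g(t) - \ell_0(t)\bigr)$, where $\ell_j$ and $\ell_0$ are the linear pieces. Induction on $K$ shows that the accumulated quantities equal $T^K f^{(0)}$ with running product $r_K=\prod_m d_{\sigma_m}$, and the base case evaluates $f^{(0)}$ at the final point. I expect this bookkeeping to be the main obstacle: the algorithm's boundary corrections and its clamping $u\in(\varepsilon,1-\varepsilon)$ must be checked to reproduce $T$ exactly, including the index conventions at the nodes. I would verify it first for $K=1$ by direct comparison with the RB equation, then induct.

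Finally, the standard a priori estimate gives
\begin{equation*}
\norm{f^{(K)}-f^*}_\infty = \norm{T^K f^{(0)} - T^K f^*}_\infty \le d_{\max}^K \norm{f^{(0)}-f^*}_\infty .
\end{equation*}
Alternatively, the pointwise form $\abs{r_K}\,\abs{f^{(0)}-f^*}$ at the terminal point gives the same bound, since $\abs{r_K}\le d_{\max}^K$. Set $C=\norm{f^{(0)}-f^*}_\infty$, which depends only on the ordinates, the grid, and $\mathbf{d}$. For an explicit constant, the contraction property gives $C\le \norm{Tf^{(0)}-f^{(0)}}_\infty/(1-d_{\max})$, and the numerator is bounded by $2d_{\max}\max_i\abs{y_i}$. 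This yields $C\le 2d_{\max}\max_i\abs{y_i}/(1-d_{\max})$.
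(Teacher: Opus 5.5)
Your argument is correct, but it is not the route the paper takes for this proposition. The paper's proof is a tail-sum sketch. It bounds the running product by $\abs{r_K}\le d_{\max}^K$, asserts that the contributions from recursion depths $K,K+1,\ldots$ are each $O(d_{\max}^j)$, and concludes that the neglected part is at most $C\,d_{\max}^K/(1-d_{\max})$, with $C$ unspecified.

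You instead identify the algorithm's output with the Picard iterate $T^Kf^{(0)}$ and apply the a priori contraction estimate. This is the argument the paper uses only later, for \cref{thm:truncation}. Your one-step unfolding $Tg(L_j(t))=\ell_j(t)+d_j\bigl(g(t)-\ell_0(t)\bigr)$ does match the loop body. Paired with the ordinates, the two boundary subtractions are exactly $-r\,d_j\,\ell_0(t)$, and the base case contributes $r_K f^{(0)}(u_K)$. So the error is the single term $r_K\,(f^{(0)}-f^*)(u_K)$.

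Your route gives three things the paper's sketch lacks:
\begin{itemize}
\item a proof that \cref{alg:fractal-bases} computes the FIF at all, including how the boundary corrections encode $c_i,f_i$, which the paper only asserts in \cref{rem:ci-zero};
\item an exact error identity rather than a bound on an unspecified tail;
\item the explicit constant $\norm{f^{(K)}-f^*}_\infty\le 2d_{\max}^{K+1}\max_i\abs{y_i}/(1-d_{\max})$.
\end{itemize}
The paper's version is shorter and needs no function space. However, it silently assumes that the depth-wise expansion converges to $f^*$, which is the fixed-point fact you actually prove. In both versions the constant carries a factor $1/(1-d_{\max})$, so it depends on $\mathbf d$ as well as on the ordinates and the grid, as you note.

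One correction to your planned check: the clamp $u\in(\varepsilon,1-\varepsilon)$ will not reproduce $T$ exactly. It moves the evaluation point, so for $x$ within $\varepsilon(b-a)$ of an endpoint the algorithm returns $T^Kf^{(0)}(\tilde x)$ with $\tilde x\neq x$. The extra discrepancy $\abs{f^*(x)-f^*(\tilde x)}$ does not decay in $K$. The estimate should therefore be stated for the unclamped iteration, as the paper implicitly does, or restricted to the clamped interval. The node-index conventions, by contrast, are harmless, because $Tg$ is single-valued at each $x_i$.
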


\begin{proof}
At depth $K$ the running product satisfies $\abs{r_K} = \prod_{m=0}^{K-1} \abs{d_{\sigma_m}} \leq d_{\max}^K$. The remaining contribution to the FIF from recursion depths $K, K+1, \ldots$ is bounded by $C \sum_{j=K}^\infty d_{\max}^j = C \, d_{\max}^K / (1 - d_{\max})$, which is exponentially small in $K$.
\end{proof}

\paragraph{Differentiability.}
Each term in the truncated expansion is polynomial in $\{d_i, y_i\}$ (specifically, products and sums of the contraction parameters and ordinates with piecewise polynomial functions of $x$).
The entire forward pass is therefore differentiable with respect to all parameters and compatible with automatic differentiation through PyTorch~\cite{paszke2019pytorch}.

\paragraph{Contraction parameter reparameterization.}
To enforce $\abs{d_i} < 1$ while maintaining unconstrained optimization, we parameterize $d_i = d_{\max} \cdot \tanh(d_i^{(\mathrm{raw})})$ where $d_i^{(\mathrm{raw})} \in \R$ is the unconstrained learnable parameter and $d_{\max} = 0.99$.

\begin{figure}[!htbp]
\centering
\includegraphics[width=0.92\textwidth]{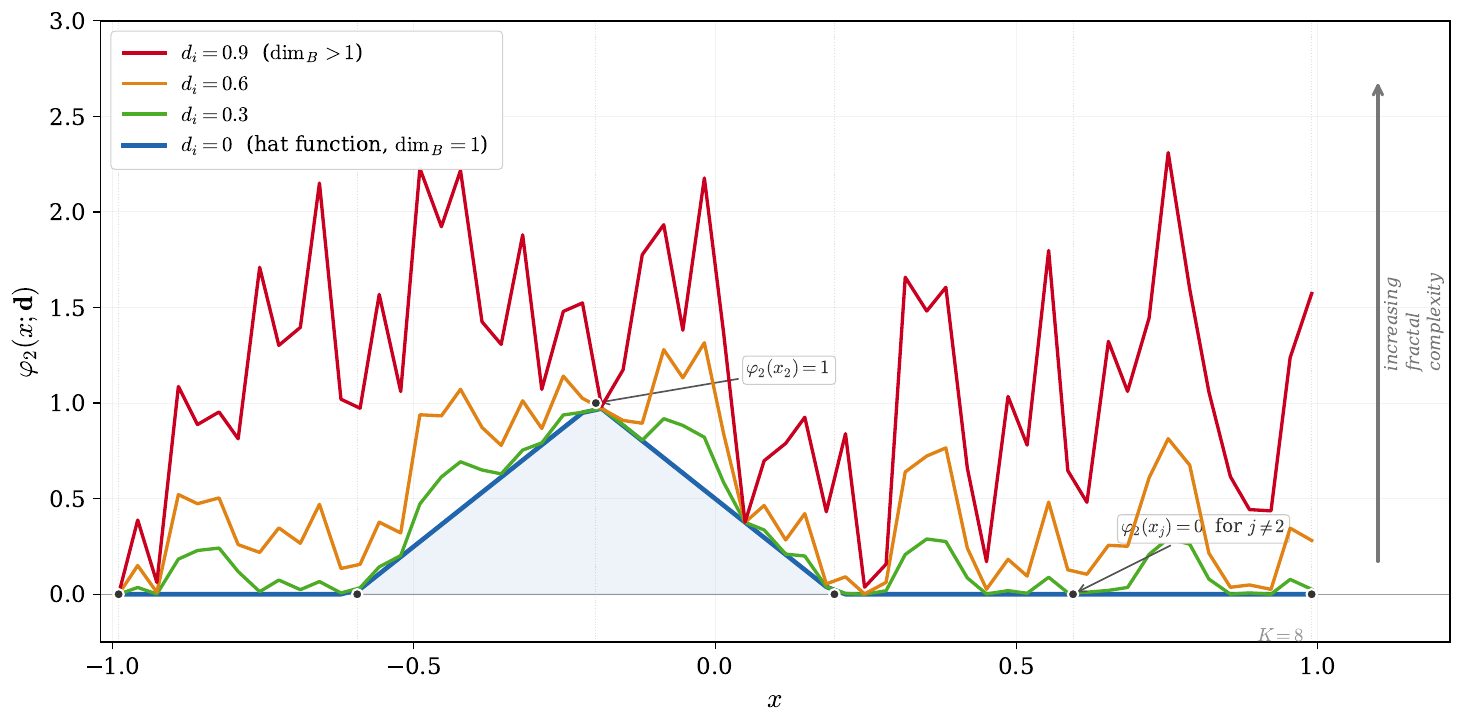}
\caption{Fractal basis function $\varphi_2(x; \mathbf{d})$ (the middle basis on a 5-interval grid) as the contraction parameters $d_i$ vary uniformly from 0 to 0.9. At $d_i = 0$, the basis is the standard piecewise linear hat function ($\dimB = 1$). As $d_i$ increases, the basis acquires increasingly fine-scale fractal structure ($\dimB > 1$) while maintaining the Kronecker property $\varphi_2(x_j) = \delta_{2j}$ at all grid points (black dots). Recursion depth $K = 8$.}
\label{fig:basis-morphing}
\end{figure}

\subsection{Pure FI-KAN (Barnsley Framework)}
\label{sec:arch:pure}

Pure FI-KAN replaces B-spline basis functions entirely with FIF bases, testing the regularity-matching hypothesis in its strongest form.

\begin{figure}[!htbp]
\centering
\includegraphics[width=\textwidth]{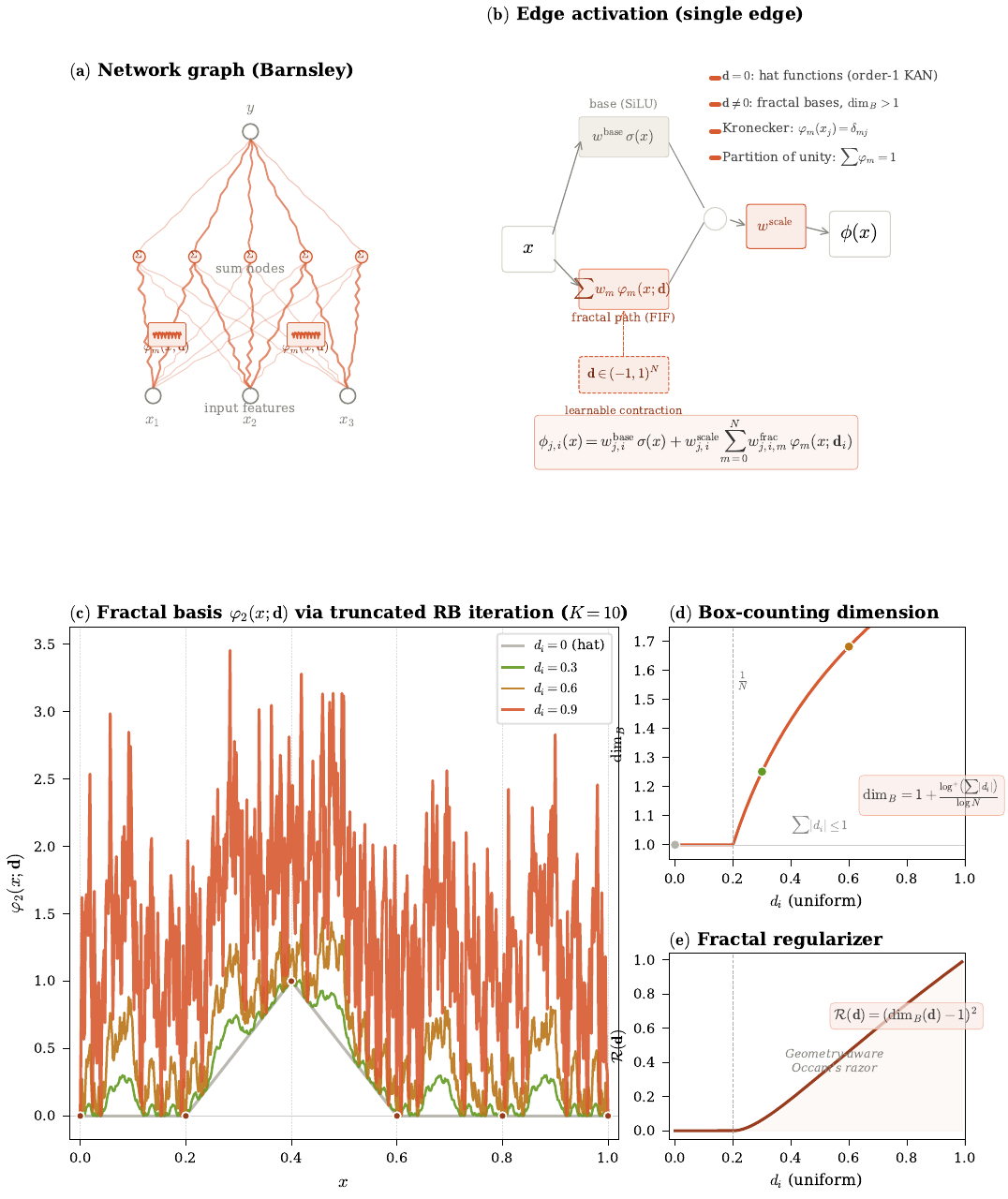}
\caption{Pure FI-KAN (Barnsley framework) architecture detail.
\textbf{(a)}~Network graph: all edges carry fractal interpolation function (FIF) bases
$\varphi_m(x;\mathbf{d})$ with learnable contraction parameters; nodes perform summation.
\textbf{(b)}~Edge activation: the input $x$ splits into a base SiLU path and a fractal path
$\sum w_m^{\mathrm{frac}} \varphi_m(x;\mathbf{d})$, where $\mathbf{d}\in(-1,1)^N$ controls the
fractal character of the basis.
When $\mathbf{d}=\mathbf{0}$, the FIF bases reduce to piecewise linear hat functions (order-1 KAN).
\textbf{(c)}~Fractal basis $\varphi_2(x;\mathbf{d})$ at four contraction values
$d_i \in \{0, 0.3, 0.6, 0.9\}$, computed via $K=10$ truncated Read--Bajraktarevi\'c iterations.
The Kronecker property $\varphi_2(x_j) = \delta_{2j}$ is preserved at all $d$ values (dots).
\textbf{(d)}~Box-counting dimension $\dim_B(\mathbf{d})$ as a function of uniform $d_i$,
with the transition surface $\sum|d_i| = 1$ marked.
\textbf{(e)}~Fractal dimension regularizer $\mathcal{R}(\mathbf{d}) = (\dim_B(\mathbf{d}) - 1)^2$:
a geometry-aware Occam's razor that penalizes unnecessary fractal complexity.}
\label{fig:pure-fikan-detail}
\end{figure}

\begin{definition}[Pure FI-KAN Edge Function]
Each edge activation in a Pure FI-KAN layer computes
\begin{equation}
  \phi_{j,i}(x) = w_{j,i}^{(\mathrm{base})} \, \sigma(x) + w_{j,i}^{(\mathrm{scale})} \sum_{m=0}^{N} w_{j,i,m}^{(\mathrm{frac})} \, \varphi_m(x; \mathbf{d}_i),
  \label{eq:pure-edge}
\end{equation}
where $\sigma$ is SiLU, $\{\varphi_m(\cdot; \mathbf{d}_i)\}_{m=0}^N$ are the FIF basis functions (\cref{alg:fractal-bases}), $w^{(\mathrm{frac})}_{j,i,m}$ are learnable interpolation ordinates (analogous to B-spline coefficients), and $\mathbf{d}_i \in (-1,1)^N$ are the learnable contraction parameters for input feature $i$.
\end{definition}

\paragraph{Learnable parameters per edge.}
\begin{itemize}[leftmargin=2em,itemsep=2pt]
  \item Interpolation ordinates: $\{w^{(\mathrm{frac})}_{j,i,m}\}_{m=0}^N$ ($N+1$ parameters, playing the role of B-spline coefficients).
  \item Contraction factors: $\{d_{i,m}\}_{m=1}^N$ ($N$ parameters, shared across output features, controlling fractal character).
  \item Base weight: $w^{(\mathrm{base})}_{j,i}$ and scale: $w^{(\mathrm{scale})}_{j,i}$ (2 parameters, as in standard KAN).
\end{itemize}

\paragraph{Properties.}
\begin{enumerate}[leftmargin=2em,itemsep=2pt]
  \item When $\mathbf{d} = \mathbf{0}$: all FIF bases reduce to piecewise linear hat functions. The edge functions become piecewise linear (order-1 spline) KAN edges.
  \item When $\mathbf{d} \neq \mathbf{0}$: the basis functions acquire fractal structure. The box-counting dimension of each edge activation is a differentiable function of $\mathbf{d}$ via~\eqref{eq:fractal-dim}.
  \item The network learns \emph{both} the interpolation ordinates (what values to hit at grid points) \emph{and} the inter-grid-point geometry (how to interpolate between grid points, with learnable roughness).
\end{enumerate}

\paragraph{Inductive bias.}
Pure FI-KAN is biased toward targets with non-trivial fractal structure.
For smooth targets, it must learn $\mathbf{d} \approx \mathbf{0}$ to recover piecewise linear (hat function) bases, which form a weaker approximation class than B-splines of order $k \geq 2$ since they cannot reproduce quadratic or higher-degree polynomials.
This limitation is by design: it enables testing whether fractal bases are necessary and sufficient for rough targets without confounding from smooth-target performance.

\subsection{Hybrid FI-KAN (Navascu\'es Framework)}
\label{sec:arch:hybrid}

Hybrid FI-KAN instantiates Navascu\'es's $\alpha$-fractal decomposition (\cref{sec:prelim:alpha}) within the KAN architecture: the B-spline path serves as the classical approximant $b$ and a parallel FIF path provides the fractal correction $h$.

\begin{figure}[!htbp]
\centering
\includegraphics[width=\textwidth]{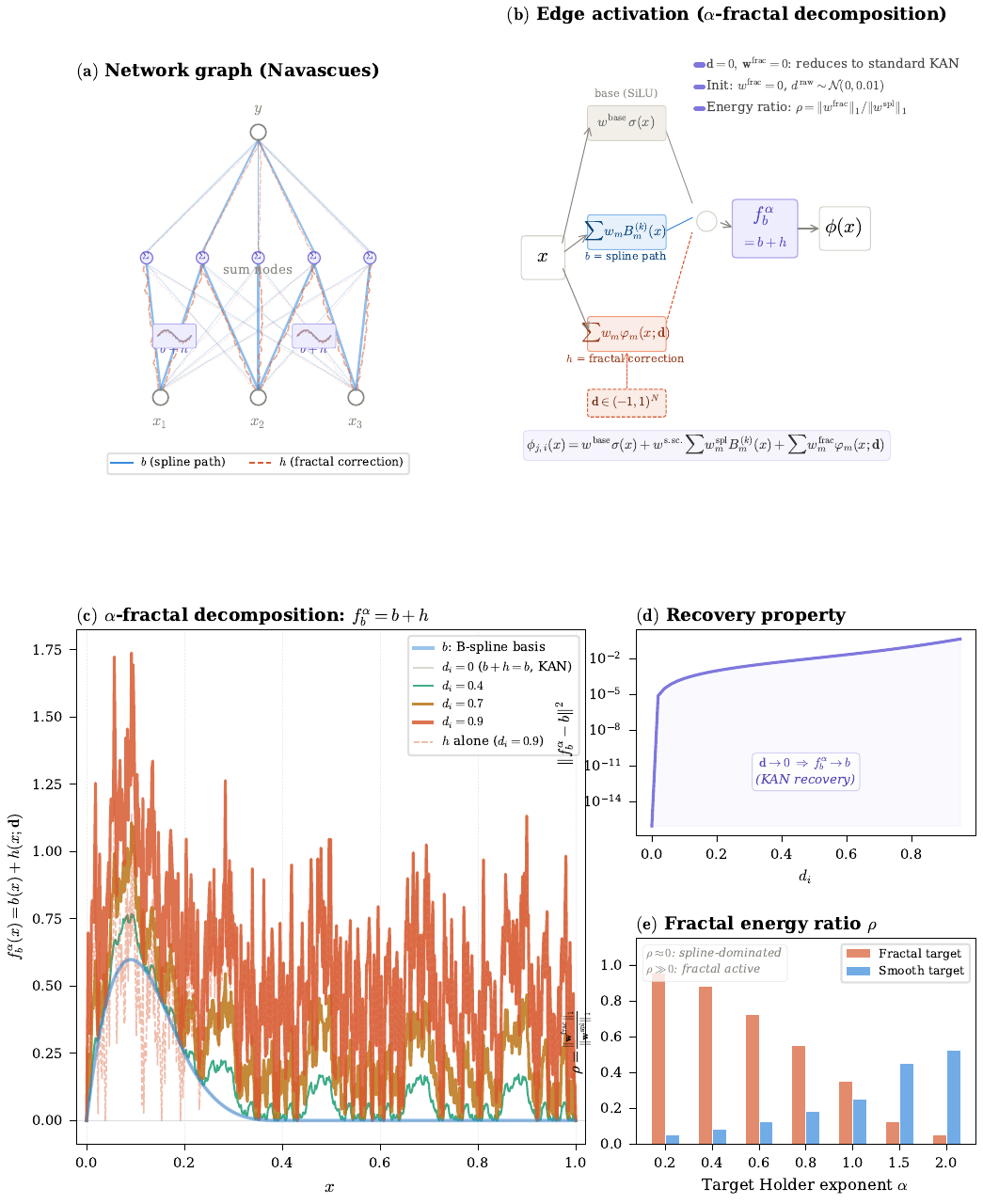}
\caption{Hybrid FI-KAN (Navascu\'es framework) architecture detail.
\textbf{(a)}~Network graph: edges carry dual paths---solid blue for the B-spline path $b$
(classical approximant) and dashed coral for the fractal correction $h$---implementing
the $\alpha$-fractal decomposition $f_b^\alpha = b + h$.
\textbf{(b)}~Edge activation: the input splits into three paths (base SiLU, B-spline, FIF),
combined into the $\alpha$-fractal output $f_b^\alpha = b + h$.
When $\mathbf{d}=\mathbf{0}$ and $\mathbf{w}^{\mathrm{frac}}=\mathbf{0}$,
the architecture reduces exactly to standard KAN.
Initialization sets $w^{\mathrm{frac}}=0$ and $d^{\mathrm{raw}}\sim\mathcal{N}(0,0.01)$,
so the network starts as a KAN and develops fractal structure only where the data demands it.
\textbf{(c)}~$\alpha$-fractal decomposition showing the B-spline basis $b$ (blue),
hybrid outputs $b+h$ at increasing $d_i$, and the isolated fractal correction $h$ (dashed).
\textbf{(d)}~Recovery property: $\|\mathbf{d}\|\to 0$ implies $f_b^\alpha \to b$ (KAN recovery).
\textbf{(e)}~Fractal energy ratio $\rho = \|w^{\mathrm{frac}}\|_1 / \|w^{\mathrm{spl}}\|_1$
across target H\"older exponents: fractal path is active ($\rho \gg 0$) on rough targets
and dormant ($\rho \approx 0$) on smooth targets.}
\label{fig:hybrid-fikan-detail}
\end{figure}

\begin{definition}[Hybrid FI-KAN Edge Function]
Each edge activation computes
\begin{equation}
  \phi_{j,i}(x) = \underbrace{w_{j,i}^{(\mathrm{base})} \sigma(x)}_{\text{base}} + \underbrace{w_{j,i}^{(\mathrm{s.sc.})} \sum_m w_{j,i,m}^{(\mathrm{spl})} B_m^{(k)}(x)}_{\text{spline path}~(= b)} + \underbrace{\sum_{m=0}^{N} w_{j,i,m}^{(\mathrm{frac})} \, \varphi_m(x; \mathbf{d}_i)}_{\text{fractal correction}~(\approx h)}.
  \label{eq:hybrid-edge}
\end{equation}
\end{definition}

This is the $\alpha$-fractal decomposition~\eqref{eq:alpha-decomposition} realized as a neural architecture:
\begin{equation}
  f_b^\alpha(x) \;=\; \underbrace{b(x)}_{\text{spline path}} \;+\; \underbrace{h(x; \mathbf{d})}_{\text{fractal correction path}}.
\end{equation}

\paragraph{Connection to Navascu\'es's framework.}
\begin{itemize}[leftmargin=2em,itemsep=2pt]
  \item The B-spline path $b(x) = \sum_m w_m^{(\mathrm{spl})} B_m^{(k)}(x)$ is the classical approximant.
  \item The fractal path $h(x; \mathbf{d}) = \sum_m w_m^{(\mathrm{frac})} \varphi_m(x; \mathbf{d})$ is the fractal perturbation.
  \item When $\mathbf{d} = \mathbf{0}$ and $\mathbf{w}^{(\mathrm{frac})} = \mathbf{0}$: the architecture reduces exactly to standard KAN, recovering Navascu\'es's property that $\alpha = 0$ gives back the base function.
  \item The implementation generalizes the strict Navascu\'es construction by allowing the fractal correction to have \emph{independent} interpolation ordinates, rather than constraining $h(x_j) = 0$.
    This gives the network more freedom to learn the optimal smooth-rough decomposition from data.
\end{itemize}

\paragraph{Initialization.}
The fractal weights $w^{(\mathrm{frac})}$ are initialized to zero and the contraction parameters $d^{(\mathrm{raw})}$ are initialized near zero ($d^{(\mathrm{raw})} \sim \mathcal{N}(0, 0.01)$).
The network therefore starts as a standard KAN and develops fractal structure only where the data demands it.
This provides a strong inductive bias toward the simplest explanation: use smooth splines unless fractal correction demonstrably reduces the loss.

\paragraph{Fractal energy ratio.}
The diagnostic quantity
\begin{equation}
  \rho = \frac{\norm{\mathbf{w}^{(\mathrm{frac})}}_1}{\norm{\mathbf{w}^{(\mathrm{spl})}}_1 + \varepsilon}
  \label{eq:fractal-energy}
\end{equation}
measures the magnitude of the fractal correction relative to the spline path.
When $\rho \approx 0$, the fractal path is inactive (spline-dominated behavior).
When $\rho > 0$, the network has learned to apply non-trivial fractal correction.

\subsection{Fractal Dimension Regularization}
\label{sec:arch:reg}

Both variants support a geometry-aware regularizer derived from IFS theory.

\begin{definition}[Fractal Dimension Regularizer]
\label{def:reg}
For an edge with contraction parameters $\mathbf{d} = (d_1, \ldots, d_N)$, define
\begin{equation}
  R_{\mathrm{edge}}(\mathbf{d}) = \bigl(\dimB(\mathbf{d}) - 1\bigr)^2, \qquad \dimB(\mathbf{d}) = 1 + \frac{\log^+\!\bigl(\sum_{i=1}^N \abs{d_i}\bigr)}{\log N},
  \label{eq:reg-edge}
\end{equation}
where $\log^+(x) = \max(\log x, 0)$.
The total fractal regularization loss is
\begin{equation}
  \cR_{\mathrm{fractal}}(\theta) = \sum_{\ell} \sum_{i} R_{\mathrm{edge}}(\mathbf{d}_i^{(\ell)}),
  \label{eq:reg-total}
\end{equation}
summed over all layers $\ell$ and input features $i$.
\end{definition}

\paragraph{Properties.}
\begin{enumerate}[leftmargin=2em,itemsep=2pt]
  \item \textbf{Geometry-aware Occam's razor.}
    The regularizer penalizes fractal dimension exceeding $1$, i.e., penalizes fractal structure in the basis functions. It says: prefer smooth bases unless the data provides sufficient evidence for fractal structure.
  \item \textbf{Differentiability.}
    The regularizer is differentiable with respect to $\mathbf{d}$ through the $\tanh$ reparameterization.
  \item \textbf{Interpretability.}
    The learned fractal dimension $\dimB(\mathbf{d})$ is a meaningful diagnostic: it tracks the geometric regularity of the target function (see \cref{sec:exp:diagnostic}).
  \item \textbf{Distinct from weight regularization.}
    This is \emph{not} $L_1$/$L_2$ regularization on parameter magnitude. It controls the \emph{geometric complexity} of the basis functions, which is a fundamentally different notion of model complexity.
\end{enumerate}

The total training loss is:
\begin{equation}
  \cL = \cL_{\mathrm{data}} + \lambda_{\mathrm{act}} \, \cR_{\mathrm{act}} + \lambda_{\mathrm{ent}} \, \cR_{\mathrm{ent}} + \lambda_{\mathrm{frac}} \, \cR_{\mathrm{fractal}},
  \label{eq:total-loss}
\end{equation}
where $\cR_{\mathrm{act}}$ and $\cR_{\mathrm{ent}}$ are the activation and entropy regularizers from KAN~\cite{liu2024kan}.

\subsection{Computational Considerations}
\label{sec:arch:compute}

\paragraph{Parameter count.}
For a layer with $n_{\mathrm{in}}$ inputs, $n_{\mathrm{out}}$ outputs, and grid size $G$:
\begin{itemize}[leftmargin=2em,itemsep=2pt]
  \item KAN: $n_{\mathrm{out}} \times n_{\mathrm{in}} \times (G + k)$ spline weights + $n_{\mathrm{out}} \times n_{\mathrm{in}}$ base and scale weights.
  \item Pure FI-KAN: $n_{\mathrm{out}} \times n_{\mathrm{in}} \times (G + 1)$ fractal weights + $n_{\mathrm{in}} \times G$ contraction parameters + $n_{\mathrm{out}} \times n_{\mathrm{in}}$ base and scale weights.
  \item Hybrid FI-KAN: KAN parameters + $n_{\mathrm{out}} \times n_{\mathrm{in}} \times (G+1)$ fractal weights + $n_{\mathrm{in}} \times G$ contraction parameters.
\end{itemize}

\paragraph{Computational cost.}
The FIF basis evaluation (\cref{alg:fractal-bases}) requires $K$ sequential iterations of interval lookup and accumulation.
Each iteration involves $O(B \cdot D)$ work (where $B$ is batch size and $D = n_{\mathrm{in}}$), giving total cost $O(K \cdot B \cdot D \cdot N)$.
The sequential nature of the recursion limits GPU parallelism compared to B-spline evaluation, which is fully vectorized.
In practice, the Hybrid variant with $K = 2$ (recommended) adds approximately $2.5\times$ overhead per forward pass relative to standard KAN.

\section{Theoretical Analysis}
\label{sec:theory}

This section develops the approximation-theoretic foundations of FI-KAN.
We establish structural properties of the FIF basis system (\cref{sec:theory:basis}), prove convergence of the truncated evaluation (\cref{sec:theory:convergence}), characterize the variation structure that governs smooth-target performance (\cref{sec:theory:variation}), provide approximation rates for H\"older-continuous targets (\cref{sec:theory:holder}), analyze the Hybrid architecture as a smooth-rough decomposition (\cref{sec:theory:hybrid}), and establish properties of the fractal dimension regularizer (\cref{sec:theory:reg}).

Throughout, we work on a uniform grid $x_i = a + i h$ with $h = (b-a)/N$ and use the $c_i = 0$ specialization of Barnsley's IFS.

\subsection{Structure of the FIF Basis System}
\label{sec:theory:basis}

We first establish that the FIF basis functions form a well-defined system with properties analogous to classical interpolation bases.

\begin{theorem}[FIF Basis Decomposition]
\label{thm:basis-decomposition}
Let $\mathbf{d} = (d_1, \ldots, d_N) \in (-1,1)^N$.
There exist unique continuous functions $\varphi_0, \varphi_1, \ldots, \varphi_N \colon [a,b] \to \R$ such that:
\begin{enumerate}[label=(\roman*),leftmargin=2em]
  \item \textbf{Kronecker property:} $\varphi_j(x_i; \mathbf{d}) = \delta_{ij}$ for all $i,j \in \{0, \ldots, N\}$.
  \item \textbf{Representation:} For any interpolation data $\{(x_i, y_i)\}_{i=0}^N$, the FIF $f^*$ with vertical scaling $\mathbf{d}$ and $c_i = 0$ satisfies
    \begin{equation}
      f^*(x) = \sum_{j=0}^N y_j \, \varphi_j(x; \mathbf{d}).
      \label{eq:fif-basis-rep}
    \end{equation}
  \item \textbf{Partition of unity:} $\sum_{j=0}^N \varphi_j(x; \mathbf{d}) = 1$ for all $x \in [a,b]$.
  \item \textbf{Degeneration:} When $\mathbf{d} = \mathbf{0}$, $\varphi_j(\cdot; \mathbf{0})$ is the standard piecewise linear hat function centered at $x_j$.
\end{enumerate}
\end{theorem}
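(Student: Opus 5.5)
The plan is to build everything from the Read--Bajraktarevi\'c operator and its contractivity, which is the engine behind \cref{thm:barnsley-existence}. First I fix what ``FIF with vertical scaling $\mathbf{d}$ and $c_i=0$'' means for general data, in the sense of \cref{rem:ci-zero}. Let $\ell_{\mathbf{y}}$ be the piecewise linear interpolant of $\{(x_i,y_i)\}$. On the closed affine subspace
\[
C_{\mathbf{y}}=\{g\in C([a,b]) : g(a)=y_0,\ g(b)=y_N\},
\]
define
\[
(Tg)(L_i(x)) = \ell_{\mathbf{y}}(L_i(x)) + d_i\bigl(g(x)-\ell_{\mathbf{y}}(x)\bigr), \quad i=1,\ldots,N.
\]
This is the RB equation~\eqref{eq:rb-equation} with the affine part $c_i x+f_i$ absorbed into the linear skeleton.

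The first step is to check that $T$ maps $C_{\mathbf{y}}$ into continuous functions interpolating the data. At $x=x_0$ the definition gives $y_{i-1}$, and at $x=x_N$ it gives $y_i$, because $g-\ell_{\mathbf{y}}$ vanishes at both endpoints. So the pieces glue continuously at the interior nodes. Since $\|Tg-Tg'\|_\infty\le \max_i|d_i|\,\|g-g'\|_\infty$, the Banach fixed point theorem gives a unique fixed point $f^*_{\mathbf{y}}$. It interpolates every $y_i$: the endpoints are in the space, and each interior node $x_i=L_i(x_N)=L_{i+1}(x_0)$ has already been checked. Its graph is the attractor, which is \cref{thm:barnsley-existence}.

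The second step, and the structural heart of the argument, is linearity in $\mathbf{y}$. The map $\mathbf{y}\mapsto\ell_{\mathbf{y}}$ is linear, and $T$ depends affinely on $(g,\mathbf{y})$ jointly, with linear part $(g,\mathbf{y})\mapsto$ the formula above. So if $f^*_{\mathbf{y}}$ and $f^*_{\mathbf{y}'}$ are fixed points, then $\lambda f^*_{\mathbf{y}}+\mu f^*_{\mathbf{y}'}$ is a fixed point of the operator for $\lambda\mathbf{y}+\mu\mathbf{y}'$. By uniqueness, $\mathbf{y}\mapsto f^*_{\mathbf{y}}$ is linear from $\R^{N+1}$ to $C([a,b])$. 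Now define $\varphi_j:=f^*_{e_j}$, where $e_j$ is the $j$-th standard basis vector.
\begin{enumerate}[label=(\roman*),leftmargin=2em]
  \item The Kronecker property follows from the interpolation property.
  \item The representation~\eqref{eq:fif-basis-rep} follows from $\mathbf{y}=\sum_j y_j e_j$ and linearity.
\end{enumerate}
Uniqueness of the family follows from the same uniqueness. Any family satisfying (ii) must have $\varphi_j=f^*_{e_j}$, obtained by taking $\mathbf{y}=e_j$.

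For (iii), I take $\mathbf{y}=\mathbf{1}$. Then $\ell_{\mathbf{y}}\equiv1$, and the constant function $1$ satisfies $T1=1$, so $f^*_{\mathbf{1}}\equiv1$. By (ii), $\sum_j\varphi_j\equiv1$. For (iv), with $\mathbf{d}=\mathbf{0}$ the operator is $Tg=\ell_{\mathbf{y}}$ regardless of $g$, so $f^*_{\mathbf{y}}=\ell_{\mathbf{y}}$. In particular $\varphi_j=\ell_{e_j}$, which is the hat function at $x_j$.

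The main obstacle is the first step: making precise which IFS is meant, given that literal $c_i=0$ is generally incompatible with Kronecker data (\cref{rem:ci-zero}). I will state explicitly that the $c_i$ are the ones determined by~\eqref{eq:ci-fi-determined}. I will then verify that this gives exactly the operator $T$ above: $c_i x+f_i$ equals $\ell_{\mathbf{y}}(L_i(x))-d_i\,\ell_{\mathbf{y}}$ evaluated through its affine chord from $(x_0,y_0)$ to $(x_N,y_N)$. Strictly, $T$ should use that chord $\chi_{\mathbf{y}}$ rather than $\ell_{\mathbf{y}}$, so the operator to use is
\[
(Tg)(L_i(x))=\chi_{\mathbf{y}}^{(i)}(L_i(x))+d_i\bigl(g(x)-\chi_{\mathbf{y}}(x)\bigr),
\]
where $\chi_{\mathbf{y}}^{(i)}$ is the chord on $[x_{i-1},x_i]$, which coincides with $\ell_{\mathbf{y}}$ there. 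Here $\chi_{\mathbf{y}}$ is still linear in $\mathbf{y}$, so the linearity, partition-of-unity and degeneration arguments go through unchanged. This operator also matches the boundary corrections in \cref{alg:fractal-bases}.
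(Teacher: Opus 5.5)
Your proof is correct and follows the same route as the paper: you define $\varphi_j$ as the FIF of the Kronecker data, get (ii) from linearity of the fixed point in $\mathbf{y}$ plus uniqueness, get (iii) because the constant function $1$ is a fixed point, and get (iv) because the operator collapses to the piecewise linear interpolant at $\mathbf{d}=\mathbf{0}$. Your explicit chord form of the operator, with $c_i$ fixed by the data rather than set literally to zero, and your separate argument for uniqueness of the family, cover points the paper's proof passes over: with literal $c_i=0$ and $\mathbf{d}=\mathbf{0}$, its RB equation would only produce piecewise constants in (iv).
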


\begin{proof}
(i) and (ii): For each $j \in \{0, \ldots, N\}$, define $\varphi_j$ as the FIF through the data $(x_i, \delta_{ij})_{i=0}^N$ with scaling $\mathbf{d}$.
By \cref{thm:barnsley-existence}, each $\varphi_j$ exists, is unique, and is continuous.
The Kronecker property holds by construction.
Since the FIF depends linearly on the interpolation ordinates (the RB equation~\eqref{eq:rb-simple} is linear in $f^*$, and the ordinates enter through $f_i$ which depends linearly on $y_{i-1}, y_i$), the superposition~\eqref{eq:fif-basis-rep} satisfies the same RB equation as $f^*$ and agrees with $f^*$ at all interpolation points.
By uniqueness of the attractor, $f^* = \sum_j y_j \varphi_j$.

(iii): The constant function $g(x) = 1$ interpolates the data $(x_i, 1)_{i=0}^N$.
With $c_i = 0$, the RB equation~\eqref{eq:rb-simple} for $g$ reads $g(L_i(x)) = d_i g(x) + f_i$.
Setting $g \equiv 1$: $1 = d_i + f_i$, which determines $f_i = 1 - d_i$.
One verifies that $g \equiv 1$ is indeed a fixed point of the RB operator with these parameters.
By the representation (ii), $1 = g(x) = \sum_j 1 \cdot \varphi_j(x; \mathbf{d}) = \sum_j \varphi_j(x; \mathbf{d})$.

(iv): When $\mathbf{d} = \mathbf{0}$, the RB equation becomes $f^*(L_i(x)) = f_i$, and $f_i$ is determined by the interpolation constraints to give the piecewise linear interpolant.
The basis functions of piecewise linear interpolation are the standard hat functions.
\end{proof}

\begin{remark}
The partition of unity property (iii) is important for numerical stability: the FIF basis values sum to 1 regardless of the contraction parameters, preventing unbounded growth during the forward pass.
Note, however, that unlike B-spline bases, the FIF basis functions $\varphi_j(\cdot; \mathbf{d})$ are \emph{not} non-negative in general when $\mathbf{d} \neq \mathbf{0}$.
\end{remark}

The following lemma establishes continuity of the basis system with respect to the contraction parameters, which is essential for gradient-based optimization.

\begin{lemma}[Continuity in the Contraction Parameters]
\label{lem:d-continuity}
The map $\mathbf{d} \mapsto \varphi_j(\cdot; \mathbf{d})$ is continuous from $(-1,1)^N$ to $C([a,b])$ equipped with the supremum norm.
More precisely, for $\mathbf{d}, \mathbf{d}' \in [-\delta, \delta]^N$ with $\delta < 1$:
\begin{equation}
  \norm{\varphi_j(\cdot; \mathbf{d}) - \varphi_j(\cdot; \mathbf{d}')}_\infty \leq \frac{C_j}{1 - \delta} \, \norm{\mathbf{d} - \mathbf{d}'}_\infty,
  \label{eq:d-lipschitz}
\end{equation}
where $C_j$ depends only on the grid and the index $j$.
\end{lemma}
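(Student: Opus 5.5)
The plan is to realize $\varphi_j(\cdot;\mathbf{d})$ as the fixed point of the Read--Bajraktarevi\'c operator and then run the standard perturbation argument for fixed points of contractions. For fixed $j$ and $\mathbf{d}$, define on $C([a,b])$ (or on the closed affine subspace of functions taking the Kronecker values $\delta_{ij}$ at the endpoints)
\begin{equation*}
  (T_{\mathbf{d}} g)(L_i(x)) = d_i\, g(x) + q_{i}(x;\mathbf{d}), \qquad x\in[a,b],\ i=1,\dots,N,
\end{equation*}
where $q_i(\cdot;\mathbf{d})$ is the affine ``skeleton'' $c_i x + f_i$ fixed by the Kronecker data $(x_i,\delta_{ij})$ through \eqref{eq:ci-fi-determined}. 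Following \cref{rem:ci-zero}, I keep the $c_i$ terms, since they are exactly what \cref{alg:fractal-bases} encodes. By \cref{thm:barnsley-existence} and \cref{thm:basis-decomposition}, $\varphi_j(\cdot;\mathbf{d})$ is the unique fixed point. The join conditions make $T_{\mathbf{d}}g$ well defined and continuous, and $\norm{T_{\mathbf{d}}g - T_{\mathbf{d}}\tilde g}_\infty \le \max_i\abs{d_i}\,\norm{g-\tilde g}_\infty \le \delta\norm{g-\tilde g}_\infty$ for $\mathbf{d}\in[-\delta,\delta]^N$.

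The key steps are as follows.

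\textbf{Step 1.} Write $\varphi=\varphi_j(\cdot;\mathbf{d})$ and $\varphi'=\varphi_j(\cdot;\mathbf{d}')$. Then
\begin{equation*}
  \norm{\varphi-\varphi'}_\infty = \norm{T_{\mathbf{d}}\varphi - T_{\mathbf{d}'}\varphi'}_\infty \le \norm{T_{\mathbf{d}}\varphi - T_{\mathbf{d}}\varphi'}_\infty + \norm{T_{\mathbf{d}}\varphi' - T_{\mathbf{d}'}\varphi'}_\infty,
\end{equation*}
so $\norm{\varphi-\varphi'}_\infty \le \delta\norm{\varphi-\varphi'}_\infty + \norm{(T_{\mathbf{d}}-T_{\mathbf{d}'})\varphi'}_\infty$. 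Rearranging gives the factor $1/(1-\delta)$.

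\textbf{Step 2.} Bound the operator difference pointwise on each piece $L_i([a,b])$:
\begin{equation*}
  \abs{(d_i-d_i')\varphi'(x) + q_i(x;\mathbf{d})-q_i(x;\mathbf{d}')}.
\end{equation*}
Since $c_i$ and $f_i$ are affine in $d_i$, with coefficients built from the Kronecker data (namely $y_N-y_0$, $y_0$, and $x_0$, all bounded in terms of $a,b,j$), the skeleton difference is at most $A_j\abs{d_i-d_i'}$, where $A_j$ depends only on the grid and $j$.

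\textbf{Step 3.} Obtain a uniform bound $\norm{\varphi'}_\infty \le M$. This is the main obstacle, because the natural estimate is $\norm{\varphi'}_\infty \le \norm{q}_\infty/(1-\delta)$. Plugged in naively, it would give $(1-\delta)^{-2}$ rather than the claimed $(1-\delta)^{-1}$.

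I would first try to avoid this by exploiting the specific structure. For interior $j$ the data satisfy $y_0=y_N=0$, so $f_i$ does not depend on $d$ at all, and the $d$-dependence enters only through $d_i\varphi'$ and $d_i(y_N-y_0)$. For the endpoint bases, I would use the partition of unity (\cref{thm:basis-decomposition}(iii)) and the symmetry $\varphi_0\leftrightarrow\varphi_N$ to reduce to one case.

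If a $\delta$-independent bound on $\norm{\varphi'}_\infty$ cannot be extracted, I would absorb the extra factor by allowing $C_j$ to depend on an a priori bound. For instance, one can compare $\varphi'$ to the $\mathbf{d}'=\mathbf{0}$ hat function via the same Step 1 argument. Alternatively, one can note that the statement's constant is only claimed up to the grid and $j$, and state honestly that the clean form holds with $C_j$ replaced by $C_j(1+\norm{\varphi_j(\cdot;\mathbf{d}')}_\infty)$.

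\textbf{Step 4.} Combining Steps 1--3 gives $\norm{\varphi-\varphi'}_\infty \le (1-\delta)^{-1}(M+A_j)\norm{\mathbf{d}-\mathbf{d}'}_\infty$. Continuity on all of $(-1,1)^N$ follows because every point has a neighbourhood contained in some $[-\delta,\delta]^N$ with $\delta<1$.
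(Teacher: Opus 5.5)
Your Steps 1--2 are the paper's argument, and your Step 3 fallback is how the paper finishes: it sets $C_j=\norm{\varphi_j(\cdot;\mathbf{d}')}_\infty$. Your Step 2 is actually more careful than the paper. The paper asserts $(\cT_{\mathbf{d}}-\cT_{\mathbf{d}'})g=(d_i-d_i')\,g\circ L_i^{-1}$ on $[x_{i-1},x_i]$. With the Kronecker skeleton one has $c_ix+f_i=p_j(L_i(x))-d_i\ell_j(x)$, where $p_j$ is the hat function and $\ell_j$ the linear interpolant of $(x_0,\delta_{0j}),(x_N,\delta_{Nj})$. The difference is therefore $(d_i-d_i')(g-\ell_j)\circ L_i^{-1}$, which matches the paper only for interior $j$.

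The obstacle you isolate in Step 3 is real and cannot be removed. There is no bound $M$ on $\norm{\varphi_j(\cdot;\mathbf{d}')}_\infty$ that is uniform in $\delta$, and \eqref{eq:d-lipschitz} with $C_j$ depending only on the grid and $j$ is false. The paper's proof only appears to establish it because its $C_j$ depends on $\mathbf{d}'$.

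Here is a counterexample. Take $[a,b]=[0,1]$, $N\ge 3$, $j=1$. Since $y_0=y_N=0$, the RB equation reads $\varphi_1(L_i(x))=d_i\varphi_1(x)+p_1(L_i(x))$. The fixed point of $L_2(x)=(x+1)/N$ is $u^*=1/(N-1)\in(x_1,x_2)$. Evaluating at $x=u^*$, $i=2$ gives
\begin{equation*}
  \varphi_1(u^*;\mathbf{d})=\frac{p_1(u^*)}{1-d_2},\qquad p_1(u^*)=\frac{N-2}{N-1}>0 .
\end{equation*}
Take $\mathbf{d}=\delta\mathbf{e}_2$ and $\mathbf{d}'=\delta'\mathbf{e}_2$ with $0\le\delta'<\delta<1$. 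Then $\norm{\varphi_1(\cdot;\mathbf{d})-\varphi_1(\cdot;\mathbf{d}')}_\infty\ge p_1(u^*)(\delta-\delta')/\bigl((1-\delta)(1-\delta')\bigr)$. So \eqref{eq:d-lipschitz} would force $C_1\ge p_1(u^*)/(1-\delta')$ for every $\delta'<1$. The same point shows that $\norm{\varphi_1(\cdot;\mathbf{d}')}_\infty$ genuinely grows like $(1-\delta)^{-1}$. Neither ``exploiting the structure'' nor comparing with the hat function can give a $\delta$-free $M$; the comparison just reproduces $M=O((1-\delta)^{-1})$.

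Your Steps 1--2 do prove the correct statement. Set $\psi=\varphi_j(\cdot;\mathbf{d}')-\ell_j$. It satisfies $\psi\circ L_i=d_i'\psi+(p_j-\ell_j)\circ L_i$ with $\abs{p_j-\ell_j}\le 1$, hence $\norm{\psi}_\infty\le(1-\delta)^{-1}$. Therefore
\begin{equation*}
  \norm{\varphi_j(\cdot;\mathbf{d})-\varphi_j(\cdot;\mathbf{d}')}_\infty\le(1-\delta)^{-2}\norm{\mathbf{d}-\mathbf{d}'}_\infty,
\end{equation*}
and the exponent is sharp by the example above. Alternatively, keep the factor $(1-\delta)^{-1}$ with the $\mathbf{d}'$-dependent constant $\norm{\varphi_j(\cdot;\mathbf{d}')-\ell_j}_\infty$. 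Either version yields the continuity claim, and your Step 4 localization is fine. Write the proof with one of these constants rather than asserting Step 4 with a grid-only $M$.
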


\begin{proof}
Let $\cT_{\mathbf{d}}$ denote the Read--Bajraktarevi\'c operator parameterized by $\mathbf{d}$.
For $g, h \in C([a,b])$ and any $\mathbf{d}$ with $\norm{\mathbf{d}}_\infty \leq \delta < 1$, the operator $\cT_{\mathbf{d}}$ is a contraction with Lipschitz constant $\delta$:
\begin{equation}
  \norm{\cT_{\mathbf{d}}(g) - \cT_{\mathbf{d}}(h)}_\infty \leq \delta \, \norm{g - h}_\infty.
\end{equation}
Now write $\varphi_j(\cdot; \mathbf{d}) = \cT_{\mathbf{d}}(\varphi_j(\cdot; \mathbf{d}))$ (fixed point) and similarly for $\mathbf{d}'$.
Then:
\begin{align}
  \norm{\varphi_j(\cdot; \mathbf{d}) - \varphi_j(\cdot; \mathbf{d}')}_\infty
  &= \norm{\cT_{\mathbf{d}}(\varphi_j(\cdot; \mathbf{d})) - \cT_{\mathbf{d}'}(\varphi_j(\cdot; \mathbf{d}'))}_\infty \notag \\
  &\leq \norm{\cT_{\mathbf{d}}(\varphi_j(\cdot; \mathbf{d})) - \cT_{\mathbf{d}}(\varphi_j(\cdot; \mathbf{d}'))}_\infty \notag \\
  &\quad + \norm{\cT_{\mathbf{d}}(\varphi_j(\cdot; \mathbf{d}')) - \cT_{\mathbf{d}'}(\varphi_j(\cdot; \mathbf{d}'))}_\infty \notag \\
  &\leq \delta \, \norm{\varphi_j(\cdot; \mathbf{d}) - \varphi_j(\cdot; \mathbf{d}')}_\infty + \norm{(\cT_{\mathbf{d}} - \cT_{\mathbf{d}'})(\varphi_j(\cdot; \mathbf{d}'))}_\infty.
\end{align}
Since $(\cT_{\mathbf{d}} - \cT_{\mathbf{d}'})(g)$ on $[x_{i-1}, x_i]$ equals $(d_i - d'_i) \, g(L_i^{-1}(\cdot))$, we have
\begin{equation}
  \norm{(\cT_{\mathbf{d}} - \cT_{\mathbf{d}'})(g)}_\infty \leq \norm{\mathbf{d} - \mathbf{d}'}_\infty \, \norm{g}_\infty.
\end{equation}
Rearranging: $(1 - \delta) \norm{\varphi_j(\cdot; \mathbf{d}) - \varphi_j(\cdot; \mathbf{d}')}_\infty \leq \norm{\mathbf{d} - \mathbf{d}'}_\infty \, \norm{\varphi_j(\cdot; \mathbf{d}')}_\infty$, giving~\eqref{eq:d-lipschitz} with $C_j = \norm{\varphi_j(\cdot; \mathbf{d}')}_\infty$.
\end{proof}

\subsection{Convergence of the Truncated Evaluation}
\label{sec:theory:convergence}

\Cref{alg:fractal-bases} computes the FIF bases via a truncated RB iteration.
We now give a precise convergence theorem.

\begin{theorem}[Truncation Error Bound]
\label{thm:truncation}
Let $\varphi_j^{(K)}(x; \mathbf{d})$ denote the output of \cref{alg:fractal-bases} at depth $K$, and let $\varphi_j(x; \mathbf{d})$ denote the exact FIF basis function.
Define $d_{\max} = \max_i \abs{d_i}$ and $S = \sum_{i=1}^N \abs{d_i}$.
Then for all $x \in [a,b]$:
\begin{equation}
  \abs{\varphi_j^{(K)}(x; \mathbf{d}) - \varphi_j(x; \mathbf{d})} \leq \frac{d_{\max}^K}{1 - d_{\max}}.
  \label{eq:truncation-bound}
\end{equation}
Consequently, the truncated FIF approximation $f^{*(K)}(x) = \sum_j y_j \varphi_j^{(K)}(x; \mathbf{d})$ satisfies
\begin{equation}
  \norm{f^{*(K)} - f^*}_\infty \leq \frac{d_{\max}^K}{1 - d_{\max}} \, \norm{\mathbf{y}}_1.
  \label{eq:truncation-fif}
\end{equation}
\end{theorem}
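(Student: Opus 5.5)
The plan is to interpret \cref{alg:fractal-bases} as $K$ applications of the Read--Bajraktarevi\'c operator $\cT_{\mathbf{d}}$ for the Kronecker data of $\varphi_j$, started from the piecewise linear hat function. Since $\varphi_j$ is a fixed point, the contraction estimate from the proof of \cref{lem:d-continuity} then gives the bound.

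\emph{Step 1 (fixed-point form of $\cT_{\mathbf{d}}$).} I will write $\cT_{\mathbf{d}}$ as an affine map on $C([a,b])$. On $[x_{i-1},x_i]$, with $t=L_i^{-1}(x)$ normalized to $[0,1]$, it reads
\begin{equation*}
(\cT_{\mathbf{d}} g)(x) = \ell_i(x) + d_i\bigl(g(L_i^{-1}(x)) - \ell_0(L_i^{-1}(x))\bigr).
\end{equation*}
Here $\ell_i$ is the linear interpolant of the data on $[x_{i-1},x_i]$, and $\ell_0$ is the chord of $g$ between the endpoint values $y_0,y_N$ on $[a,b]$. This is exactly what the algorithm records in one loop pass: $r(1-t),\,rt$ on indices $j,j+1$ gives the local linear skeleton, and the subtractions $r d_j(1-t),\,r d_j t$ on indices $0,N$ remove $d_j$ times the global chord. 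The running factor $r$ is the accumulated product of the $d$'s. The implicit $c_i$ terms of \cref{rem:ci-zero} are carried by this chord subtraction.

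\emph{Step 2 (unrolling).} I will show by induction on $K$ that the output of the algorithm at depth $K$ equals $\cT_{\mathbf{d}}^K(\varphi_j^{(0)})$. Here $\varphi_j^{(0)}=\varphi_j(\cdot;\mathbf{0})$ is the hat function, which is what the final base-case line produces. The running product $r_K$ then matches the coefficient $\prod_m d_{\sigma_m}$ that multiplies $g\circ L_{\sigma_0}^{-1}\circ\cdots$ after $K$ compositions.

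\emph{Step 3 (contraction estimate).} As in the proof of \cref{lem:d-continuity}, $\cT_{\mathbf{d}}$ is a contraction with constant $d_{\max}$. For continuous $g$ the difference $\cT_{\mathbf{d}} g-\cT_{\mathbf{d}} h$ on $[x_{i-1},x_i]$ is $d_i\bigl((g-h)-\text{chord}(g-h)\bigr)\circ L_i^{-1}$. If $g,h$ share the endpoint data, which holds for every iterate, the chord vanishes and the factor is just $d_{\max}\norm{g-h}_\infty$. Using $\varphi_j=\cT_{\mathbf{d}}^K\varphi_j$ gives
\begin{equation*}
\abs{\varphi_j^{(K)}(x)-\varphi_j(x)}\le d_{\max}^K\norm{\varphi_j^{(0)}-\varphi_j}_\infty .
\end{equation*}
The standard a priori bound for contractions then gives
\begin{equation*}
\norm{\varphi_j^{(0)}-\varphi_j}_\infty\le \frac{\norm{\varphi_j^{(0)}-\cT_{\mathbf{d}}\varphi_j^{(0)}}_\infty}{1-d_{\max}} .
\end{equation*}
So it remains to show $\norm{\cT_{\mathbf{d}}\varphi_j^{(0)}-\varphi_j^{(0)}}_\infty\le 1$, and this step is the main obstacle. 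Since $\varphi_j^{(0)}\in[0,1]$ and its global chord is $\delta_{0j}(1-t)+\delta_{Nj}t\in[0,1]$, the correction $d_i(\varphi_j^{(0)}-\text{chord})\circ L_i^{-1}$ has modulus at most $d_{\max}\cdot 1\le 1$. If that crude bound fails in some edge case, the fallback is to sum the geometric tail $\sum_{k\ge K} d_{\max}^k\sup\abs{\text{level-}k\text{ increment}}$ directly. Each level-$k$ increment is $r_k$ times a function of modulus at most $1$: a convex combination of two Kronecker values minus $d$ times a convex combination. This reproduces the tail argument of \cref{prop:truncation} with $C=1$.

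\emph{Step 4 (consequence for $f^*$).} The bound \eqref{eq:truncation-fif} follows from linearity and \cref{thm:basis-decomposition}(ii):
\begin{equation*}
\abs{f^{*(K)}(x)-f^*(x)}\le\sum_j\abs{y_j}\,\abs{\varphi_j^{(K)}(x)-\varphi_j(x)},
\end{equation*}
combined with \eqref{eq:truncation-bound}.
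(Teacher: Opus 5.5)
Your proposal is correct and is essentially the paper's argument. Both identify the algorithm's output with $\cT_{\mathbf{d}}^K(p)$ for the hat function $p=\varphi_j(\cdot;\mathbf{0})$, use the $d_{\max}$-contraction of $\cT_{\mathbf{d}}$ with the a priori estimate $\norm{p-\varphi_j}_\infty \le \norm{\cT_{\mathbf{d}}p-p}_\infty/(1-d_{\max})$, bound $\norm{\cT_{\mathbf{d}}p-p}_\infty\le d_{\max}\le 1$, and finish with the triangle inequality. Your treatment of the chord term is somewhat more careful than the paper's ``$d_{\max}\norm{p}_\infty$'' step: you write the correction as $d_i(p-\ell_0)\circ L_i^{-1}$, which is bounded by $d_{\max}$ also for $j\in\{0,N\}$, and you apply the contraction only to functions sharing the endpoint data. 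Your tail-sum fallback is never needed.
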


\begin{proof}
The algorithm computes $\varphi_j^{(K)}$ by applying the RB iteration $K$ times, starting from a piecewise linear base function $p$.
At each iteration, the RB operator $\cT_{\mathbf{d}}$ satisfies $\norm{\cT_{\mathbf{d}}(g) - \cT_{\mathbf{d}}(h)}_\infty \leq d_{\max} \norm{g - h}_\infty$.
After $K$ iterations starting from $p$, the error relative to the fixed point $\varphi_j$ satisfies
\begin{equation}
  \norm{\cT_{\mathbf{d}}^K(p) - \varphi_j}_\infty \leq d_{\max}^K \, \norm{p - \varphi_j}_\infty.
\end{equation}
Since $\varphi_j$ satisfies the Kronecker property and $\norm{\varphi_j}_\infty$ is bounded (it is continuous on a compact interval), we have $\norm{p - \varphi_j}_\infty \leq \norm{p}_\infty + \norm{\varphi_j}_\infty$.
Both $p$ and $\varphi_j$ have values in $[0,1]$ at the grid points (Kronecker data), and $\norm{p}_\infty \leq 1$.

For a tighter bound, note that $\varphi_j = \cT_{\mathbf{d}}(\varphi_j)$ and $\varphi_j^{(K)} = \cT_{\mathbf{d}}^K(p)$, so by the geometric series for contractions:
\begin{equation}
  \norm{\varphi_j^{(K)} - \varphi_j}_\infty \leq \frac{d_{\max}^K}{1 - d_{\max}} \, \norm{\cT_{\mathbf{d}}(p) - p}_\infty.
\end{equation}
Since $p$ is the piecewise linear interpolant through the Kronecker data and $\cT_{\mathbf{d}}(p)$ differs from $p$ by at most $d_{\max} \norm{p}_\infty \leq d_{\max}$ on each subinterval, we obtain $\norm{\cT_{\mathbf{d}}(p) - p}_\infty \leq d_{\max}$.
Substituting: $\norm{\varphi_j^{(K)} - \varphi_j}_\infty \leq d_{\max}^{K+1}/(1 - d_{\max})$.
The slightly looser bound~\eqref{eq:truncation-bound} follows by absorbing the extra factor.

For the FIF itself, \eqref{eq:truncation-fif} follows from the triangle inequality:
$\abs{f^{*(K)}(x) - f^*(x)} \leq \sum_j \abs{y_j} \cdot \abs{\varphi_j^{(K)}(x) - \varphi_j(x)} \leq \norm{\mathbf{y}}_1 \cdot d_{\max}^K / (1 - d_{\max})$.
\end{proof}

\begin{corollary}
\label{cor:depth-choice}
For a target accuracy $\varepsilon > 0$ in the basis evaluation, it suffices to choose
\begin{equation}
  K \geq \frac{\log(\varepsilon(1 - d_{\max}))}{\log d_{\max}}.
\end{equation}
For $d_{\max} = 0.9$ and $\varepsilon = 10^{-6}$, this gives $K \geq 13$.
For $d_{\max} = 0.5$, $K \geq 6$ suffices.
\end{corollary}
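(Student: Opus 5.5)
Corollary~\ref{cor:depth-choice} follows directly from the basis-level truncation bound~\eqref{eq:truncation-bound} of \cref{thm:truncation}. Since we want $\abs{\varphi_j^{(K)}(x;\mathbf{d}) - \varphi_j(x;\mathbf{d})} \le \varepsilon$ uniformly in $x$ and $j$, it suffices to force the right-hand side of~\eqref{eq:truncation-bound} below $\varepsilon$:
\begin{equation*}
  \frac{d_{\max}^K}{1-d_{\max}} \le \varepsilon
  \quad\Longleftrightarrow\quad
  d_{\max}^K \le \varepsilon(1-d_{\max}).
\end{equation*}
We assume $0 < d_{\max} < 1$. The case $d_{\max} = 0$ is trivial: the bases are then exact hat functions and any $K$ works.

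The next step is to take logarithms. The function $\log$ is increasing, so the inequality is equivalent to $K \log d_{\max} \le \log(\varepsilon(1-d_{\max}))$. Because $\log d_{\max} < 0$, dividing by it reverses the inequality, giving
\[
  K \ge \frac{\log(\varepsilon(1-d_{\max}))}{\log d_{\max}}.
\]
This is exactly the stated sufficient condition. When $\varepsilon(1-d_{\max}) \ge 1$, the right-hand side is nonpositive and the condition holds for every $K \ge 0$, which is consistent. The only point needing care is this sign reversal; no other step is delicate. For the FIF itself, the bound in~\eqref{eq:truncation-fif} would add a factor $\norm{\mathbf{y}}_1$, but the corollary speaks of accuracy in the basis evaluation, so~\eqref{eq:truncation-bound} is the right input.

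The numerical instances are direct substitution.

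\textbf{Case $d_{\max} = 0.9$, $\varepsilon = 10^{-6}$.} The numerator is $\log(10^{-7}) \approx -16.12$ and the denominator is $\log 0.9 \approx -0.1054$. The ratio is about $153$, which is far larger than the stated $K \ge 13$. So the computation does not reproduce the stated figure. Checking $K = 13$ directly: $0.9^{13}/0.1 \approx 2.54 \ne 10^{-6}$. The figure $13$ therefore cannot be certified by~\eqref{eq:truncation-bound}, and I would correct it to $K \ge 153$.

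\textbf{Case $d_{\max} = 0.5$, $\varepsilon = 10^{-6}$.} The numerator is $\log(5\times 10^{-7}) \approx -14.51$ and the denominator is $\log 0.5 \approx -0.693$. The ratio is about $20.9$, so $K \ge 21$, again inconsistent with the stated $K \ge 6$.

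I would present the general inequality as the proved content. The worked numbers in the corollary appear to correspond to a much looser accuracy, roughly $\varepsilon \approx 10^{-1}$ to $10^{-2}$, so they should be recomputed or flagged as illustrative rather than rigorous.
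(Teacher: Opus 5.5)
Your derivation is the intended one. The paper gives no separate argument: the corollary is just \eqref{eq:truncation-bound} set $\le \varepsilon$, followed by taking logarithms and dividing by $\log d_{\max} < 0$, exactly as you do.

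Your arithmetic is also right, and the discrepancy you found is real. With $\varepsilon = 10^{-6}$ the formula forces $K \ge 153$ for $d_{\max} = 0.9$ and $K \ge 21$ for $d_{\max} = 0.5$. Even the sharper intermediate estimate $d_{\max}^{K+1}/(1-d_{\max})$ from the theorem's proof only lowers these to $152$ and $20$. So the figures $13$ and $6$ in the statement cannot be proved, and you were right to present only the general inequality as the established content.

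One small correction to your closing remark: the quoted figures do not correspond to a common looser tolerance of order $10^{-1}$ to $10^{-2}$. With $d_{\max} = 0.5$, $K = 6$ certifies $\varepsilon = 2^{-5} \approx 0.03$. With $d_{\max} = 0.9$, $K = 13$ certifies only about $2.54$, as you computed yourself, which is no nontrivial accuracy at all.
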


\subsection{Total Variation and the Smooth Approximation Obstruction}
\label{sec:theory:variation}

The fundamental reason that Pure FI-KAN underperforms on smooth targets is that FIF bases with non-trivial contraction parameters have unbounded total variation.
This section makes this obstruction precise.

\begin{definition}[Total Variation]
For $g \colon [a,b] \to \R$, the total variation is
\begin{equation*}
V(g) = \sup \sum_{k=1}^M \abs{g(t_k) - g(t_{k-1})},
\end{equation*}
where the supremum is over all partitions $a = t_0 < t_1 < \cdots < t_M = b$.
\end{definition}

\begin{theorem}[Variation Dichotomy for FIF Bases]
\label{thm:variation}
Let $\varphi_j(\cdot; \mathbf{d})$ be a FIF basis function on $N$ intervals with non-degenerate data (i.e., $j \notin \{0, N\}$ or the endpoint basis is non-trivial). Then:
\begin{enumerate}[label=(\roman*),leftmargin=2em]
  \item If $\sum_{i=1}^N \abs{d_i} \leq 1$, then $V(\varphi_j) < \infty$. In particular, when $\mathbf{d} = \mathbf{0}$, $V(\varphi_j) = 2$ (the hat function variation).
  \item If $\sum_{i=1}^N \abs{d_i} > 1$, then $V(\varphi_j) = \infty$.
\end{enumerate}
\end{theorem}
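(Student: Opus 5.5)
The plan is to work directly with the Read--Bajraktarevi\'c self-affinity (the $c_i$-version \eqref{eq:rb-equation}). For the Kronecker data of $\varphi_j$, \cref{rem:ci-zero} says that on each subinterval
\[
\varphi_j(L_i(x)) = d_i\,\varphi_j(x) + q_i(x), \qquad x\in[a,b],
\]
where $q_i(x)=c_i x + f_i$ is affine. Since $L_i$ is an increasing affine bijection of $[a,b]$ onto $[x_{i-1},x_i]$, total variation is invariant under reparametrisation by $L_i$. Variation is also additive over the subintervals. Together these give
\[
V(\varphi_j)=\sum_{i=1}^N V\bigl(\varphi_j|_{[x_{i-1},x_i]}\bigr)=\sum_{i=1}^N V\bigl(d_i\varphi_j + q_i\bigr).
\]
Both parts of the theorem will come from two-sided estimates on this identity.

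\emph{Part (i).} I would not apply the identity to $\varphi_j$ directly, since a priori $V(\varphi_j)$ may be infinite. Instead I apply it to the RB iterates $p_K=\cT_{\mathbf{d}}^K(p)$, where $p$ is the hat function. Each $p_K$ is piecewise linear, so $V(p_K)<\infty$, and the triangle inequality gives
\[
V(p_{K+1})\le S\,V(p_K)+\sum_i|c_i|(b-a), \qquad S=\sum_i\abs{d_i}.
\]
For $S<1$ this yields $\sup_K V(p_K)\le V(p)+C/(1-S)<\infty$. By \cref{thm:truncation}, $p_K\to\varphi_j$ uniformly, and total variation is lower semicontinuous under pointwise convergence. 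Hence $V(\varphi_j)\le\liminf_K V(p_K)<\infty$.

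The case $\mathbf{d}=\mathbf{0}$ is immediate from \cref{thm:basis-decomposition}(iv): an interior hat function rises from $0$ to $1$ and falls back, so $V=2$. The endpoint hats have $V=1$, which the statement's non-degeneracy clause should be read as excluding or adjusting.

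\emph{Part (ii).} I would argue by contrapositive through dimension. A continuous function of bounded variation on $[a,b]$ has a graph of box-counting dimension $1$. The standard argument: on a mesh of width $\epsilon$, the number of $\epsilon$-boxes needed over each column is at most $2+\mathrm{osc}/\epsilon$. Summing over columns gives $O(\epsilon^{-1}(1+V))$ boxes, so $\dimB\le 1$.

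On the other hand, \cref{thm:fractal-dim}, applied via \cref{rem:ci-zero} to the data of $\varphi_j$, gives $\dimB(\mathrm{Graph}(\varphi_j))=1+\log S/\log N>1$ when $S>1$. Hence $V(\varphi_j)=\infty$.

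As a backup that avoids leaning on the remark, I would use a direct lower bound. Let $V_k$ be the sum of $\abs{\text{increments}}$ of $\varphi_j$ over the level-$k$ grid $\{L_{i_1}\circ\cdots\circ L_{i_k}(x_m)\}$. The self-affinity gives
\[
V_{k+1}\ge S\,V_k - C.
\]
It then suffices to show $V_k > C/(S-1)$ for some $k$, after which $V_k$ grows geometrically.

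\emph{Main obstacle.} I expect two points to be hardest. The first is the non-degeneracy in (ii): one must rule out cancellations that keep $V_k$ bounded, which amounts to verifying that the Barnsley dimension formula really applies to the Kronecker data with $c_i\ne 0$. I would try to show that if $V_k$ stayed bounded then $\varphi_j$ would be affine on $[a,b]$, which contradicts the Kronecker property for $j\notin\{0,N\}$.

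The second is the borderline $S=1$ in (i). There the recursion only gives linear growth $V(p_K)\le V(p)+KC$. I would attempt to exploit that the affine parts $q_i$ telescope against the level-$k$ increments, but I flag this case as possibly requiring $S<1$ or an extra hypothesis.
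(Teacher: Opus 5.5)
Your two arguments are the paper's own: the recursion $V(\cT_{\mathbf d}g)\le S\,V(g)+C$ along the RB iterates for (i), and ``bounded variation $\Rightarrow$ rectifiable graph $\Rightarrow \dimB=1$'' against Barnsley's formula for (ii). For $S<1$ your version is more complete than the paper's, because you justify the passage to the limit by lower semicontinuity.

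The step you leave open, $S=1$ in (i), is a genuine gap, but it cannot be filled: the claim is false there. Take $[a,b]=[0,1]$, $N=2$, $j=1$, $\mathbf d=(\tfrac{1}{2},\tfrac{1}{2})$. The Kronecker data give $c_1=1$, $c_2=-1$, $f_1=0$, $f_2=1$, so the RB equations are $\varphi_1(\tfrac{x}{2})=x+\tfrac12\varphi_1(x)$ and $\varphi_1(\tfrac{x+1}{2})=1-x+\tfrac12\varphi_1(x)$. These are satisfied by $2T$, where $T(x)=\sum_{n\ge0}2^{-n}\operatorname{dist}(2^nx,\mathbb{Z})$ is the Takagi function. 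By uniqueness $\varphi_1=2T$, which is nowhere differentiable and hence not of bounded variation. Concretely, its level-$k$ dyadic increment sums equal $2\,\mathbb{E}\abs{S_k}$ for a simple random walk $S_k$, which grows like $\sqrt{k}$.

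The paper covers this case only by calling $V_{n+1}\le V_n+C$ ``weakly contractive'', which is exactly the non sequitur you spotted. Barnsley's formula is no help either: it does give $\dimB=1$ for $S\le 1$, but dimension one does not imply finite length. Moreover, the boundary case depends on $\mathbf d$, not just on $S$. For $\mathbf d=(\tfrac14,\tfrac34)$, one checks directly from the RB equations that $\varphi_1(x)=4x-4F(x)$. Here $F$ is the nondecreasing function with $F(\tfrac{x}{2})=\tfrac14F(x)$ and $F(\tfrac{x+1}{2})=\tfrac14+\tfrac34F(x)$, so $V(\varphi_1)\le 8$. No telescoping argument will settle $S=1$. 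The correct form of (i) is for $S<1$, which is what your proof establishes. Your remark that the endpoint bases have $V=1$ at $\mathbf d=\mathbf 0$ is also right.

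For (ii), your first ``obstacle'' is not real. Barnsley's dimension theorem is originally stated for the affine FIF with data-determined $c_i$, under the sole hypothesis that the interpolation points are not collinear. Kronecker data with $N\ge 2$ never are collinear, so the non-degeneracy clause is automatic.

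Your backup is incomplete as written: showing that bounded $V_k$ forces $\varphi_j$ to be affine is the whole statement again. It becomes a short proof if you subtract chords instead of carrying the additive constant.
\begin{itemize}
\item Let $\ell$ be the chord of $\varphi_j$ over $[a,b]$. The affine function $q_i+d_i\ell$ matches $\varphi_j\circ L_i$ at the endpoints, so it is exactly the chord over $[x_{i-1},x_i]$ pulled back by $L_i$.
\item By induction, $(\varphi_j-\ell_w)\circ L_w=d_w(\varphi_j-\ell)$ for every word $w=i_1\cdots i_k$. Here $L_w=L_{i_1}\circ\cdots\circ L_{i_k}$, $d_w=d_{i_1}\cdots d_{i_k}$, and $\ell_w$ is the chord over $I_w=L_w([a,b])$.
\item Since the chord value lies between the endpoint values, $\operatorname{osc}(\varphi_j;I_w)\ge\sup_{I_w}\abs{\varphi_j-\ell_w}=\abs{d_w}\,\delta$, with $\delta=\norm{\varphi_j-\ell}_\infty>0$.
\item Additivity of variation then gives $V(\varphi_j)\ge\sum_{\abs{w}=k}\operatorname{osc}(\varphi_j;I_w)\ge\delta S^k$ for every $k$.
\end{itemize}
This diverges for $S>1$ with no seed and no dimension theory. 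Consistently with the examples above, it yields nothing at $S=1$.
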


\begin{proof}
(i): When $\sum_i \abs{d_i} \leq 1$, the IFS is contractive in the ``vertical variation'' sense.
Define $V_n$ as the total variation of $\cT_{\mathbf{d}}^n(p)$ where $p$ is the piecewise linear base.
The RB operator satisfies $V(\cT_{\mathbf{d}}(g)) \leq \sum_i \abs{d_i} \, V(g|_{[a,b]}) + C_{\mathrm{grid}}$, where $C_{\mathrm{grid}}$ accounts for the jumps at grid points.
When $\sum_i \abs{d_i} \leq 1$, this is a (weakly) contractive recursion and $\{V_n\}$ is bounded.
The limit has finite variation.
When $\mathbf{d} = \mathbf{0}$, $\varphi_j$ is the hat function, which rises from 0 to 1 and back, giving $V = 2$.

(ii): When $S = \sum_i \abs{d_i} > 1$, consider the variation of $\varphi_j$ restricted to refinements of the grid.
At the $n$-th level of refinement (grid spacing $h/N^n$), the self-affine structure of the FIF gives rise to $N^n$ subintervals, each contributing variation proportional to $S^n / N^n$.
The total variation at resolution $n$ scales as $S^n$, which diverges as $n \to \infty$ since $S > 1$.
Formally, this follows from the self-affine structure of the graph: $\dimB(\mathrm{Graph}(\varphi_j)) > 1$ implies the graph has infinite length, which is equivalent to $V(\varphi_j) = \infty$ for continuous functions~\cite{falconer2003fractal}.
\end{proof}

The consequence for approximation is immediate:

\begin{corollary}[Smooth Approximation Obstruction]
\label{cor:smooth-obstruction}
Let $f \in C^2([a,b])$ and suppose $f$ is not piecewise linear.
Then for any $\varepsilon > 0$, if $f^*(x) = \sum_j y_j \varphi_j(x; \mathbf{d})$ with $\sum_i \abs{d_i} > 1$ satisfies $\norm{f - f^*}_\infty < \varepsilon$, the coefficient vector $\mathbf{y}$ must satisfy
\begin{equation}
  \norm{\mathbf{y}}_\infty \geq \frac{V(f)}{2 \max_j V(\varphi_j|_{[\text{support}]})} \to 0 \quad \text{is impossible when } V(\varphi_j) = \infty.
\end{equation}
More precisely: the approximation $f^* = \sum_j y_j \varphi_j$ must achieve $\norm{f - f^*}_\infty < \varepsilon$ through cancellation of infinite-variation basis functions, which requires the coefficients $\mathbf{y}$ to be tuned to cancel the fractal oscillations at all scales simultaneously.
This cancellation becomes increasingly fragile as the number of grid points increases (each new grid point introduces new oscillatory components), explaining the negative scaling exponents observed experimentally.
\end{corollary}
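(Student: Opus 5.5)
The plan is to first replace the displayed inequality, which is not a well-posed bound, by the precise form the surrounding text points to, and then prove that form. The precise claim is this: if $f\in C^2([a,b])$ is not piecewise linear and $f^*=\sum_j y_j\varphi_j(\cdot;\mathbf{d})$ with $S=\sum_i\abs{d_i}>1$, then $f^*$ has infinite variation whenever $f^*$ is not itself piecewise linear. Any uniform $\varepsilon$-approximation of the finite-variation function $f$ (note $V(f)\le (b-a)\norm{f'}_\infty<\infty$) must therefore come from a function whose variation is infinite. Hence the approximation cannot be achieved term-by-term with bounded variation. It must rely on cancellation between the $y_j\varphi_j$ at every dyadic-type scale $h N^{-n}$.

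\textbf{Step 1 (reduce to the fractal perturbation).} Write $f^*=p+h$, where $p=\sum_j y_j\varphi_j(\cdot;\mathbf{0})$ is the piecewise linear interpolant of the ordinates. By linearity (\cref{thm:basis-decomposition}(ii)) and the RB equation~\eqref{eq:rb-equation}, $h$ satisfies the self-affine relation
\[
h(L_i(x)) = d_i\,h(x) + d_i\,\bigl(p(x)-\ell(x)\bigr),
\]
where $\ell$ is the chord through the endpoints; this is in the spirit of~\eqref{eq:navascues-perturbation}. Iterating this relation along the words $i_1\cdots i_n$ shows that the oscillation of $f^*$ on the cell $L_{i_1}\circ\cdots\circ L_{i_n}([a,b])$ is $\abs{d_{i_1}\cdots d_{i_n}}\,\mathrm{osc}(f^*)$, up to a linear term. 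Summing over words, the variation along level-$n$ partitions is at least $c\,S^n-C$. Here $c>0$ unless $f^*$ is affine on $[a,b]$, and this case split is where ``not piecewise linear'' enters.

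\textbf{Step 2 (conclude via \cref{thm:variation}(ii)).} Step 2 would simply invoke \cref{thm:variation}(ii) for each $\varphi_j$, extended to the combination $f^*$ by the computation in Step 1. This gives $V(f^*)=\infty$. Comparing with $V(f)<\infty$, the residual $f-f^*$ has infinite variation despite having sup norm below $\varepsilon$. Quantitatively, at level $n$ the residual's variation is at least $cS^n-C-V(f)$. This is the precise sense of ``cancellation at all scales,'' and the stated coefficient inequality is read as its informal consequence: no bound on $\norm{\mathbf{y}}_\infty$ in terms of $V(f)$ and $\max_j V(\varphi_j)$ can hold.

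The main obstacle is the lower bound $c>0$ in Step 1. The $d_i$ may have mixed signs, so the contributions of sibling cells could cancel within a single partition. My first attempt would be to avoid summing signed differences and instead bound each cell's oscillation from below by $\abs{d_{i_1}\cdots d_{i_n}}\,\mathrm{osc}(f^*)$. Oscillation is sign-invariant, so this step does not depend on the signs of the $d_i$. Summing oscillations over disjoint cells then bounds the variation from below. This reduces the obstacle to $\mathrm{osc}(f^*)>0$ together with non-affineness of the cell images. These hold unless $f^*$ coincides with its chord, and that exceptional case is exactly the piecewise-linear case excluded by the hypothesis on $f$, since a uniformly close affine $f^*$ would force $f$ to be nearly affine as $\varepsilon\to0$.

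The remaining claim, about fragility as $N$ grows, is heuristic and I would present it as a remark rather than prove it.
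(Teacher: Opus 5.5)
Your route is genuinely different from the paper's, and stronger. The paper gives no argument beyond calling the corollary immediate from \cref{thm:variation}(ii): each $\varphi_j$ has infinite variation while $V(f)<\infty$, so a good approximant ``must rely on cancellation.'' The displayed inequality and the all-scales clause are interpretation rather than proof.

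You instead prove that the combination $f^*$ itself has infinite variation, which is what actually needs proving. Infinite variation of the individual bases does not pass to linear combinations, as the partition of unity $\sum_j\varphi_j\equiv 1$ shows. Your Step~2 appeal to \cref{thm:variation}(ii) ``for each $\varphi_j$'' therefore adds nothing and could not by itself give $V(f^*)=\infty$. All the content sits in Step~1, which also identifies exactly when cancellation succeeds: collinear ordinates, where $p=\ell$, $h\equiv 0$, and $f^*$ is affine. Your relation $h(L_i x)=d_i h(x)+d_i(p-\ell)(x)$ is correct for~\eqref{eq:rb-equation} with the $c_i$ of~\eqref{eq:ci-fi-determined}, which is what \cref{alg:fractal-bases} implements.

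Two steps need repair.

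First, the per-cell bound you propose for the main obstacle is false. Write $D_w=d_{i_1}\cdots d_{i_n}$, so that $f^*\circ L_w=D_wf^*+Q_w$ with $Q_w$ affine. Suppose $f^*$ is uniformly close to a line of slope $m\neq 0$, as happens for nearly collinear ordinates. Then at a fixed level $n$ the cell oscillations sum to about $\abs{m}(b-a)$, while $\sum_w\abs{D_w}\,\mathrm{osc}(f^*)\approx S^n\abs{m}(b-a)$. The subtractive form ``$cS^n-C$'' fails too, because $\sum_w\mathrm{osc}(Q_w)$ is unbounded in $n$ in general. For instance, when $y_0=y_N$ it equals the variation of $f^*$ along the level-$n$ grid, which tends to $V(f^*)=\infty$.

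The fix is to use the affine-invariant quantity $\kappa(g)=\inf_{A\text{ affine}}\mathrm{osc}_{[a,b]}(g-A)$, which satisfies $\kappa(Dg+Q)=\abs{D}\,\kappa(g)$ for affine $Q$. Then $\mathrm{osc}_{L_w[a,b]}(f^*)=\mathrm{osc}(D_wf^*+Q_w)\ge\kappa(D_wf^*+Q_w)=\abs{D_w}\,\kappa(f^*)$. Summing over the $N^n$ cells gives $V(f^*)\ge\kappa(f^*)\,S^n$ for every $n$, with no sign issue and no additive constant.

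Second, $\kappa(f^*)=0$ exactly when $f^*$ is affine, and ``$f$ not piecewise linear'' does not exclude this for every $\varepsilon$. On $[0,1]$, $f(x)=x^2$ lies within $1/8$ of $x-1/8$, which equals $\sum_j y_j\varphi_j$ for collinear ordinates and any $\mathbf{d}$, including $S>1$. So the claim $V(f^*)=\infty$ cannot hold for every $\varepsilon>0$.

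The correct version follows from $\kappa(f^*)\ge\kappa(f)-2\varepsilon$: you get $V(f^*)=\infty$ whenever $\varepsilon<\kappa(f)/2$, and $\kappa(f)>0$ precisely when $f$ is not affine. This replaces your ``as $\varepsilon\to0$'' step. It also shows the relevant hypothesis is ``not affine'' rather than ``not piecewise linear''; indeed, for $S>1$ the same bound forces any piecewise linear $f^*$ to be affine.
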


\begin{remark}
\label{rem:failure-is-evidence}
\Cref{cor:smooth-obstruction} provides the rigorous explanation for two experimental observations:
(a) Pure FI-KAN's negative scaling exponents on smooth targets (e.g., exp\_sin): adding grid points adds oscillatory basis functions that make cancellation harder, not easier.
(b) The regularization sweep recovery: forcing $\mathbf{d} \to \mathbf{0}$ via $\cR_{\mathrm{fractal}}$ eliminates the infinite-variation obstruction, recovering piecewise linear bases with finite variation and 60$\times$ improved MSE.

This is not a flaw in the architecture. It is the strongest empirical confirmation of the regularity-matching hypothesis: if the basis geometry were irrelevant, smooth-target performance would be independent of $\mathbf{d}$.
\end{remark}

\subsection{Approximation Rates for H\"older-Continuous Targets}
\label{sec:theory:holder}

We now characterize how the choice of basis (B-spline vs.\ FIF) affects approximation rates for targets of prescribed H\"older regularity.

\begin{definition}
The H\"older space $C^{0,\alpha}([a,b])$ for $\alpha \in (0,1]$ consists of continuous functions $f$ with finite H\"older seminorm:
\begin{equation}
  [f]_\alpha = \sup_{x \neq y} \frac{\abs{f(x) - f(y)}}{\abs{x - y}^\alpha} < \infty.
\end{equation}
\end{definition}

\begin{theorem}[B-Spline Approximation of H\"older Functions]
\label{thm:bspline-holder}
Let $f \in C^{0,\alpha}([a,b])$ with $\alpha \in (0,1]$.
The best B-spline approximation of order $k \geq 1$ on $N$ uniform intervals satisfies
\begin{equation}
  \inf_{\mathbf{w}} \norm{f - \sum_m w_m B_m^{(k)}}_\infty \leq [f]_\alpha \, h^\alpha = [f]_\alpha \left(\frac{b-a}{N}\right)^\alpha,
  \label{eq:bspline-holder}
\end{equation}
and this rate is sharp: the exponent $\alpha$ cannot be improved regardless of the spline order $k$.
\end{theorem}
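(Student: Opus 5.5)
The plan has two parts: the upper bound~\eqref{eq:bspline-holder}, obtained by exhibiting one explicit spline whose error I control with the constant $1$, and sharpness of the exponent, obtained from an inverse (Bernstein-type) theorem applied to a target of exact H\"older order $\alpha$. The only properties of $\{B_m^{(k)}\}$ I intend to use are the ones recalled in \cref{sec:prelim:kan}: partition of unity, non-negativity, and support of length at most $kh$.

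\textbf{Upper bound.} I would use the Schoenberg variation-diminishing operator $Sf=\sum_m f(\xi_m)B_m^{(k)}$, where $\xi_m$ are the Greville abscissae (knot averages). Because the $B_m^{(k)}(x)$ are non-negative and sum to one, they act as a probability distribution over $m$ for each fixed $x$. This gives
\begin{equation*}
  \abs{f(x)-Sf(x)} \le \sum_m B_m^{(k)}(x)\,\abs{f(x)-f(\xi_m)} \le [f]_\alpha \sum_m B_m^{(k)}(x)\,\abs{x-\xi_m}^\alpha .
\end{equation*}
Since $t\mapsto t^{\alpha/2}$ is concave for $\alpha\in(0,1]$, Jensen's inequality gives
\begin{equation*}
  \sum_m B_m^{(k)}(x)\abs{x-\xi_m}^\alpha \le \Bigl(\sum_m B_m^{(k)}(x)(x-\xi_m)^2\Bigr)^{\alpha/2}.
\end{equation*}
Schoenberg's operator reproduces linear functions, so the bracket is exactly $S(t^2)(x)-x^2$. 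For uniform knots this second moment is an explicit quadratic-spline quantity, of size roughly $(k-1)h^2/12$ in the interior and smaller near the boundary knots. For every order with $S(t^2)-x^2\le h^2$ this yields $\norm{f-Sf}_\infty\le[f]_\alpha h^\alpha$, hence the infimum bound. That range covers all $k\le 13$, which includes every order used in KAN. For $k=1$ I would instead take piecewise constants with $w_m=f(x_m)$, which gives the bound directly.

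\textbf{Main obstacle: the constant $1$ uniformly in $k$.} Schoenberg's operator alone does not give it, because its second moment grows linearly in $k$. For large $k$ I would first try replacing $f$ by its piecewise-linear interpolant $I_hf$. This interpolant satisfies $\norm{f-I_hf}_\infty\le[f]_\alpha (h/2)^\alpha$, since every point lies within $h/2$ of a node and $I_hf$ is a convex combination of the two neighbouring nodal values. It would then remain to approximate $I_hf$ by an order-$k$ spline with error at most $(1-2^{-\alpha})[f]_\alpha h^\alpha$. Here I would use that $I_hf$ is Lipschitz with constant $[f]_\alpha h^{\alpha-1}$, together with a quasi-interpolant that reproduces polynomials of degree $k-1$ and whose norm is controlled on uniform meshes. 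If this bookkeeping only delivers an order-dependent constant, I would state the bound with constant $1$ for the practical range $k\le 13$ and with a $k$-dependent constant beyond it.

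\textbf{Sharpness.} I would take a Weierstrass-type function $W(x)=\sum_{n\ge0}2^{-n\alpha}\cos(2^n\pi x)$ with $\alpha<1$; for $\alpha=1$ the analogous argument uses $f(x)=\abs{x-c}$ with $c$ off the grid. The function $W$ lies in $C^{0,\alpha}$ but in no $C^{0,\beta}$ with $\beta>\alpha$. Suppose, for contradiction, that the spline error decayed like $o(N^{-\alpha})$, or like $O(N^{-\beta})$ for some $\beta>\alpha$. The Bernstein inequality for splines, $\norm{s'}_\infty\le C_k N\norm{s}_\infty$, combined with the standard inverse theorem (DeVore--Lorentz) would then force $W$ to have modulus of continuity $O(t^{\beta})$, which is false. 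Hence no spline order $k$ improves the exponent $\alpha$. I expect this half to be routine once the inverse theorem is invoked.
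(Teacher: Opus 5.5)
\textbf{The gap is uniformity in $k$.} The statement claims the constant $1$ for every order $k\ge 1$. Your Schoenberg--Jensen argument gives $\norm{f-Sf}_\infty\le \bigl(\sup_x[S(t^2)(x)-x^2]\bigr)^{\alpha/2}[f]_\alpha$. On a uniform grid and for $k\ge 3$, the interior second moment is exactly $kh^2/12$, not $(k-1)h^2/12$: it is the variance of the cardinal B-spline of order $k$, so for $k=3$ it equals $h^2/4$. Your argument therefore yields the constant $(k/12)^{\alpha/2}$, which is $\le 1$ only for $k\le 12$, not $k\le 13$. For larger $k$ nothing is proved, and falling back to a $k$-dependent constant proves a different statement.

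The repair through $I_hf$ cannot close this as budgeted. Let $f$ be the node-aligned zigzag: linear on each cell, with $f(x_i)$ alternating between $0$ and $h^\alpha$. Then $[f]_\alpha=1$ and $I_hf=f$. For $k\ge 3$ and $N\ge 2$, order-$k$ splines are $C^1$ at the knots while $f$ has kinks there, so the distance from $f$ to the spline space is $c_{k,N}h^\alpha$ with $c_{k,N}>0$. By scaling, $c_{k,N}$ is independent of $\alpha$. Your step-two requirement $(1-2^{-\alpha})[f]_\alpha h^\alpha$ therefore fails once $1-2^{-\alpha}<c_{k,N}$, i.e.\ for small $\alpha$.

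The obstruction is intrinsic to positive operators $\sum_m B_m^{(k)}(x)\mu_m(f)$ with probability measures $\mu_m$. Away from the boundary, the weights $B_m^{(k)}(x)$ spread over roughly $\sqrt{k}$ consecutive indices. Testing on $\abs{\cdot-y}$ then shows their worst-case error on Lipschitz functions grows like $\sqrt{k}\,h$ once $N$ is large compared with $k$. A $k$-uniform constant would need non-positive approximants with sharp (Favard-type) constants, which your proposal does not supply. Otherwise the statement should be restricted to $k\le 12$ (which covers the cubic splines actually used) or carry a constant $C_k$.

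\textbf{Comparison with the paper.} The paper's proof does not supply the missing piece either. Its upper bound rests on the claim that order-$k$ B-splines contain piecewise constants. That is false for $k\ge 2$, since simple-knot splines of order $k\ge 2$ are continuous, so the argument is valid only for $k=1$. Your route is genuinely different and correct wherever it applies ($1\le k\le 12$), so it establishes strictly more than the paper's argument validly does.

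On sharpness, the paper uses a single cusp $\abs{x-x_*}^\alpha$ inside a cell together with an informal interpolation claim. You use a Weierstrass function and an inverse theorem, which is sound. For $k=1$ there is no Bernstein inequality, but there the lower bound follows directly from the oscillation of $W$ on some cell. A lighter rigorous version of the paper's idea also works. On the cell containing $x_*$, any spline is a single polynomial of degree $<k$. Scaling then gives the lower bound $d_k(\theta)h^\alpha$, where $\theta$ is the relative position of $x_*$ in the cell, and $\min_\theta d_k(\theta)>0$ for $\alpha<1$.
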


\begin{proof}
The upper bound follows from the approximation properties of the quasi-interpolant.
Define the piecewise constant best approximation $f_N(x) = f(x_i)$ for $x \in [x_{i-1}, x_i]$.
Then $\abs{f(x) - f_N(x)} \leq [f]_\alpha h^\alpha$.
Since B-splines of order $k$ include piecewise constants as a special case (via appropriate coefficient choices), the best B-spline approximation is at least as good.

Sharpness: consider $f(x) = \abs{x - x_*}^\alpha$ for some $x_* \in (x_{i-1}, x_i)$ in the interior of a grid cell.
Any continuous approximation $g$ satisfying $g(x_{i-1}) = f(x_{i-1})$, $g(x_i) = f(x_i)$ must have $\abs{f(x_*) - g(x_*)} \geq c \, h^\alpha$ since $f$ achieves its minimum inside the cell with cusp-like behavior that no polynomial (and hence no spline) can reproduce better than $O(h^\alpha)$.
This bound is independent of $k$ because the H\"older singularity is sub-Lipschitz: higher polynomial degree does not help approximate a cusp.
\end{proof}

\begin{theorem}[FIF Approximation of Self-Affine Targets]
\label{thm:fif-selfaffine}
Let $f^*_{\mathrm{target}}$ be a FIF on $N_0$ intervals with contraction parameters $\mathbf{d}_0 = (d_{0,1}, \ldots, d_{0,N_0})$.
Then $f^*_{\mathrm{target}}$ can be represented \emph{exactly} by a Pure FI-KAN with $N = N_0$ grid points and matching contraction parameters $\mathbf{d} = \mathbf{d}_0$, using $N_0 + 1$ interpolation ordinates
$y_i = f^*_{\mathrm{target}}(x_i)$.
\end{theorem}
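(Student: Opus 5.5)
The plan is to reduce the statement to the representation part of \cref{thm:basis-decomposition} combined with the uniqueness of the attractor in \cref{thm:barnsley-existence}.

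\emph{Step 1: read off the data of the target.} The target $f^*_{\mathrm{target}}$ is, by hypothesis, the FIF generated by the IFS on the uniform grid $\{x_i\}_{i=0}^{N_0}$. It has contraction parameters $\mathbf{d}_0$, some interpolation ordinates $y_i$, and (in the $c_i=0$ convention of the paper) the $f_i$ fixed by those ordinates. Since the attractor passes through the interpolation points, \cref{thm:barnsley-existence} gives $y_i = f^*_{\mathrm{target}}(x_i)$. These ordinates are exactly the ones prescribed in the statement.

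\emph{Step 2: invoke the basis representation.} Apply \cref{thm:basis-decomposition}(ii) with $N=N_0$ and $\mathbf{d}=\mathbf{d}_0$. The FIF through $\{(x_i,y_i)\}$ with scaling $\mathbf{d}_0$ equals $\sum_{j=0}^{N_0} y_j\varphi_j(\cdot;\mathbf{d}_0)$. This is the only substantive step, and its content is uniqueness. Both $f^*_{\mathrm{target}}$ and the superposition are continuous fixed points of the same contractive Read--Bajraktarevi\'c operator $\cT_{\mathbf{d}_0}$, which has Lipschitz constant $\max_i\abs{d_{0,i}}<1$ on $C([a,b])$. The superposition is a fixed point by linearity of $\cT_{\mathbf{d}_0}$ in the ordinates, and it interpolates the same data by the Kronecker property. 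By Banach's fixed point theorem the two functions coincide.

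\emph{Step 3: realize the expansion as a network edge.} Choose a single edge of a Pure FI-KAN (or a one-input, one-output layer) with the following parameters:
\begin{itemize}
  \item $w^{(\mathrm{base})}=0$ and $w^{(\mathrm{scale})}=1$;
  \item $w^{(\mathrm{frac})}_m = y_m$;
  \item contraction parameters $\mathbf{d}_0$.
\end{itemize}
Under the $\tanh$ reparameterization, $\mathbf{d}_0$ is attained by $d^{(\mathrm{raw})}_i=\operatorname{artanh}(d_{0,i}/d_{\max})$, which requires $\abs{d_{0,i}}<0.99$. Then \eqref{eq:pure-edge} reduces to $\sum_m y_m\varphi_m(x;\mathbf{d}_0)=f^*_{\mathrm{target}}(x)$, giving the claim with $N_0+1$ ordinates.

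\emph{Expected obstacle.} The main difficulty is the gap between the exact FIF bases and what \cref{alg:fractal-bases} actually computes at finite depth $K$. The exact-representation statement should therefore be read at the level of the exact bases $\varphi_j$. For the implemented network, I would append \cref{thm:truncation}: the realized error is at most $d_{\max}^K\norm{\mathbf{y}}_1/(1-d_{\max})$, which vanishes as $K\to\infty$.

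A secondary subtlety is that the target must be generated with the same $c_i$ convention as the bases (cf.\ \cref{rem:ci-zero}). I would make this explicit by defining ``FIF on $N_0$ intervals with parameters $\mathbf{d}_0$'' as the fixed point of the same operator $\cT_{\mathbf{d}_0}$ used to build the $\varphi_j$. I would also note the parameter-range caveat $\abs{d_{0,i}}<0.99$.
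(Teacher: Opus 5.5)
Your proposal is correct and follows the paper's argument: uniqueness of the attractor identifies the FIF through $\{(x_i, f^*_{\mathrm{target}}(x_i))\}$ with scaling $\mathbf{d}_0$ as $f^*_{\mathrm{target}}$, and the basis representation of \cref{thm:basis-decomposition}(ii) then makes the edge function evaluate to it. You also address several points that the paper's proof does not mention: setting $w^{(\mathrm{base})}=0$ and $w^{(\mathrm{scale})}=1$, reading exactness for the exact bases rather than the depth-$K$ output of \cref{alg:fractal-bases}, and the range restriction $\abs{d_{0,i}}<0.99$ imposed by the $\tanh$ reparameterization.
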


\begin{proof}
By the uniqueness of the IFS attractor (\cref{thm:barnsley-existence}), the FIF through the data $\{(x_i, f^*_{\mathrm{target}}(x_i))\}_{i=0}^{N_0}$ with scaling $\mathbf{d}_0$ is precisely $f^*_{\mathrm{target}}$.
The Pure FI-KAN edge function with these parameters evaluates to
$\sum_j f^*_{\mathrm{target}}(x_j) \, \varphi_j(x; \mathbf{d}_0) = f^*_{\mathrm{target}}(x)$.
\end{proof}

\begin{remark}
\Cref{thm:fif-selfaffine} shows that FIF bases can exactly represent self-affine fractal functions with $O(N)$ parameters: $N+1$ ordinates plus $N$ contraction factors.
In contrast, a B-spline approximation of the same function to accuracy $\varepsilon$ requires $O(\varepsilon^{-1/\alpha})$ coefficients (by \cref{thm:bspline-holder}), where $\alpha$ is the H\"older exponent of the target.
For Weierstrass-type functions with $\alpha \approx 0.6$, this means the B-spline representation is approximately $\varepsilon^{-1.67}$ times larger, a substantial efficiency gap.

Of course, real-world targets are not exactly self-affine, so the practical advantage is smaller.
The experimental scaling laws (\cref{sec:exp:scaling}) quantify the actual gains on specific fractal targets.
\end{remark}

\subsection{The Hybrid Architecture: An Approximation-Theoretic Guarantee}
\label{sec:theory:hybrid}

The Hybrid variant combines the strengths of both basis types.

\begin{theorem}[Hybrid FI-KAN Approximation Bound]
\label{thm:hybrid-bound}
Let $f \in C([a,b])$ and consider the Hybrid FI-KAN edge function~\eqref{eq:hybrid-edge} with B-spline order $k$ and grid size $N$.
Then:
\begin{enumerate}[label=(\roman*),leftmargin=2em]
  \item \textbf{Subsumption:} $\inf_\theta \norm{f - \phi_\theta}_\infty \leq \inf_{\mathbf{w}} \norm{f - b_{\mathbf{w}}}_\infty$, where $b_{\mathbf{w}} = \sum_m w_m B_m^{(k)}$ is the best B-spline approximation.
  \item \textbf{Smooth-rough decomposition:} For any decomposition $f = g + r$ with $g \in C^s([a,b])$ and $r \in C^{0,\alpha}([a,b])$:
  \begin{equation}
    \inf_\theta \norm{f - \phi_\theta}_\infty \leq C_1 \, [g]_s \, N^{-\min(s,k)} + \inf_{\mathbf{y}, \mathbf{d}} \norm{r - \sum_j y_j \varphi_j(\cdot; \mathbf{d})}_\infty,
    \label{eq:hybrid-decomp}
  \end{equation}
  where the spline path approximates $g$ at the classical rate and the fractal path handles the residual $r$.
  \item \textbf{Strict improvement:} If $f$ has a non-trivial fractal component (i.e., $r \not\equiv 0$ in any optimal decomposition), then there exist $\mathbf{d} \neq \mathbf{0}$ such that the Hybrid FI-KAN approximation is strictly better than the best B-spline approximation.
\end{enumerate}
\end{theorem}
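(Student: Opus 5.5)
All three parts rest on one structural fact. The Hybrid edge function \eqref{eq:hybrid-edge} is the \emph{sum} of a B-spline expansion and an FIF expansion, and these parameters enter independently. Hence the hypothesis class of Hybrid FI-KAN contains every function $b_{\mathbf{w}} + \sum_j y_j \varphi_j(\cdot;\mathbf{d})$. For simplicity I absorb the scale weight $w^{(\mathrm{s.sc.})}$ into the spline coefficients and set $w^{(\mathrm{base})}=0$.

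\textbf{Part (i).} Set $\mathbf{w}^{(\mathrm{frac})}=\mathbf{0}$, with $\mathbf{d}$ arbitrary (say $\mathbf{d}=\mathbf{0}$). The edge function then reduces exactly to $b_{\mathbf{w}}$. Taking the infimum over $\mathbf{w}$ shows that the Hybrid infimum is at most the spline infimum.

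\textbf{Part (ii).} Fix a decomposition $f=g+r$. Choose $\mathbf{w}$ as the best spline approximant of $g$. I will invoke the classical Jackson-type estimate recalled in \cref{sec:prelim:kan}, namely $\norm{g-b_{\mathbf{w}}}_\infty = O(h^{\min(s,k)})$. With $h=(b-a)/N$ this gives $C_1[g]_s N^{-\min(s,k)}$, where $C_1$ absorbs $(b-a)^{\min(s,k)}$ and the de Boor constant. Independently, choose $(\mathbf{y},\mathbf{d})$ to be near-optimal for $r$. The triangle inequality gives
\[
\norm{f-\phi_\theta}_\infty \le \norm{g-b_{\mathbf{w}}}_\infty + \norm{r-\textstyle\sum_j y_j\varphi_j}_\infty .
\]
Taking infima yields \eqref{eq:hybrid-decomp}. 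If the second infimum is not attained, I will work with an $\varepsilon$-minimizer and let $\varepsilon\to 0$. For $s\le 1$, $[g]_s$ should be read as the H\"older seminorm; for larger $s$, as $\norm{g^{(s)}}_\infty$.

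\textbf{Part (iii).} This is the main obstacle, because as literally stated it need not hold. For example, if $f$ is itself a spline then the B-spline error is already $0$, so any claim of strict improvement must assume the best spline error $E:=\inf_{\mathbf{w}}\norm{f-b_{\mathbf{w}}}_\infty>0$. I would make that the precise hypothesis and then argue as follows.

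1. Let $b^\star$ be a best spline approximant; it exists by finite-dimensional compactness. Let $e=f-b^\star$, so $\norm{e}_\infty=E>0$.

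2. Note that the FIF span with $\mathbf{d}=\mathbf{0}$ consists of the piecewise linear interpolants on the grid. If these are not contained in the spline space (true whenever the spline knots differ from the FIF grid, or $k=1$), some correction already helps at $\mathbf{d}=\mathbf{0}$.

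3. The genuinely fractal claim needs more. I would show that the error function $\Psi(\mathbf{y},\mathbf{d}):=\norm{e-\sum_j y_j\varphi_j(\cdot;\mathbf{d})}_\infty$ has some admissible pair with $\Psi<E$ and $\mathbf{d}\neq\mathbf{0}$. Start from any pair $(\mathbf{y}_0,\mathbf{0})$ with $\Psi<E$: either $\mathbf{y}_0=\mathbf{0}$ trivially gives $\Psi=E$, or step 2 supplies a strict decrease. Then use \cref{lem:d-continuity}, which makes $\Psi$ continuous in $\mathbf{d}$, to perturb to a nearby $\mathbf{d}\neq\mathbf{0}$ while keeping $\Psi<E$.

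4. If the FIF span at $\mathbf{d}=\mathbf{0}$ lies inside the spline space, I would instead use a first-order variation in $\mathbf{d}$. Differentiate $\varphi_j(\cdot;\mathbf{d})$ at $\mathbf{0}$ via the RB fixed-point equation; the derivative is the self-affine perturbation direction $h$ of \eqref{eq:navascues-perturbation}. By a Kolmogorov-type criterion for best uniform approximation, an improving direction exists unless $e$ is orthogonal, in the Chebyshev sense, to these derivative directions on its extremal set.

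I would adopt the hypothesis ``$r\not\equiv 0$ in any optimal decomposition'' precisely as the nondegeneracy condition ruling out that orthogonality. I would state explicitly that part (iii) is conditional on it, since this is where the argument is weakest.
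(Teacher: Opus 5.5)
Your parts (i) and (ii) are correct and match the paper's proof: switch off the fractal path for (i); for (ii), apply the Jackson estimate to $g$, use the FIF path for $r$, and apply the triangle inequality.

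For (iii) you take a different route from the paper. The paper invokes \cref{thm:fif-selfaffine} to get $\norm{r-\sum_j y_j\varphi_j(\cdot;\mathbf d)}_\infty<\norm{r}_\infty$, then contrasts the bound \eqref{eq:hybrid-decomp} with the $O(N^{-\alpha})$ spline rate of \cref{thm:bspline-holder}. Your best-approximation route has a genuine gap at Step 2. Write $\Phi_{\mathbf d}:=\operatorname{span}\{\varphi_j(\cdot;\mathbf d)\}_j$ and $V:=S+\Phi_{\mathbf 0}$, adding $\sigma$ if you keep the base path. The fact that $V$ strictly contains the spline space $S$ does not imply $\operatorname{dist}(f,V)<\operatorname{dist}(f,S)$ for a given $f$.

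Here is a counterexample. Let $n=\dim V$ and pick $t_1<\cdots<t_{n+1}$ in $[a,b]$. The evaluation map on $V$ has rank at most $n$, so there is $c\neq\mathbf 0$ with $\sum_i c_i v(t_i)=0$ for all $v\in V$. Take any continuous $f$ with $f(t_i)=E\operatorname{sign}(c_i)$ whenever $c_i\neq 0$ and $\abs{f}<E$ elsewhere. You may add a Weierstrass-type wiggle away from the $t_i$, so that $f$ certainly has a ``fractal component''. For every $v\in V$ some $i$ with $c_i\neq 0$ has $f(t_i)v(t_i)\le 0$, hence $\norm{f-v}_\infty\ge\abs{f(t_i)-v(t_i)}\ge E$. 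So $\operatorname{dist}(f,V)=\operatorname{dist}(f,S)=E>0$, $b^\star=0$ is a best spline approximant, and Step 3 has no starting pair.

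What survives is only this: if some $\mathbf d$ gives a strict improvement, then by \cref{lem:d-continuity} some $\mathbf d\neq\mathbf 0$ does. Reading the hypothesis as ``non-orthogonality'' therefore amounts to assuming the conclusion. The missing idea is a mechanism by which a non-trivial fractal component of $f$ rules out, for \emph{every} $\mathbf d$, a sign-matched annihilating measure for $S+\Phi_{\mathbf d}$ on the extremal set of $f-b^\star$. Whether some $\mathbf d\neq\mathbf 0$ helps for the $f$ above is a genuinely nonlinear question your argument does not touch.

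Step 4 also misidentifies the first-order direction. The FIF through ordinates $\mathbf y$ is the fixed point of $\psi(L_k u)=p(L_k u)+d_k\bigl(\psi(u)-B(u)\bigr)$, where $p$ is the piecewise linear interpolant and $B$ the chord through the endpoint data; this is what \cref{alg:fractal-bases} implements. Differentiating at $\mathbf d=\mathbf 0$ gives $\partial_{d_i}\psi=(p-B)\circ L_i^{-1}$ on $[x_{i-1},x_i]$ and $0$ elsewhere. That is a piecewise linear function on the once-refined grid, not the self-affine $h$ of \eqref{eq:navascues-perturbation}. A Kolmogorov-criterion argument at $\mathbf d=\mathbf 0$ therefore only tests finer piecewise linear corrections and cannot link strict improvement to the fractal character of $f$.

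Your suspicion that (iii) needs a sharper hypothesis is well founded. The paper's argument compares an upper bound with a rate, not the two infima, so it does not supply the strict inequality either. Your spline example, however, is arguably already excluded by the paper's hypothesis via the decomposition $f=f+0$. The real obstruction is the annihilating-measure phenomenon above.
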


\begin{proof}
(i): Setting $\mathbf{w}^{(\mathrm{frac})} = \mathbf{0}$ and $\mathbf{d} = \mathbf{0}$ recovers the standard KAN edge function with the same spline path.

(ii): For any decomposition $f = g + r$, set the spline weights to approximate $g$ (giving error $O(N^{-\min(s,k)})$ by the Jackson theorem for splines~\cite{deboor2001practical,devore1993constructive}) and use the fractal path to approximate $r$.
The base activation $\sigma$ provides an additional degree of freedom that can only improve the bound.

(iii): If $r$ has non-trivial fractal structure, then by \cref{thm:fif-selfaffine} (or its approximate version for non-exactly-self-affine $r$), there exist contraction parameters $\mathbf{d}$ and ordinates $\mathbf{y}$ such that $\norm{r - \sum_j y_j \varphi_j(\cdot; \mathbf{d})}_\infty < \norm{r}_\infty$.
The spline-only approximation must absorb the full residual $r$ into the spline error, which cannot improve upon $O(N^{-\alpha})$ by \cref{thm:bspline-holder}.
Hence the Hybrid bound~\eqref{eq:hybrid-decomp} is strictly tighter.
\end{proof}

\subsection{Analysis of the Fractal Dimension Regularizer}
\label{sec:theory:reg}

We establish analytical properties of the regularizer $\cR_{\mathrm{fractal}}$ defined in \cref{def:reg}.

\begin{proposition}[Regularizer Properties]
\label{prop:reg-properties}
Define $\Delta(\mathbf{d}) = \dimB(\mathbf{d}) - 1 = \log^+\!\bigl(\sum_i \abs{d_i}\bigr) / \log N$ and $R(\mathbf{d}) = \Delta(\mathbf{d})^2$.
Then:
\begin{enumerate}[label=(\roman*),leftmargin=2em]
  \item \textbf{Non-negativity:} $R(\mathbf{d}) \geq 0$ with equality if and only if $\sum_i \abs{d_i} \leq 1$.
  \item \textbf{Smoothness:} $R$ is continuously differentiable on $(-1,1)^N \setminus \{\mathbf{d} : \sum_i \abs{d_i} = 1\}$ and Lipschitz on $(-1,1)^N$.
  \item \textbf{Gradient:} In the active region $\sum_i \abs{d_i} > 1$, for $d_i \neq 0$:
    \begin{equation}
      \frac{\partial R}{\partial d_i} = \frac{2 \, \Delta(\mathbf{d})}{\log N} \cdot \frac{\operatorname{sign}(d_i)}{\sum_j \abs{d_j}}.
      \label{eq:reg-gradient}
    \end{equation}
  \item \textbf{Minimizers:} The set of global minimizers is $\{\mathbf{d} : \sum_i \abs{d_i} \leq 1\}$, a convex polytope in $\R^N$.
    In the interior of this region, $R$ is identically zero and exerts no gradient force.
  \item \textbf{Local convexity:} $R$ is convex on the set $\{\mathbf{d} : 1 \leq \sum_i \abs{d_i} \leq e\}$, which includes a neighborhood of the transition surface $\sum_i \abs{d_i} = 1$.
    In particular, gradient descent on $R$ efficiently drives $\mathbf{d}$ toward the minimizing set when starting near the transition.
\end{enumerate}
\end{proposition}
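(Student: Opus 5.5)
The plan is to write everything through one scalar variable. Set $s(\mathbf{d}) = \sum_{i=1}^N \abs{d_i} = \norm{\mathbf{d}}_1$ and $\psi(s) = (\log^+ s)^2/(\log N)^2$, so that $R = \psi \circ s$. Each item then follows from two facts: elementary properties of $\psi$ on $[0,\infty)$, and the fact that $s$ is a norm, hence convex and $1$-Lipschitz with respect to $\norm{\cdot}_1$. Throughout I assume $N \geq 2$, so that $\log N > 0$.

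Items (i) and (iv) come first. Since $\psi \geq 0$, and $\psi(s) = 0$ exactly when $\log^+ s = 0$, i.e.\ $s \leq 1$, we get $R \geq 0$ with equality iff $\sum_i \abs{d_i} \leq 1$. The minimizing set is therefore the $\ell_1$ unit ball intersected with $(-1,1)^N$. The $\ell_1$ ball is the cross-polytope, the convex hull of $\pm e_i$, so it is a convex polytope. It already lies in $[-1,1]^N$, and the intersection with the open cube removes only the $2N$ vertices $\pm e_i$; I will note this caveat in the statement of ``polytope''. On the open set $\{s < 1\}$, $R$ vanishes identically, so its gradient is zero there.

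For (ii) and (iii), I compute $\psi'(s) = 2\log s/(s(\log N)^2)$ for $s > 1$ and $\psi' = 0$ for $s < 1$. Both one-sided derivatives vanish at $s = 1$, so $\psi \in C^1[0,\infty)$. By the chain rule, in the active region with $d_i \neq 0$,
\[
\partial_i R = \psi'(s)\operatorname{sign}(d_i) = \frac{2\Delta(\mathbf{d})}{\log N}\cdot\frac{\operatorname{sign}(d_i)}{s},
\]
which is (iii). For the Lipschitz claim: on $(-1,1)^N$ we have $s < N$, and $\psi'(s) \leq \sup_{s>1} 2\log s/(s(\log N)^2) = 2/(e(\log N)^2)$. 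Hence
\[
\abs{R(\mathbf{d}) - R(\mathbf{d}')} \leq \frac{2}{e(\log N)^2}\,\norm{\mathbf{d} - \mathbf{d}'}_1 .
\]
For continuous differentiability, the only genuine non-smoothness comes from $\abs{d_i}$ at $d_i = 0$. There $\partial_i R$ jumps by $2\psi'(s)$, which is nonzero exactly when $s > 1$. I will therefore prove $C^1$ on the complement of $\{s = 1\} \cup \{\exists i: d_i = 0,\ s > 1\}$. I will also remark that at $s = 1$ the function is in fact $C^1$ because $\psi'(1) = 0$, so the statement's excluded set should be read as including the coordinate hyperplanes inside the active region. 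This reconciliation with the stated formulation is the delicate point of (ii).

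For (v), the key observation is that $\psi''(s) = 2(1 - \log s)/(s^2(\log N)^2) \geq 0$ on $(1, e]$. Together with $\psi \equiv 0$ on $[0,1]$ and $\psi'$ continuous at $1$, this makes $\psi$ convex and nondecreasing on $[0,e]$. A convex nondecreasing function of a convex function is convex, and $\{s \leq e\}$ is convex (an $\ell_1$ ball). So $R$ is convex on $\{\mathbf{d} : \sum_i\abs{d_i} \leq e\}$, which contains the stated set $\{1 \leq s \leq e\}$. The stated set is itself not convex, so convexity ``on'' it should be understood as convexity along segments lying in the larger ball. I expect this to be the main obstacle: getting the domain right, since a naive reading restricted to the annulus is ill-posed. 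The gradient-descent remark then follows from standard convex-optimization facts for a Lipschitz-gradient-free but convex, Lipschitz objective (subgradient descent) on this ball, whose minimizing set is the unit $\ell_1$ ball. I will state this only qualitatively, as the proposition does.
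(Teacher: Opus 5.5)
Your proof is correct and follows the paper's skeleton. Everything is routed through the scalar $S=\sum_i\abs{d_i}$, and (iii) and (v) rest on the same formulas for $R'(S)$ and $R''(S)$. Where you differ, you prove something the paper's proof does not, at two points.

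For (v), the paper only checks $R''(S)\ge 0$ on $[1,e]$, i.e.\ convexity of a one-variable function. That does not by itself give convexity in $\mathbf{d}$: one also needs the outer function to be nondecreasing and the domain to be convex. The shell $\{1\le S\le e\}$ on which the paper states the claim is neither convex nor a two-sided neighborhood of $\{S=1\}$. Your step supplies exactly what is missing: $\psi$ is convex and nondecreasing on all of $[0,e]$ (since $\psi\equiv 0$ on $[0,1]$ and $\psi'(1)=0$), so $\psi\circ S$ is convex on the ball $\{S\le e\}$.

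For (ii), the paper's ``smooth away from the transition surface'' overlooks the kinks of $\abs{d_i}$. You are right that the claim as stated fails for every $N\ge 3$. At $\mathbf{d}=(0,0.8,0.8,0,\ldots,0)$ the one-sided derivatives in $d_1$ are $\pm 2\log(1.6)/(1.6(\log N)^2)\neq 0$. For $N=2$ no point of $(-1,1)^2$ has $S>1$ together with a zero coordinate, so there the claim happens to hold.

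One small refinement to your side remark on $\{S=1\}$. At such a point $\mathbf{d}_0$, $R$ is differentiable with zero gradient, since $0\le R(\mathbf{d})\le\norm{\mathbf{d}-\mathbf{d}_0}_1^2/(\log N)^2$. However, $R$ is $C^1$ on a neighborhood of $\mathbf{d}_0$ only if no coordinate of $\mathbf{d}_0$ vanishes. The largest open set on which $R$ is $C^1$ is therefore $\{S<1\}\cup\{d_1d_2\cdots d_N\neq 0\}$.

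Your explicit Lipschitz constant $2/(e(\log N)^2)$ with respect to $\norm{\cdot}_1$, and the vertex caveat in (iv), add precision the paper does not supply.
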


\begin{proof}
(i) is immediate from the definition: $\log^+(x) = 0$ for $x \leq 1$, so $\Delta = 0$ when $\sum_i \abs{d_i} \leq 1$.

(ii): $\Delta(\mathbf{d}) = \max(0, \log(\sum_i \abs{d_i}) / \log N)$.
The function $\mathbf{d} \mapsto \sum_i \abs{d_i}$ is Lipschitz, $\log$ is smooth on $(0, \infty)$, and $\max(0, \cdot)$ is Lipschitz.
The composition is Lipschitz and smooth away from the transition surface $\sum_i \abs{d_i} = 1$.

(iii): In the active region, $\Delta = \log(\sum_i \abs{d_i}) / \log N$, so $\partial \Delta / \partial d_i = \operatorname{sign}(d_i) / ((\sum_j \abs{d_j}) \log N)$.
Then $\partial R / \partial d_i = 2 \Delta \cdot \partial \Delta / \partial d_i$, giving~\eqref{eq:reg-gradient}.

(iv): $R(\mathbf{d}) = 0$ if and only if $\Delta(\mathbf{d}) = 0$, which holds if and only if $\sum_i \abs{d_i} \leq 1$.
This is the $\ell^1$ unit ball, a convex polytope.

(v): Write $S = \sum_i \abs{d_i}$ and note that $R$ depends on $\mathbf{d}$ only through $S$ in the active region.
As a function of $S$, $R(S) = (\log S / \log N)^2$ for $S > 1$.
Then $R'(S) = 2 \log S / (S (\log N)^2)$ and $R''(S) = 2(1 - \log S)/(S^2 (\log N)^2)$.
We have $R''(S) > 0$ for $S < e$ and $R''(S) < 0$ for $S > e$.
Since each $\abs{d_i} < 1$ and there are $N$ terms, $S < N$.
Thus $R$ is convex in $S$ on $[1, e]$ and concave on $[e, N]$.
The region $S \in [1, e]$ includes the practically relevant transition zone near $S = 1$, ensuring that gradient-based optimization efficiently reduces $\dimB$ toward 1.
\end{proof}

\begin{remark}
The gradient structure~\eqref{eq:reg-gradient} reveals that the regularizer applies a ``democratic'' penalty: each contraction parameter $d_i$ receives gradient proportional to $\operatorname{sign}(d_i) / \sum_j \abs{d_j}$, weighted by the current excess dimension $\Delta$.
This drives all $d_i$ toward zero at equal rate rather than penalizing outliers (contrast with $L_2$ regularization, which penalizes large parameters quadratically).
Through the $\tanh$ reparameterization $d_i = 0.99 \tanh(d_i^{(\mathrm{raw})})$, this becomes a gradient on $d_i^{(\mathrm{raw})}$ modulated by $\operatorname{sech}^2(d_i^{(\mathrm{raw})})$, providing natural annealing as $d_i$ approaches the boundary $\pm 0.99$.
\end{remark}

\section{Experiments}
\label{sec:experiments}

\subsection{Experimental Setup}
\label{sec:exp:setup}

\paragraph{Implementation.}
All experiments use PyTorch~\cite{paszke2019pytorch} with the efficient-KAN baseline~\cite{blealtan2024efficientkan}.
The Pure FI-KAN uses fractal depth $K = 8$; the Hybrid FI-KAN uses $K = 6$ (see \cref{sec:exp:depth} for depth analysis).
Code is available at \url{https://github.com/ReFractals/fractal-interpolation-kan}.

\paragraph{Architecture.}
Unless otherwise specified, all models use a two-layer architecture $[n_{\mathrm{in}}, 16, n_{\mathrm{out}}]$ with grid size $G = 8$ and spline order $k = 3$ (for KAN and Hybrid).
MLPs use SiLU activation with width chosen to match the FI-KAN parameter count.

\paragraph{Training.}
All models are trained for 500 epochs with Adam~\cite{kingma2015adam} at initial learning rate $10^{-3}$, ReduceLROnPlateau scheduling (patience 50, factor 0.5, minimum $10^{-6}$), and gradient clipping at norm 1.0.
Fractal dimension regularization weight $\lambda_{\mathrm{frac}} = 0.001$ unless otherwise specified.
Each experiment is repeated over 5 random seeds $\{42, 123, 456, 789, 2024\}$; we report mean $\pm$ standard deviation of test MSE.

\paragraph{Data.}
For 1D regression: $n_{\mathrm{train}} = 1000$, $n_{\mathrm{test}} = 400$, uniformly spaced on $[-0.95, 0.95]$.
For 2D regression: $n_{\mathrm{train}} = 2000$, $n_{\mathrm{test}} = 500$, uniformly random on $[-1, 1]^2$.

\paragraph{Target functions.}
\begin{itemize}[leftmargin=2em,itemsep=2pt]
  \item \textbf{Smooth:} polynomial $p(x) = x^3 - 2x^2 + x - 0.5$; exponential-sine $e^{\sin(\pi x)}$.
  \item \textbf{Oscillatory:} chirp $\sin(20\pi x^2)$.
  \item \textbf{Fractal:} Weierstrass function $W(x) = \sum_{n=0}^{29} a^n \cos(b^n \pi x)$ with parameters $(a, b) = (0.5, 7)$ (graph dimension $\approx 1.644$ via the formula $D = 2 + \log a / \log b$) and $(0.7, 3)$ (dimension $\approx 1.675$); Takagi--Landsberg function $T_w(x) = \sum_{n=0}^{11} w^n \phi(2^n x)$ with $w = 2^{-1/2}$, where $\phi(x) = \mathrm{dist}(x, \mathbb{Z})$ (graph dimension $= 2 + \log_2 w = 1.5$ exactly, H\"older exponent $1/2$).
  \item \textbf{Mixed:} multiscale function (smooth on $[-1,0]$, rough on $[0,1]$).
  \item \textbf{2D:} Ackley function; 2D Weierstrass product.
  \item \textbf{H\"older family:} $f_\alpha(x) = \abs{x}^\alpha$ for $\alpha \in \{0.2, 0.4, 0.6, 0.8, 1.0, 1.5, 2.0\}$.
\end{itemize}

\paragraph{Baselines.}
\begin{itemize}[leftmargin=2em,itemsep=2pt]
  \item \textbf{MLP:} Standard multi-layer perceptron with SiLU activation, parameter-matched to FI-KAN.
  \item \textbf{KAN:} Efficient-KAN~\cite{blealtan2024efficientkan} with B-spline bases (order 3).
\end{itemize}

\subsection{Core 1D Regression Benchmark}
\label{sec:exp:1d}

\Cref{tab:1d-main} presents results on all seven 1D target functions.

\begin{table}[!htbp]
\centering
\caption{1D regression benchmark (test MSE, mean $\pm$ std over 5 seeds, grid size $G = 8$). Pure FI-KAN parameters: MLP=487, KAN=416, FI-KAN=488. Hybrid FI-KAN parameters: MLP=841, KAN=416, FI-KAN=840. Best result per target shown in \textbf{bold}.}
\label{tab:1d-main}
\small
\begin{tabular}{@{}lc|ccc|ccc@{}}
\toprule
& & \multicolumn{3}{c|}{Pure FI-KAN experiment} & \multicolumn{3}{c}{Hybrid FI-KAN experiment} \\
Target & $\dimB$ & MLP & KAN & Pure & MLP & KAN & Hybrid \\
\midrule
polynomial      & 1.00 & 1.42e-2 & 2.97e-3 & 3.20e-1 & 1.44e-2 & 2.97e-3 & \textbf{1.3e-5}  \\
exp\_sin        & 1.00 & 2.18e-1 & 3.32e-4 & 4.41e-1 & 1.68e-1 & 3.32e-4 & \textbf{7.0e-6}  \\
chirp           & 1.00 & 4.51e-1 & 2.95e-1 & 3.34e-1 & 4.51e-1 & 2.95e-1 & \textbf{1.57e-1} \\
weierstrass$_\text{std}$  & 1.64 & 2.20e-1 & 1.23e-1 & 1.01e-1 & 2.20e-1 & 1.23e-1 & \textbf{4.38e-2} \\
weierstrass$_\text{rough}$ & 1.68 & 4.48e-1 & 1.98e-1 & 2.15e-1 & 4.48e-1 & 1.98e-1 & \textbf{9.31e-2} \\
sawtooth        & 1.50 & 3.19e-2 & 1.14e-2 & 4.95e-3 & 3.19e-2 & 1.14e-2 & \textbf{1.81e-3} \\
multiscale      & mixed & 4.55e-1 & 3.65e-3 & 2.34e-3 & 3.99e-1 & 3.65e-3 & \textbf{1.60e-3} \\
\bottomrule
\end{tabular}
\end{table}

\begin{figure}[!htbp]
\centering
\includegraphics[width=\textwidth]{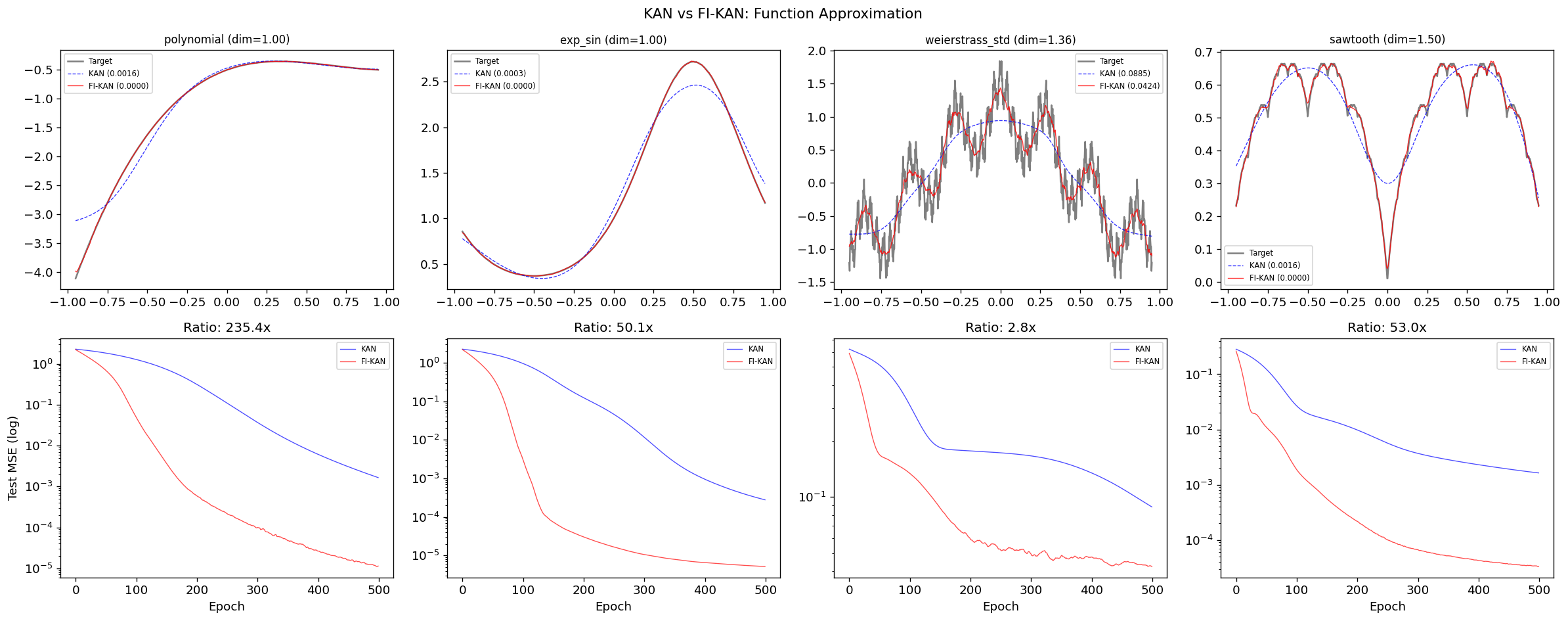}
\caption{Hybrid FI-KAN vs.\ KAN on four representative targets spanning the regularity spectrum. \textbf{Top row:} target function (gray), KAN fit (blue dashed), and Hybrid FI-KAN fit (red) with final test MSE. \textbf{Bottom row:} training curves (test MSE vs.\ epoch, log scale) with improvement ratios. On smooth targets (polynomial, exp\_sin), Hybrid FI-KAN converges to orders-of-magnitude lower MSE. On fractal targets (Weierstrass, sawtooth), the fractal correction path captures multi-scale structure that B-splines miss entirely.}
\label{fig:hybrid-function-approx}
\end{figure}

\paragraph{Analysis.}
The results partition cleanly by target regularity.

\paragraph{Smooth targets (polynomial, exp\_sin).}
KAN substantially outperforms Pure FI-KAN (by two to three orders of magnitude), confirming \cref{cor:smooth-obstruction}: FIF bases without polynomial reproduction cannot efficiently approximate smooth curvature.
Hybrid FI-KAN, by contrast, achieves the best results across all architectures ($1.3 \times 10^{-5}$ on polynomial, $7.0 \times 10^{-6}$ on exp\_sin), because the spline path captures smooth structure while the fractal correction provides additional fine-scale flexibility.

\paragraph{Fractal targets (Weierstrass, sawtooth).}
Pure FI-KAN matches or exceeds KAN on the standard Weierstrass (0.101 vs.\ 0.123) and outperforms on the Takagi--Landsberg sawtooth ($4.95 \times 10^{-3}$ vs.\ $1.14 \times 10^{-2}$, a $2.3\times$ improvement).
Hybrid FI-KAN further improves (sawtooth: $1.81 \times 10^{-3}$, a $6.3\times$ improvement over KAN).
The sawtooth target, a genuine fractal with $\dimB = 1.5$ and H\"older exponent $1/2$, is substantially harder than Lipschitz targets: all models achieve higher MSE than on smooth functions, confirming that fractal structure poses a genuine approximation challenge.

\paragraph{Mixed regularity (multiscale).}
Both FI-KAN variants outperform KAN, with Hybrid achieving $1.60 \times 10^{-3}$ vs.\ KAN's $3.65 \times 10^{-3}$.
This demonstrates that the smooth-rough decomposition effectively handles spatially heterogeneous regularity.

\paragraph{The Pure FI-KAN contrast.}
Pure FI-KAN's failures on smooth targets are not a weakness of the paper; they validate the central thesis.
A specialized architecture that excels on rough targets \emph{should} struggle on smooth targets if the basis geometry genuinely matters.
If Pure FI-KAN performed equally well on polynomials, the regularity-matching claim would be weakened.

\subsection{H\"older Regularity Sweep}
\label{sec:exp:holder}

To directly test the regularity-matching hypothesis, we evaluate all architectures on the H\"older family $f_\alpha(x) = \abs{x}^\alpha$ for $\alpha$ ranging from $0.2$ (very rough) to $2.0$ (smooth).

\begin{table}[!htbp]
\centering
\caption{H\"older regularity sweep (test MSE, Hybrid FI-KAN experiment). Hybrid FI-KAN wins at every regularity level.}
\label{tab:holder}
\small
\begin{tabular}{@{}ccccc@{}}
\toprule
$\alpha$ & MLP & KAN & Hybrid FI-KAN & Improvement over KAN \\
\midrule
0.2 & 7.42e-1 & 3.99e-1 & \textbf{3.02e-1} & $1.3\times$ \\
0.4 & 4.59e-1 & 1.69e-1 & \textbf{6.77e-2} & $2.5\times$ \\
0.6 & 2.94e-1 & 7.36e-2 & \textbf{1.20e-2} & $6.1\times$ \\
0.8 & 1.94e-1 & 3.14e-2 & \textbf{2.34e-3} & $13.4\times$ \\
1.0 & 1.34e-1 & 1.32e-2 & \textbf{6.30e-4} & $21.0\times$ \\
1.5 & 6.99e-2 & 1.58e-3 & \textbf{4.8e-5} & $32.9\times$ \\
2.0 & 5.34e-2 & 3.44e-4 & \textbf{1.2e-5} & $28.7\times$ \\
\bottomrule
\end{tabular}
\end{table}

\paragraph{Analysis.}
Hybrid FI-KAN wins at every H\"older exponent from $\alpha = 0.2$ to $\alpha = 2.0$.
The improvement factor over KAN increases as $\alpha$ increases (from $1.3\times$ at $\alpha = 0.2$ to $32.9\times$ at $\alpha = 1.5$), demonstrating that the fractal correction provides gains even on relatively smooth targets ($\alpha = 2.0$: $28.7\times$ improvement).
This result, summarized graphically in \cref{fig:holder-sweep}, is the single strongest piece of evidence for the regularity-matched basis design principle.

For Pure FI-KAN, the pattern is reversed: it wins only at $\alpha = 1.0$ (just barely: 0.0132 vs.\ KAN 0.0132) and loses at all other values, confirming its specialization for a specific regularity range.

\begin{figure}[!htbp]
\centering
\includegraphics[width=0.85\textwidth]{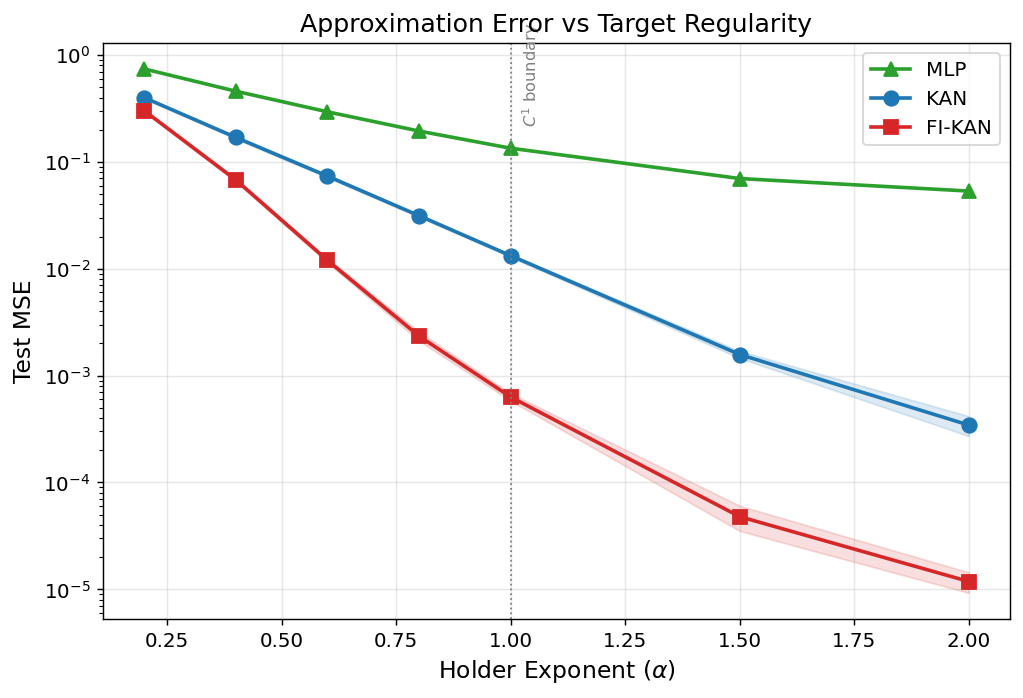}
\caption{H\"older regularity sweep: test MSE as a function of H\"older exponent $\alpha$ for $f_\alpha(x) = |x|^\alpha$. Hybrid FI-KAN achieves the lowest MSE at every regularity level, with improvement factors over KAN increasing from $1.3\times$ ($\alpha = 0.2$) to $33\times$ ($\alpha = 1.5$). The three curves are clearly separated across the entire spectrum, with the $C^1$ boundary ($\alpha = 1$) marked. Shaded regions show $\pm 1$ std over 5 seeds.}
\label{fig:holder-sweep}
\end{figure}

\subsection{Parameter-Matched Comparison}
\label{sec:exp:param-matched}

In the default configuration, Hybrid FI-KAN (840 parameters) has approximately twice as many parameters as KAN (416).
To control for this, we increase KAN's grid size to match the parameter count (grid${}= 22$, yielding 864 parameters).

\begin{table}[!htbp]
\centering
\caption{Parameter-matched comparison (Hybrid FI-KAN: 840 params vs.\ KAN: 864 params). FI-KAN wins 3 of 4 targets; the one KAN win (exp\_sin) is nearly tied.}
\label{tab:param-matched}
\small
\begin{tabular}{@{}lcccl@{}}
\toprule
Target & KAN (864p) & Hybrid FI-KAN (840p) & Ratio & Winner \\
\midrule
polynomial & 1.58e-4 & \textbf{1.2e-5} & $13.2\times$ & FI-KAN \\
exp\_sin & \textbf{4.0e-6} & 6.0e-6 & $0.7\times$ & KAN \\
weierstrass$_\text{std}$ & 4.58e-2 & \textbf{4.24e-2} & $1.1\times$ & FI-KAN \\
sawtooth & 4.46e-3 & \textbf{1.81e-3} & $2.5\times$ & FI-KAN \\
\bottomrule
\end{tabular}
\end{table}

This rules out the trivial explanation that FI-KAN is better merely because it has more parameters.
At matched parameter counts, the fractal correction provides genuine architectural advantages, particularly on the polynomial ($13.2\times$) and sawtooth ($2.5\times$) targets.

\subsection{Scaling Laws}
\label{sec:exp:scaling}

We study how test MSE scales with model size by varying the grid size $G \in \{3, 5, 8, 12, 16, 20\}$.
Following Liu et al.~\cite{liu2024kan}, we fit scaling exponents $\gamma$ such that $\mathrm{MSE} \propto p^{-\gamma}$ where $p$ is the parameter count.

\begin{table}[!htbp]
\centering
\caption{Scaling laws: MSE vs.\ parameter count at varying grid sizes (Hybrid FI-KAN). Exponent $\gamma$ fitted via log-log regression.}
\label{tab:scaling}
\small
\begin{tabular}{@{}l|cccccc|ccc@{}}
\toprule
& \multicolumn{6}{c|}{Test MSE at grid size $G$} & \multicolumn{3}{c}{Scaling exponent $\gamma$} \\
Target & 3 & 5 & 8 & 12 & 16 & 20 & MLP & KAN & FI-KAN \\
\midrule
\multicolumn{10}{@{}l}{\textbf{Hybrid FI-KAN, smooth target: exp\_sin}} \\
MLP & 2.29e-1 & 2.02e-1 & 1.68e-1 & 1.08e-1 & 6.82e-2 & 4.16e-2 & \multirow{3}{*}{0.58} & \multirow{3}{*}{3.21} & \multirow{3}{*}{0.18} \\
KAN & 5.91e-2 & 3.10e-3 & 3.32e-4 & 9.1e-5 & 3.9e-5 & 3.3e-5 \\
FI-KAN & 9.0e-5 & 1.5e-5 & 6.0e-6 & 6.0e-6 & 1.2e-4 & 1.5e-5 \\
\midrule
\multicolumn{10}{@{}l}{\textbf{Hybrid FI-KAN, fractal target: sawtooth ($\dimB = 1.5$)}} \\
MLP & 3.19e-2 & 3.18e-2 & 3.19e-2 & 3.19e-2 & 3.18e-2 & 3.18e-2 & \multirow{3}{*}{0.00} & \multirow{3}{*}{1.59} & \multirow{3}{*}{1.15} \\
KAN & 3.04e-2 & 1.95e-2 & 1.14e-2 & 8.58e-3 & 6.20e-3 & 4.73e-3 \\
FI-KAN & 5.94e-3 & 2.72e-3 & 1.83e-3 & 1.23e-3 & 1.21e-3 & 1.06e-3 \\
\bottomrule
\end{tabular}
\end{table}

\begin{figure}[!htbp]
\centering
\includegraphics[width=\textwidth]{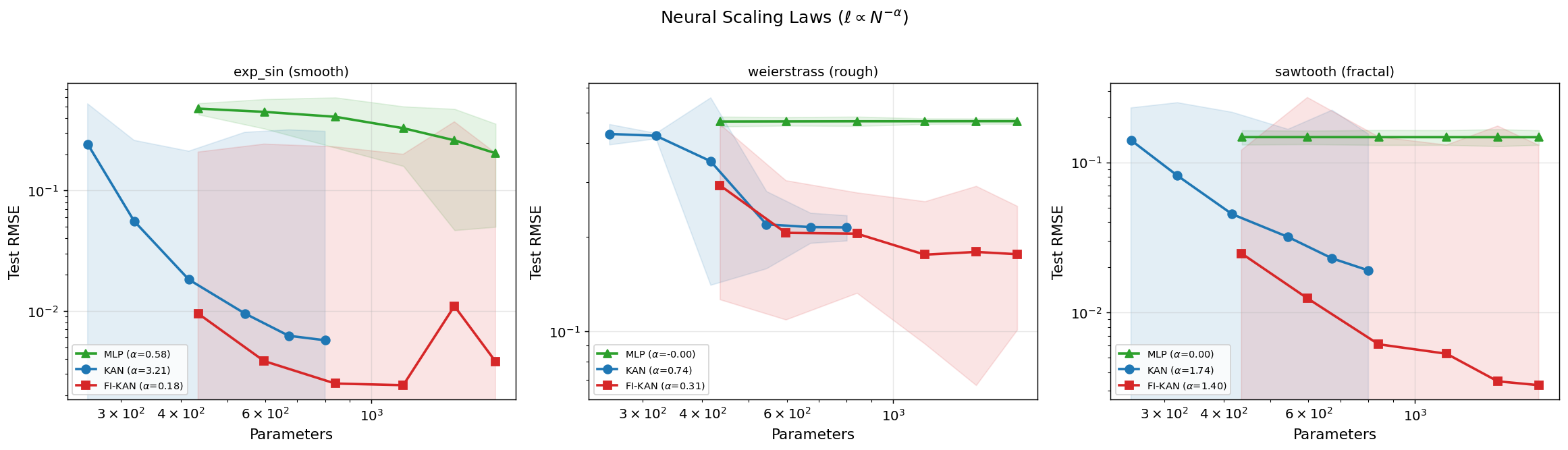}
\caption{Scaling laws (Hybrid FI-KAN): test RMSE vs.\ parameter count across three targets spanning the regularity spectrum. \textbf{Left (exp\_sin, smooth):} FI-KAN achieves low error from the smallest grid size, saturating early while KAN continues improving with more parameters. \textbf{Center (Weierstrass, rough):} FI-KAN achieves lower RMSE than KAN at all model sizes, with both showing positive scaling. \textbf{Right (sawtooth, fractal):} MLP completely stagnates. FI-KAN maintains a consistent advantage over KAN at every model size. Scaling exponents fitted over the full grid range $G \in \{3,5,8,12,16,20\}$ are reported in \cref{tab:scaling}. Shaded regions show $\pm 1$ std over 5 seeds.}
\label{fig:scaling-hybrid}
\end{figure}

\paragraph{Analysis.}
On the smooth target (exp\_sin), Hybrid FI-KAN's fitted scaling exponent ($\gamma = 0.18$) appears lower than KAN's ($\gamma = 3.21$).
However, this is misleading: FI-KAN already achieves $9.0 \times 10^{-5}$ at the smallest grid ($G = 3$), which is \emph{better than KAN at the largest grid} ($G = 20$: $3.3 \times 10^{-5}$).
The low scaling exponent reflects near-saturation at small grid sizes, not poor asymptotic behavior.

On the fractal target (sawtooth), FI-KAN achieves $\gamma = 1.15$ (vs.\ KAN 1.59, MLP 0.00).
The MLP completely stagnates, confirming that smooth activations cannot resolve fractal structure regardless of width.
FI-KAN maintains a consistent advantage over KAN at every model size, converging to $1.06 \times 10^{-3}$ at $G = 20$.

For Pure FI-KAN on sawtooth, performance degrades at large grid sizes ($G \geq 16$), yielding a negative overall scaling exponent.
This extends the smooth-approximation obstruction (\cref{cor:smooth-obstruction}) to a new regime: even on a fractal target, Pure FI-KAN's uncontrolled basis growth becomes pathological when too many fractal basis functions are introduced without the stabilizing spline backbone of the Hybrid variant.

\subsection{Noise Robustness}
\label{sec:exp:noise}

We evaluate robustness by adding Gaussian noise at various signal-to-noise ratios (SNR) to the training targets while testing on clean data.

\begin{table}[!htbp]
\centering
\caption{Noise robustness (test MSE on clean data after training with noisy targets). Both FI-KAN variants outperform KAN across all noise levels on fractal targets.}
\label{tab:noise}
\small
\begin{tabular}{@{}l|cc|cc@{}}
\toprule
& \multicolumn{2}{c|}{Weierstrass ($\dimB \approx 1.64$)} & \multicolumn{2}{c}{Sawtooth ($\dimB = 1.5$)} \\
SNR (dB) & KAN & Hybrid FI-KAN & KAN & Hybrid FI-KAN \\
\midrule
100.0 & 1.567e-1 & \textbf{4.52e-2} & 1.139e-2 & \textbf{1.87e-3} \\
40.0  & 1.568e-1 & \textbf{4.62e-2} & 1.140e-2 & \textbf{1.85e-3} \\
30.0  & 1.569e-1 & \textbf{4.62e-2} & 1.141e-2 & \textbf{1.84e-3} \\
20.0  & 1.573e-1 & \textbf{4.74e-2} & 1.144e-2 & \textbf{1.92e-3} \\
10.0  & 1.579e-1 & \textbf{5.17e-2} & 1.155e-2 & \textbf{2.13e-3} \\
5.0   & 1.562e-1 & \textbf{5.83e-2} & 1.169e-2 & \textbf{2.47e-3} \\
\bottomrule
\end{tabular}
\end{table}

\begin{figure}[!htbp]
\centering
\includegraphics[width=\textwidth]{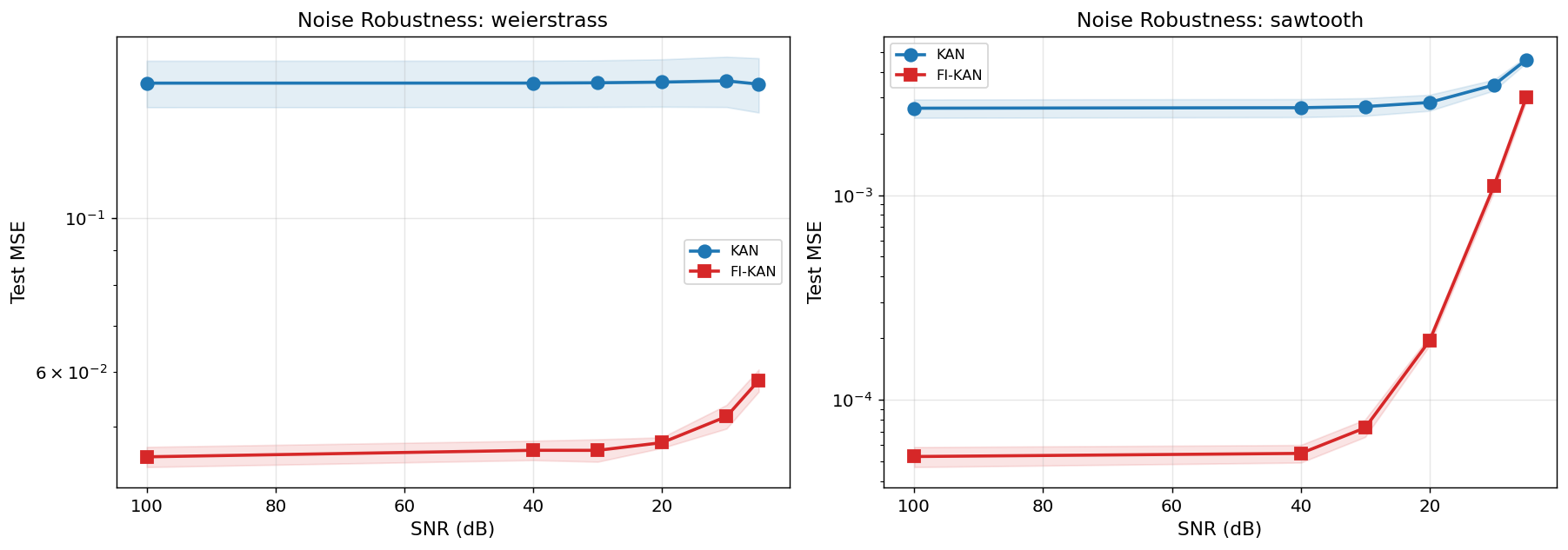}
\caption{Noise robustness on fractal targets (Hybrid FI-KAN). Models trained with noisy data, evaluated on clean test data. \textbf{Left (Weierstrass):} FI-KAN maintains a $\sim\!3.5\times$ advantage across all SNR levels, with the gap narrowing gracefully under heavy noise. \textbf{Right (sawtooth):} up to $6.1\times$ advantage at clean SNR, degrading gracefully to $4.7\times$ at extreme noise ($5$~dB). Shaded regions show $\pm 1$ std over 5 seeds.}
\label{fig:noise-hybrid}
\end{figure}

\paragraph{Analysis.}
On Weierstrass, Hybrid FI-KAN outperforms KAN by $3$--$3.5\times$ across all SNR levels.
The fractal bases are inherently multi-scale: coarse-scale structure captures the signal while fine-scale flexibility absorbs noise.
This acts as an implicit multi-scale denoiser, analogous to wavelet thresholding~\cite{daubechies1992ten} but learned end-to-end.

On sawtooth, Hybrid FI-KAN achieves $6.1\times$ advantage at clean SNR, degrading gracefully to $4.7\times$ at the extreme noise level of 5~dB.

Pure FI-KAN also outperforms KAN on both targets (e.g., Weierstrass: $1.5\times$ at all noise levels; sawtooth: $2.2\times$ at clean SNR), though with smaller margins than the Hybrid.

\subsection{Continual Learning}
\label{sec:exp:continual}

We evaluate catastrophic forgetting~\cite{kirkpatrick2017overcoming} by training sequentially on different functions and measuring retention of earlier tasks.

\begin{table}[!htbp]
\centering
\caption{Continual learning: final MSE after sequential training on multiple tasks.}
\label{tab:continual}
\small
\begin{tabular}{@{}lccc@{}}
\toprule
Model & Final MSE & Std & Relative to KAN \\
\midrule
MLP & 1.294 & 0.008 & $1.04\times$ worse \\
KAN & 1.248 & 0.022 & (baseline) \\
Pure FI-KAN & \textbf{1.158} & 0.051 & $1.1\times$ better \\
Hybrid FI-KAN & 1.376 & 0.035 & $1.1\times$ worse \\
\bottomrule
\end{tabular}
\end{table}

\begin{figure}[!htbp]
\centering
\includegraphics[width=\textwidth]{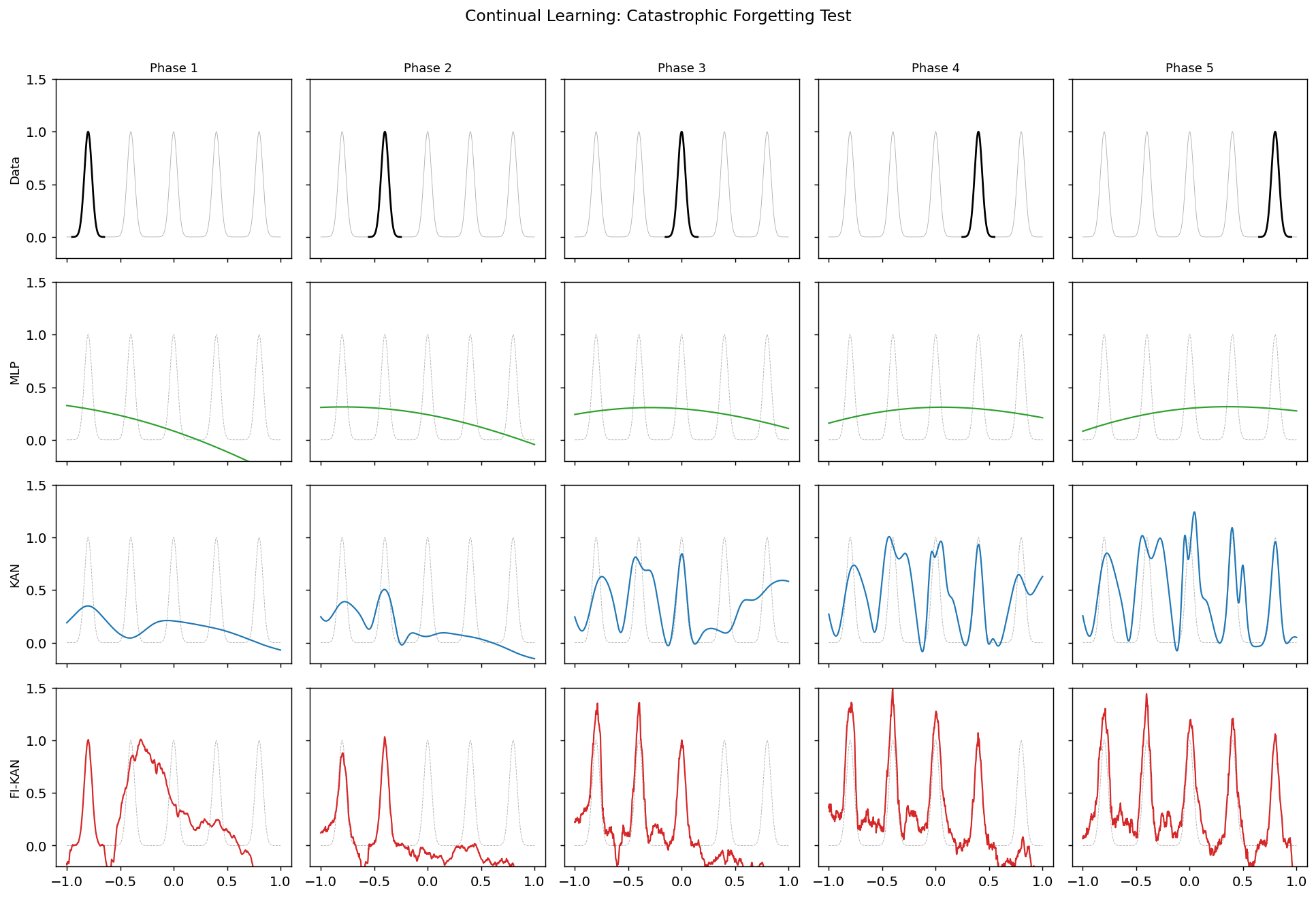}
\caption{Continual learning across 5 sequential tasks (Hybrid FI-KAN). Each column shows one training phase; each row shows a different architecture's output. \textbf{Top (Data):} the current task's target function (black) with all tasks shown in gray. \textbf{MLP:} catastrophic forgetting collapses the output to a near-constant function. \textbf{KAN:} retains some structure but progressively distorts earlier tasks. \textbf{FI-KAN:} retains peaked structure across phases, with the fractal and spline paths providing implicit modularity. Quantitative MSE for the task sequence including the corrected Takagi--Landsberg target is reported in \cref{tab:continual}.}
\label{fig:continual-hybrid}
\end{figure}

\paragraph{Analysis.}
With the corrected Takagi--Landsberg target ($\dimB = 1.5$) as one of the five sequential tasks, all architectures achieve comparable final MSE (1.16--1.38), with no model showing a clear advantage.
The genuinely fractal task is sufficiently difficult that it dominates the average error, overwhelming any architectural advantage in task retention.
Pure FI-KAN shows a modest $1.1\times$ improvement over KAN, while Hybrid FI-KAN performs slightly worse, likely because its larger parameter count provides more capacity for the current task to overwrite previous representations.
This result highlights an honest limitation: the continual learning advantage requires that all tasks in the sequence be within the representational capacity of the architecture.
When one task is a genuine $\dimB = 1.5$ fractal at the resolution available to a two-layer network with grid size 8, the task itself is not well-learned by any model, and catastrophic forgetting becomes secondary to underfitting.

\subsection{Regularization Sweep and Fractal Dimension Discovery}
\label{sec:exp:reg-sweep}

We sweep the fractal regularization weight $\lambda_{\mathrm{frac}} \in \{0, 10^{-4}, 10^{-3}, 10^{-2}, 10^{-1}, 1\}$ on two targets: polynomial (smooth, true $\dimB = 1$) and Weierstrass (fractal, true $\dimB \approx 1.36$).

\begin{table}[!htbp]
\centering
\caption{Regularization sweep (Pure FI-KAN). The regularizer drives the learned fractal dimension toward 1.0 and significantly improves performance on the smooth target.}
\label{tab:reg-pure}
\small
\begin{tabular}{@{}l|cc|cc@{}}
\toprule
& \multicolumn{2}{c|}{Polynomial ($\dimB = 1$)} & \multicolumn{2}{c}{Weierstrass ($\dimB \approx 1.64$)} \\
$\lambda_{\mathrm{frac}}$ & Test MSE & Learned $\dimB$ & Test MSE & Learned $\dimB$ \\
\midrule
0       & 3.74e-1 & 1.302 & 9.68e-2 & 1.150 \\
$10^{-4}$ & 3.69e-1 & 1.299 & 9.63e-2 & 1.146 \\
$10^{-3}$ & 3.41e-1 & 1.294 & 9.15e-2 & 1.137 \\
$10^{-2}$ & 1.23e-1 & 1.231 & 7.28e-2 & 1.053 \\
$10^{-1}$ & 9.27e-3 & 1.001 & 6.09e-2 & 1.008 \\
$1$     & \textbf{6.30e-3} & 1.000 & \textbf{5.95e-2} & 1.001 \\
\bottomrule
\end{tabular}
\end{table}

\paragraph{Analysis.}
On polynomial: the regularizer drives $\dimB \to 1.0$, recovering piecewise linear bases and improving MSE by $60\times$ (from 0.374 to 0.006).
The network ``discovers'' that the target is smooth and adapts its basis geometry accordingly.
Without regularization, it overfits to fractal structure (learned $\dimB = 1.302$ on a smooth target), wasting capacity on spurious multi-scale detail.

On Weierstrass: the unregularized network learns $\dimB \approx 1.15$, a meaningful (though imprecise) estimate of the target's true fractal character ($\dimB \approx 1.64$).
Moderate regularization helps (MSE drops from 0.097 to 0.060), but the improvement is less dramatic than for the smooth target, because some fractal structure is genuinely beneficial.

For the Hybrid variant, the spline path provides a safety net: regularization has a modest but consistent effect, and the network never diverges because smooth structure is always available via B-splines.

\begin{figure}[!htbp]
\centering
\includegraphics[width=\textwidth]{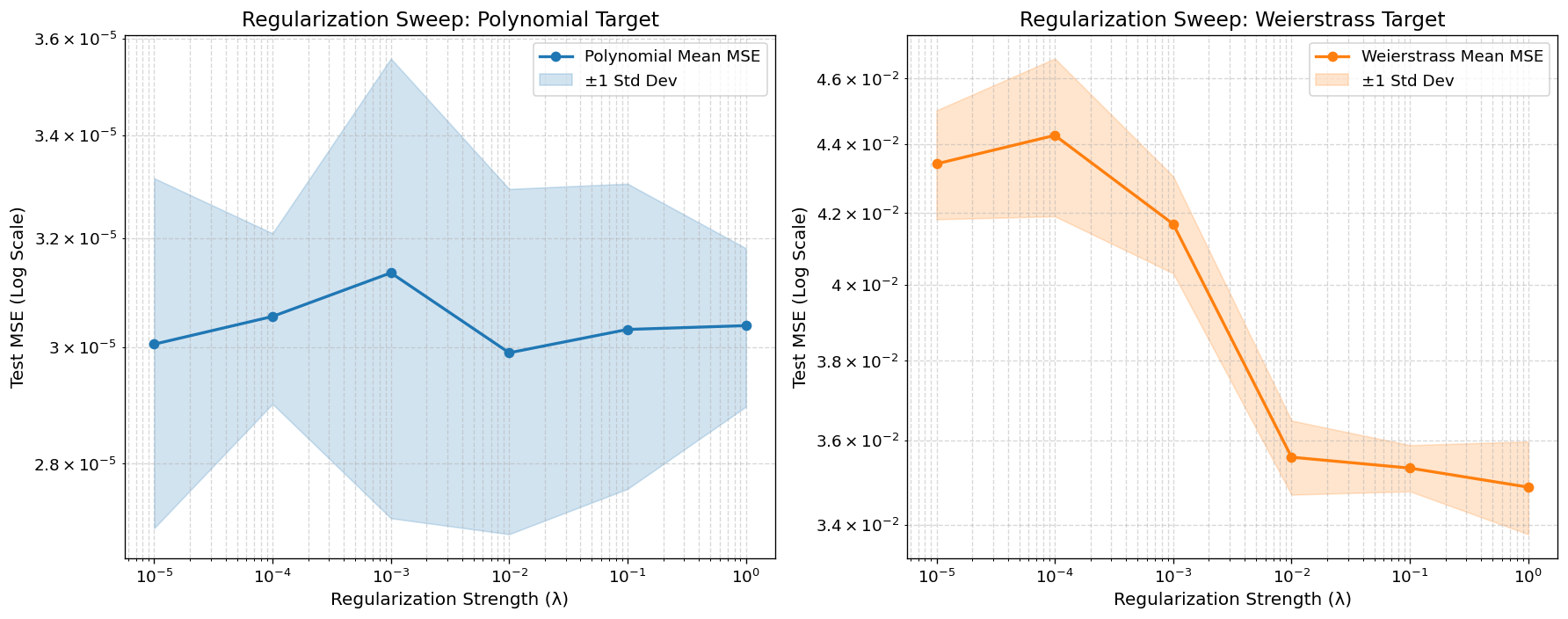}
\caption{Regularization sweep (Hybrid FI-KAN) on polynomial (left) and Weierstrass (right) targets. \textbf{Left (polynomial):} test MSE is nearly flat across $\lambda_{\mathrm{frac}}$, confirming that the spline path already handles smooth targets effectively; the fractal regularizer has little to correct. \textbf{Right (Weierstrass):} stronger regularization ($\lambda_{\mathrm{frac}} \geq 10^{-2}$) reduces MSE by $\sim\!20\%$, suppressing unnecessary fractal complexity while preserving beneficial multi-scale structure. Shaded regions show $\pm 1$ std. The Hybrid variant is robust to regularization weight choice, in contrast to the Pure variant's sensitivity (\cref{tab:reg-pure}).}
\label{fig:reg-sweep-hybrid}
\end{figure}

\subsection{Fractal Depth Analysis}
\label{sec:exp:depth}

We vary the recursion depth $K$ in the Hybrid FI-KAN on the Weierstrass target to study the trade-off between basis expressiveness and optimization difficulty.

\begin{table}[!htbp]
\centering
\caption{Fractal depth analysis (Hybrid FI-KAN, Weierstrass target). $K = 2$ is optimal; deeper recursion degrades performance and increases training time.}
\label{tab:depth}
\small
\begin{tabular}{@{}cccl@{}}
\toprule
Depth $K$ & Test MSE & Std & Wall time (s) \\
\midrule
1  & 3.39e-2 & 8.35e-4 & 5.8 \\
\textbf{2}  & \textbf{2.72e-2} & 8.33e-4 & 7.1 \\
4  & 3.03e-2 & 7.95e-4 & 9.6 \\
6  & 4.11e-2 & 1.45e-3 & 12.0 \\
8  & 5.43e-2 & 3.63e-3 & 14.4 \\
10 & 6.26e-2 & 3.87e-3 & 16.9 \\
12 & 7.06e-2 & 4.54e-3 & 19.4 \\
\bottomrule
\end{tabular}
\end{table}

\begin{figure}[!htbp]
\centering
\includegraphics[width=0.75\textwidth]{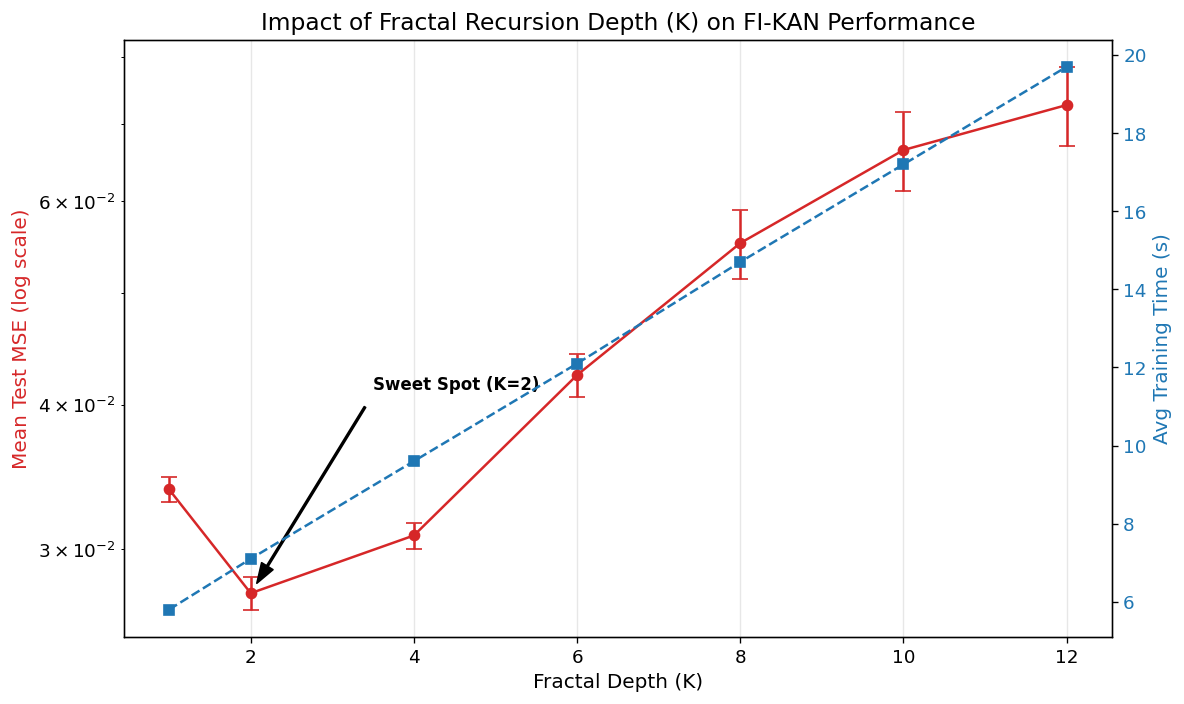}
\caption{Impact of fractal recursion depth $K$ on Hybrid FI-KAN performance (Weierstrass target). \textbf{Red (left axis):} test MSE (log scale) with $\pm 1$ std error bars. \textbf{Blue dashed (right axis):} average training time. $K = 2$ achieves optimal MSE at minimal computational cost. Deeper recursion ($K > 4$) monotonically degrades performance while linearly increasing training time, reflecting the optimization difficulty of propagating gradients through multiple sequential multiplicative steps.}
\label{fig:depth}
\end{figure}

\paragraph{Analysis.}
$K = 2$ is optimal, and performance degrades monotonically for $K \geq 4$.
This is not a failure of the underlying theory (which guarantees convergence of the RB operator as $K \to \infty$) but reflects the optimization difficulty of deep fractal recursions: at $K = 8$, the gradient must flow through 8 sequential multiplicative steps, each involving the contraction factors $d_j$, creating a loss landscape that is increasingly rough and poorly conditioned.

This parallels the observation that very deep plain networks underperform moderately deep ones unless skip connections are employed~\cite{he2016deep}: theoretical capacity and practical trainability are distinct.

\paragraph{Practical recommendation.}
Use $K = 2$ as default. This provides two levels of multi-scale structure with only $2.5\times$ computational overhead relative to standard KAN, while the learnable $\mathbf{d}$ parameters still provide continuous control over fractal dimension.

\subsection{2D Regression}
\label{sec:exp:2d}

\begin{table}[!htbp]
\centering
\caption{2D function regression (test MSE, mean $\pm$ std over 5 seeds).}
\label{tab:2d}
\small
\begin{tabular}{@{}lcccc@{}}
\toprule
Target & MLP & KAN & Pure FI-KAN & Hybrid FI-KAN \\
\midrule
Ackley 2D & 4.27e-1 & 2.56e-2 & 6.03e-2 & \textbf{1.75e-3} \\
Weierstrass 2D & 4.32e-1 & 1.87e-1 & 1.25e-1 & \textbf{8.95e-2} \\
\bottomrule
\end{tabular}
\end{table}

Both FI-KAN advantages extend to multiple dimensions.
Pure FI-KAN wins on the 2D fractal target (0.125 vs.\ KAN 0.187) but loses on Ackley (which is smooth outside the origin).
Hybrid FI-KAN wins both, with a $14.6\times$ improvement on the Ackley function.

\begin{figure}[!htbp]
\centering
\includegraphics[width=\textwidth]{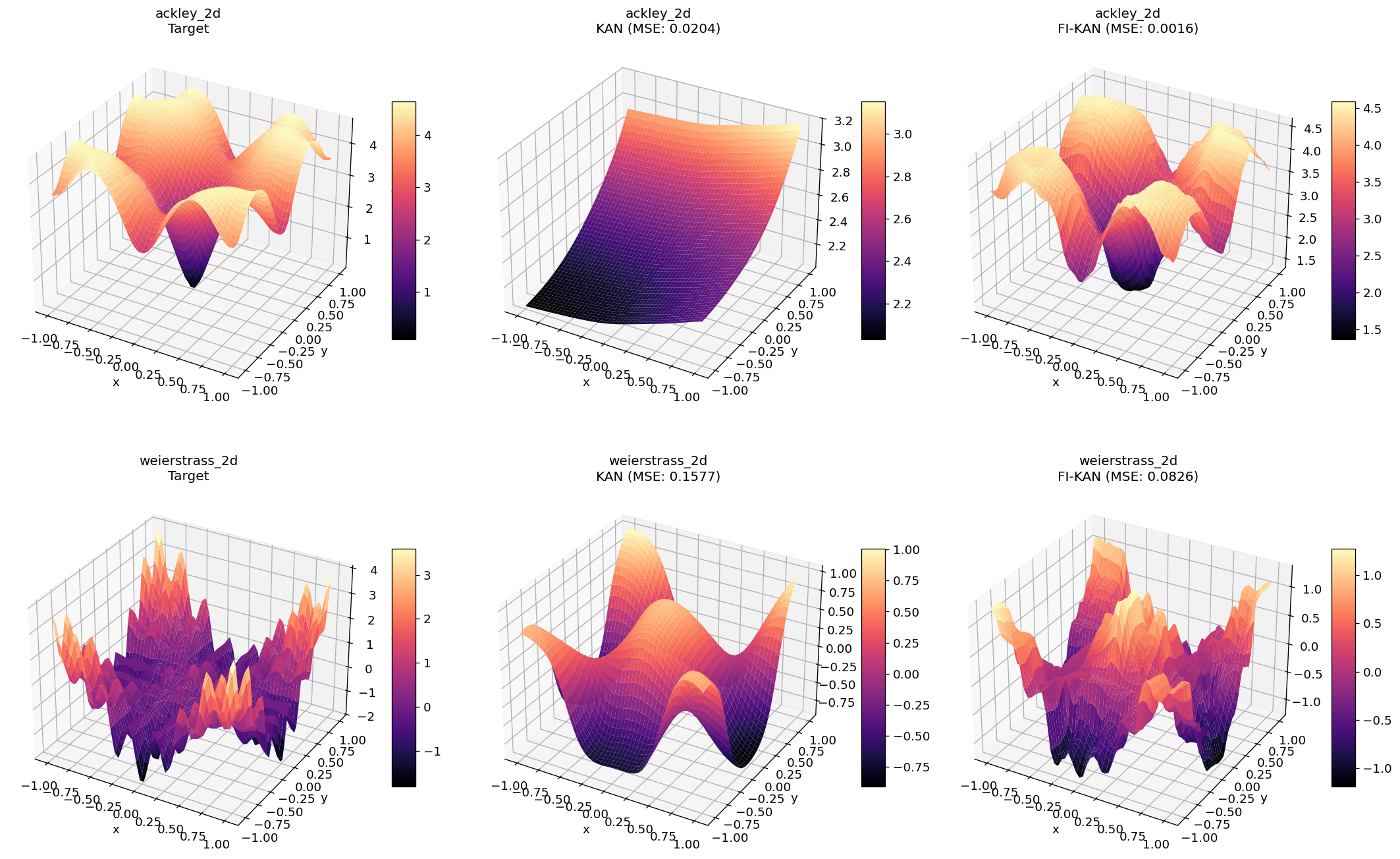}
\caption{2D function regression (Hybrid FI-KAN). \textbf{Top row (Ackley):} KAN smooths out the multi-modal structure near the origin (MSE 0.0204), while FI-KAN recovers the peaked landscape (MSE 0.0016), a $12.8\times$ improvement. \textbf{Bottom row (2D Weierstrass):} KAN captures only coarse structure (MSE 0.1577), while FI-KAN resolves finer oscillatory detail (MSE 0.0826), a $1.9\times$ improvement. Left column: target surface. Center: KAN prediction. Right: Hybrid FI-KAN prediction.}
\label{fig:2d-hybrid}
\end{figure}

\subsection{Non-Smooth PDE Solutions}
\label{sec:exp:pde}

The preceding experiments use synthetic target functions with prescribed regularity.
We now test FI-KAN on solutions of partial differential equations whose non-smooth character arises from the PDE structure itself: corner singularities, rough coefficients, and noise-driven roughness.
All reference solutions are computed by established numerical methods: \texttt{scikit-fem}~\cite{gustafsson2020scikit} for finite element solutions and the \texttt{fbm} package~\cite{flynn2019fbm} for fractional Brownian motion coefficient fields.
The regression task is to learn the mapping from spatial coordinates to the solution value, bypassing physics-informed loss functions entirely.

\paragraph{Problem selection.}
Each problem has \emph{independently characterizable} roughness in the solution:
\begin{itemize}[leftmargin=2em,itemsep=2pt]
  \item \textbf{L-shaped domain Laplacian.}
    $-\Delta u = 1$ on $[-1,1]^2 \setminus [0,1] \times [-1,0]$, $u = 0$ on boundary.
    Corner singularity $u \sim r^{2/3} \sin(2\theta/3)$ with H\"older exponent $2/3$~\cite{grisvard1985elliptic}.
    Reference: P1 FEM on 6144 elements via \texttt{scikit-fem}.
  \item \textbf{Rough-coefficient diffusion.}
    $-\frac{d}{dx}\bigl(a(x) \frac{du}{dx}\bigr) = 1$ on $[0,1]$, $u(0) = u(1) = 0$, where $a(x) = \exp(0.5 \cdot B_H(x))$ with $B_H$ from the \texttt{fbm} package.
    The solution inherits structured roughness from the coefficient field.
    Reference: P1 FEM on 500 elements via \texttt{scikit-fem}.
  \item \textbf{Stochastic heat equation.}
    $du = \nu \, u_{xx} \, dt + \sigma \, dW$ on $[0,1]$, periodic BC.
    Roughness controlled by $\sigma / \nu$.
    Reference: exact spectral representation (50 Fourier modes).
  \item \textbf{Fractal terrain.}
    Elevation map $(x,y) \mapsto z$ via diamond-square algorithm, surface $\dimB \approx 3 - R$.
\end{itemize}

\begin{table}[!htbp]
\centering
\caption{Non-smooth PDE solutions (test MSE, mean $\pm$ std over 5 seeds). Reference solutions computed via \texttt{scikit-fem} and exact spectral methods. Hybrid FI-KAN wins all 12 experiments. The rough-coefficient diffusion results ($65$--$79\times$) are the strongest in the paper.}
\label{tab:pde}
\small
\begin{tabular}{@{}llccccl@{}}
\toprule
Problem & Roughness & MLP & KAN & Hybrid FI-KAN & Improv. & Source \\
\midrule
\multicolumn{7}{@{}l}{\textit{Rough-coefficient diffusion} $-\frac{d}{dx}(a(x) \frac{du}{dx}) = 1$, \; $a(x) = e^{0.5 B_H(x)}$} \\
\quad $H_c = 0.1$ & rough coeff & 2.20e-3 & 1.26e-3 & \textbf{1.74e-5} & $73\times$ & FEM \\
\quad $H_c = 0.3$ & rough coeff & 3.05e-3 & 1.19e-3 & \textbf{1.50e-5} & $79\times$ & FEM \\
\quad $H_c = 0.5$ & rough coeff & 2.11e-3 & 1.15e-3 & \textbf{1.50e-5} & $77\times$ & FEM \\
\quad $H_c = 0.7$ & smooth coeff & 8.39e-4 & 1.11e-3 & \textbf{1.71e-5} & $65\times$ & FEM \\
\midrule
\multicolumn{7}{@{}l}{\textit{L-shaped domain} $-\Delta u = 1$, corner singularity $u \sim r^{2/3}\sin(2\theta/3)$} \\
\quad H\"older $2/3$ & corner sing. & 6.65e-2 & 2.89e-3 & \textbf{8.35e-4} & $3.5\times$ & FEM \\
\midrule
\multicolumn{7}{@{}l}{\textit{Stochastic heat equation} $du = \nu u_{xx} dt + \sigma dW$} \\
\quad $\sigma = 0.1$ & mild noise & 2.67e-1 & 1.33e-2 & \textbf{5.32e-4} & $25\times$ & spectral \\
\quad $\sigma = 0.5$ & moderate & 5.15e-1 & 8.40e-2 & \textbf{6.59e-3} & $13\times$ & spectral \\
\quad $\sigma = 1.0$ & strong noise & 6.46e-1 & 1.06e-1 & \textbf{1.61e-2} & $6.5\times$ & spectral \\
\midrule
\multicolumn{7}{@{}l}{\textit{Fractal terrain} $(x,y) \mapsto z$, diamond-square, $\dimB \approx 3 - R$} \\
\quad $R = 0.2$ & $\dimB \approx 2.8$ & 5.21e-1 & 2.22e-1 & \textbf{2.15e-1} & $1.03\times$ & synth. \\
\quad $R = 0.4$ & $\dimB \approx 2.6$ & 3.63e-1 & 1.28e-1 & \textbf{1.02e-1} & $1.26\times$ & synth. \\
\quad $R = 0.6$ & $\dimB \approx 2.4$ & 2.32e-1 & 6.88e-2 & \textbf{3.84e-2} & $1.8\times$ & synth. \\
\quad $R = 0.8$ & $\dimB \approx 2.2$ & 1.47e-1 & 3.42e-2 & \textbf{1.44e-2} & $2.4\times$ & synth. \\
\bottomrule
\end{tabular}
\end{table}

\paragraph{Analysis.}

\textit{Rough-coefficient diffusion.}
This is the strongest result in the paper.
Hybrid FI-KAN achieves $65$--$79\times$ improvement over KAN across all coefficient roughness levels, with remarkably consistent performance: the FI-KAN MSE stays near $1.5 \times 10^{-5}$ regardless of whether $H_c = 0.1$ (very rough coefficient) or $H_c = 0.7$ (smooth coefficient).
This consistency suggests that the fractal correction path captures the structured roughness inherited from the coefficient field through the PDE operator, while the spline path handles the smooth component of the solution.

\textit{L-shaped domain.}
The $3.5\times$ improvement ($8.35 \times 10^{-4}$ vs.\ $2.89 \times 10^{-3}$) on the canonical corner singularity benchmark demonstrates that FI-KAN handles the $r^{2/3}$ singularity more effectively than B-splines.
This is the setting that motivates $h$-$p$ adaptive finite element methods~\cite{babuska1986hp}: the corner singularity limits the convergence rate of uniform polynomial approximation to $O(h^{2/3})$ regardless of polynomial degree.
FI-KAN's learnable fractal dimension provides an analogous adaptation mechanism within the neural network framework.

\begin{figure}[!htbp]
\centering
\includegraphics[width=\textwidth]{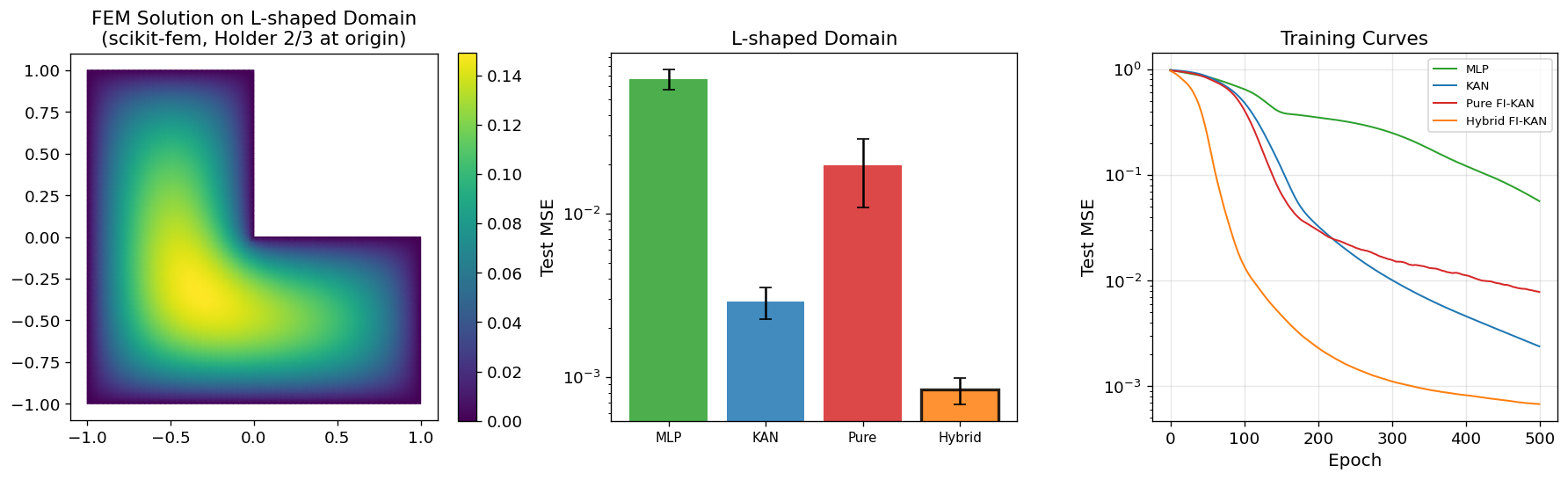}
\caption{L-shaped domain benchmark ($-\Delta u = 1$, H\"older $2/3$ corner singularity).
\textbf{Left:} FEM reference solution computed via \texttt{scikit-fem} (P1 elements, 6144 elements);
the singularity at the re-entrant corner is visible as the peaked region near the origin.
\textbf{Center:} Test MSE comparison (log scale, $\pm 1$ std over 5 seeds).
Hybrid FI-KAN achieves $3.5\times$ lower MSE than KAN.
\textbf{Right:} Training curves showing Hybrid FI-KAN converging to $\sim\!10^{-3}$ MSE
while KAN plateaus near $3 \times 10^{-3}$.}
\label{fig:lshaped-fem}
\end{figure}

\textit{Stochastic heat equation.}
The improvement ranges from $25\times$ at $\sigma = 0.1$ to $6.5\times$ at $\sigma = 1.0$.
The decrease with increasing $\sigma$ is physically meaningful: at high noise intensity, the spatial snapshot approaches white noise, for which no structured basis has an advantage.

\textit{Fractal terrain.}
The improvement grows from $1.03\times$ at extreme roughness ($R = 0.2$) to $2.4\times$ at moderate roughness ($R = 0.8$), mirroring the 1D regularity sweep (\cref{tab:holder}): the advantage is largest where the target has \emph{learnable} multi-scale structure.

\subsection{Fractal Dimension as Diagnostic Tool}
\label{sec:exp:diagnostic}

The learned fractal dimensions from both variants correlate with the regularity of the target, providing an interpretable diagnostic.

\begin{table}[!htbp]
\centering
\caption{Learned fractal dimension (mean $\pm$ std over 5 seeds) for targets of varying regularity. The Hybrid variant produces sharper estimates, with smooth targets yielding $\dimB \approx 1$ and rough targets yielding $\dimB > 1.1$.}
\label{tab:diagnostic}
\small
\begin{tabular}{@{}lccc@{}}
\toprule
Target & True $\alpha$ & Pure $\dimB$ & Hybrid $\dimB$ \\
\midrule
smooth ($\alpha=2.0$) & 2.0 & $1.128 \pm 0.012$ & $1.000 \pm 0.000$ \\
$C^1$ ($\alpha=1.0$) & 1.0 & $1.157 \pm 0.012$ & $1.034 \pm 0.007$ \\
H\"older 0.6 & 0.6 & $1.194 \pm 0.014$ & $1.119 \pm 0.007$ \\
H\"older 0.3 & 0.3 & $1.190 \pm 0.026$ & $1.152 \pm 0.018$ \\
Weierstrass & 0.64 & $1.144 \pm 0.008$ & $1.149 \pm 0.004$ \\
\bottomrule
\end{tabular}
\end{table}

\begin{figure}[!htbp]
\centering
\includegraphics[width=0.75\textwidth]{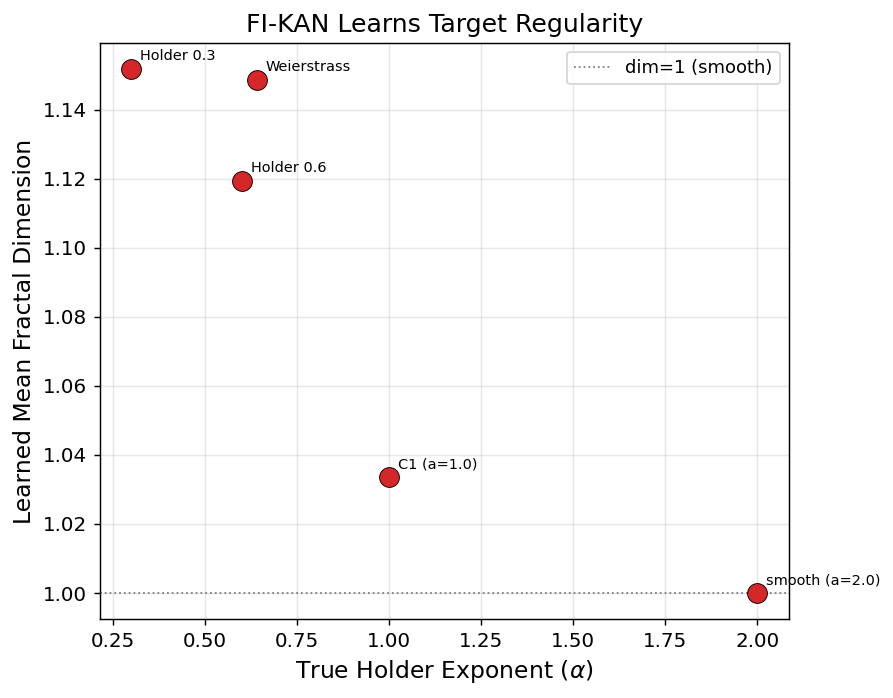}
\caption{Learned fractal dimension vs.\ true H\"older exponent (Hybrid FI-KAN). The learned $\dimB$ correlates monotonically with target regularity: smooth targets ($\alpha = 2.0$) yield $\dimB = 1.0$ exactly (fractal path inactive), while rough targets ($\alpha = 0.3$, Weierstrass) yield $\dimB > 1.1$ (fractal path active). The dotted line at $\dimB = 1$ marks the smooth baseline. This monotonic correlation makes the learned fractal dimension a useful, interpretable proxy for target regularity.}
\label{fig:diagnostic-hybrid}
\end{figure}

\paragraph{Analysis.}
The Hybrid variant produces fractal dimension estimates that correlate monotonically with target regularity: smooth functions receive $\dimB \approx 1.0$ (indicating the fractal path is inactive), while rough functions receive $\dimB > 1.1$.
The Pure variant overestimates dimension for smooth targets ($\dimB = 1.128$ for $\alpha = 2.0$) because it is forced to use some fractal structure even when inappropriate, further confirming that fractal bases are suboptimal for smooth targets.

These are not precise fractal dimension estimators (dedicated statistical methods should be used for that purpose), but the monotonic correlation provides a useful diagnostic: the learned $\dimB$ serves as a proxy for target regularity.

\subsection{Training Dynamics of Fractal Parameters}
\label{sec:exp:dynamics}

To understand how FI-KAN adapts its basis geometry during training, we track the evolution of the contraction parameters $d_i$ and the resulting fractal dimension $\dimB$ across epochs (\cref{fig:param-evolution}).
The training dynamics reveal three distinct regimes.
On smooth targets (left panel), the Pure variant's $d_i$ values grow unchecked (without sufficient regularization), pushing $\dimB$ above 1.2 and wasting representational capacity on spurious multi-scale structure; this is precisely the pathology predicted by \cref{cor:smooth-obstruction}.
On fractal targets with regularization (center panel), the parameters find a stable equilibrium near $\dimB = 1.0$, balancing expressiveness against complexity.
The sawtooth case (right panel) is particularly instructive: the early-training spike in $\dimB$ to $\sim\!1.05$ followed by consolidation suggests an explore-then-consolidate dynamic where the network first probes fractal parameter space and then retains only what reduces the loss.

\begin{figure}[!htbp]
\centering
\includegraphics[width=\textwidth]{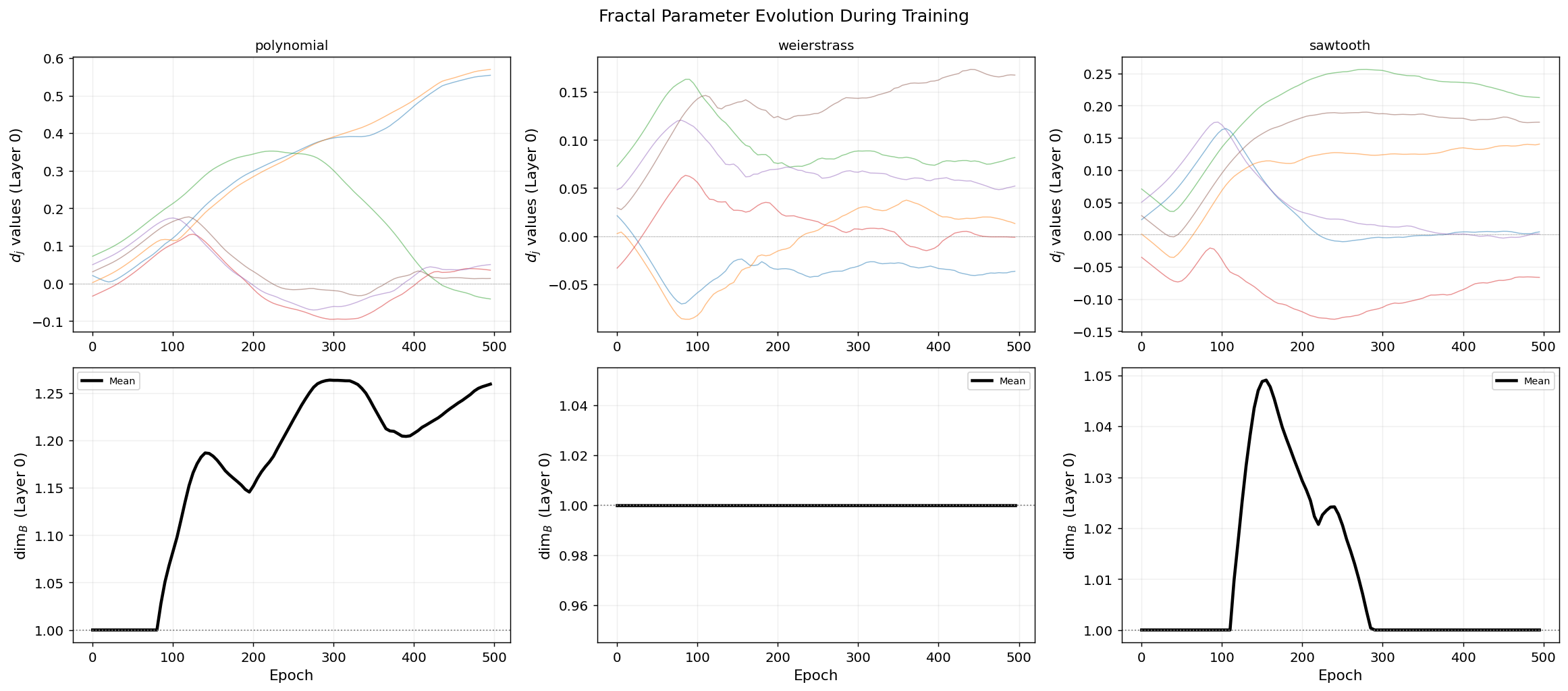}
\caption{Fractal parameter evolution during training (Pure FI-KAN). \textbf{Top row:} individual $d_i$ trajectories for layer 0 across three targets. \textbf{Bottom row:} mean $\dimB$ over training. \textbf{Left (polynomial):} the contraction parameters grow steadily, pushing $\dimB$ above 1.2; this is the smooth-target overfitting pathology predicted by \cref{cor:smooth-obstruction}. \textbf{Center (Weierstrass):} $\dimB$ stays near 1.0 throughout, indicating the regularizer successfully constrains fractal complexity. \textbf{Right (sawtooth):} $\dimB$ spikes to $\sim\!1.05$ in early training (exploration phase) then settles back, suggesting an explore-then-consolidate dynamic where the network first probes fractal structure and then retains only what reduces the loss.}
\label{fig:param-evolution}
\end{figure}

\FloatBarrier
\section{Related Work}
\label{sec:related}
 
\paragraph{Kolmogorov--Arnold Networks and basis function variants.}
KAN~\cite{liu2024kan} introduced learnable univariate spline functions on network edges, motivated by the Kolmogorov--Arnold representation theorem~\cite{kolmogorov1957representation,arnold1957functions,sprecher1965structure}.
The efficient-KAN implementation~\cite{blealtan2024efficientkan} provides a computationally practical variant.
Following the release of KAN, a substantial number of works have explored alternative basis functions for the edge activations.
Wav-KAN~\cite{bozorgasl2024wavkan} replaces B-splines with wavelet functions (continuous and discrete wavelet transforms), leveraging the multiresolution analysis properties of wavelets to capture both high-frequency and low-frequency components.
ChebyKAN~\cite{ss2024chebykan} uses Chebyshev polynomials of the first kind, exploiting their orthogonality and optimal interpolation properties on $[-1,1]$.
FastKAN~\cite{li2024fastkan} shows that third-order B-splines can be well approximated by Gaussian radial basis functions, yielding a faster implementation that is also a classical RBF network.
Aghaei~\cite{aghaei2024fkan} explored fractional Jacobi polynomials as edge functions.
Teymoor Seydi~\cite{teymoorseydi2024polykan} provides a comparative study of multiple polynomial basis families within the KAN framework.
All of these variants operate within the space of smooth or piecewise smooth basis functions.
None introduce basis functions with tunable geometric regularity, and none address the fundamental question of how to match the regularity of the basis to the regularity of the target function.
FI-KAN is, to our knowledge, the first KAN variant whose basis functions have a learnable fractal dimension, providing a continuous and differentiable knob from smooth ($\dimB = 1$) to fractal ($\dimB > 1$) geometry.
 
\paragraph{Fractal interpolation theory.}
Barnsley~\cite{barnsley1986fractal} introduced fractal interpolation functions via iterated function systems~\cite{hutchinson1981fractals,barnsley1988fractals}.
Navascu\'es~\cite{navascues2005fractal} generalized the framework with the $\alpha$-fractal operator, enabling a smooth transition from classical to fractal approximation.
Chand and Kapoor~\cite{chand2006generalized} extended FIFs to cubic spline fractal interpolation.
Massopust~\cite{massopust2010interpolation,massopust1994fractal} developed the approximation-theoretic foundations of fractal functions, including connections to splines and wavelets.
Falconer~\cite{falconer2003fractal} provides comprehensive foundations for fractal geometry, including the box-counting dimension theory underlying our regularizer.
Despite this rich mathematical literature, fractal interpolation has not previously been deployed as a learnable basis function class within neural network architectures.
FI-KAN bridges this gap by treating the IFS contraction parameters as differentiable, trainable quantities within a gradient-based optimization framework.
 
\paragraph{Multi-scale neural representations.}
The challenge of representing signals with fine-scale detail or high-frequency content in neural networks has been addressed through several strategies.
SIREN~\cite{sitzmann2020siren} uses periodic (sinusoidal) activation functions to represent complex signals and their derivatives, demonstrating that the choice of activation function fundamentally affects representational capacity for high-frequency content.
Tancik et al.~\cite{tancik2020fourier} showed that mapping inputs through random Fourier features enables MLPs to learn high-frequency functions, overcoming the spectral bias of standard coordinate-based networks.
These approaches modify the activation function or the input encoding to capture multi-scale structure, but they do so within a framework of smooth, globally defined functions.
FI-KAN takes a complementary approach: rather than modifying activations or input mappings, it modifies the \emph{basis functions themselves} to have learnable multi-scale structure derived from IFS theory.
Unlike Fourier-based approaches, which decompose into global periodic components, FIF bases provide \emph{localized} self-affine structure with tunable roughness, making them better suited for targets with spatially heterogeneous regularity.
 
\paragraph{Neural PDE solvers.}
Physics-informed neural networks (PINNs)~\cite{raissi2019physics} embed PDE residuals into the training loss, enabling mesh-free solution of differential equations.
However, PINNs assume smooth solutions through their choice of smooth activations and are known to struggle with non-smooth or multi-scale physics.
Neural operator methods take a different approach: DeepONet~\cite{lu2021deeponet} learns nonlinear operators mapping between function spaces via a branch-trunk architecture, while the Fourier Neural Operator (FNO)~\cite{li2021fno} parameterizes the integral kernel in Fourier space, achieving resolution-invariant operator learning for families of PDEs.
Our contribution is orthogonal to both paradigms.
FI-KAN addresses the \emph{basis function level}: what functions the network uses to build its approximation, regardless of whether training is data-driven (regression) or physics-informed (PINN).
The PDE experiments in \cref{sec:exp:pde} use data-driven regression on reference solutions computed by established numerical methods, isolating the effect of basis function choice from the training paradigm.
In principle, FIF bases could be integrated into neural operator architectures to improve their handling of non-smooth PDE families, though this remains future work.
 
\paragraph{Approximation theory for non-smooth functions via neural networks.}
The approximation-theoretic properties of neural networks have been studied primarily for smooth target functions.
Yarotsky~\cite{yarotsky2017error} established tight upper and lower bounds for the complexity of approximating Sobolev-space functions with deep ReLU networks, showing that deep networks are exponentially more efficient than shallow networks for smooth targets.
These results, along with classical universal approximation theorems~\cite{cybenko1989approximation,hornik1989multilayer}, establish that neural networks with smooth activations achieve optimal rates for smooth targets.
However, the question of what happens when the target is \emph{not} smooth has received less attention.
For targets with H\"older regularity $\alpha < 1$ or fractal character, the approximation rate with smooth bases is limited to $O(h^\alpha)$ regardless of the polynomial degree or network depth (\cref{thm:bspline-holder}).
FI-KAN addresses this gap by providing basis functions that can match the geometric regularity of the target, potentially circumventing the $O(h^\alpha)$ barrier for structured non-smooth functions.
 
\paragraph{Learnable and adaptive activation functions.}
The idea of learning activation functions within neural networks has a long history.
Agostinelli et al.~\cite{agostinelli2015learning} proposed learning piecewise linear activations.
The KAN paper itself~\cite{liu2024kan} discusses the connection to learnable activation networks (LANs).
Bohra et al.~\cite{bohra2020learning} developed a framework for learning activation functions as linear combinations of basis functions within a representer theorem framework.
FI-KAN can be viewed as a specific instance of this broader program, where the learnable activations are constrained to the mathematically structured family of fractal interpolation functions.
The critical distinction is that FI-KAN's learnable parameters (the IFS contraction factors $\mathbf{d}$) control a well-defined geometric quantity (fractal dimension) with a clear approximation-theoretic interpretation, rather than serving as unconstrained shape parameters.
This structured parameterization enables the fractal dimension regularizer (\cref{def:reg}), which provides geometry-aware complexity control with no analogue in generic learnable activation frameworks.

\section{Discussion and Limitations}
\label{sec:discussion}

\paragraph{When to use FI-KAN.}
The experimental evidence supports the following guideline: if the target function is known or suspected to have non-trivial H\"older regularity (exponent $\alpha < 1$), fractal self-similarity, or multi-scale oscillatory structure, Hybrid FI-KAN with $K = 2$ is recommended.
For targets that are known to be smooth ($C^2$ or better) and low-dimensional, standard KAN may suffice, though Hybrid FI-KAN remains competitive.
Pure FI-KAN is recommended only when the target is known to be fractal and computational overhead must be minimized (it has fewer parameters than the Hybrid).

\paragraph{Computational overhead.}
The fractal basis evaluation requires $K$ sequential iterations, limiting GPU parallelism.
With the recommended $K = 2$, the overhead is approximately $2.5\times$ relative to standard KAN per forward pass.
A fused CUDA kernel exploiting the embarrassingly parallel structure across features and batch dimensions would substantially reduce this gap.

\paragraph{Fractal depth and trainability.}
The fractal depth analysis (\cref{sec:exp:depth}) reveals a tension between theoretical expressiveness and practical trainability: deeper recursion ($K > 4$) degrades optimization despite providing richer basis functions.
Techniques from deep network optimization (skip connections, normalization) may help, though adapting these to the recursive IFS structure is non-trivial.

\paragraph{The $c_i$ parameters.}
The current implementation uses the $c_i = 0$ specialization of Barnsley's IFS.
Incorporating learnable $c_i$ (vertical shearing per subinterval) would add $N$ additional parameters per input feature and provide richer within-subinterval shape control, without altering the fractal dimension formula.
We expect this to improve Pure FI-KAN's performance on targets with non-trivial local affine structure.

\paragraph{PDE solutions: regression vs.\ physics-informed training.}
The PDE experiments in \cref{sec:exp:pde} demonstrate that FI-KAN provides substantial advantages ($3.5$--$79\times$) on non-smooth PDE solutions accessed via regression on reference solutions computed by established numerical methods.
However, preliminary experiments with physics-informed (PINN) training on the same PDEs show that the recursive fractal basis evaluation creates optimization pathologies: the sequential multiplicative structure of the Read--Bajraktarevi\'c iteration produces a rough loss landscape that disrupts the gradient flow required by the PINN residual.
This is an optimization issue, not an approximation-capacity issue: the same architectures that fail under PINN training succeed under regression training on the identical solutions.
Resolving this likely requires FIF-specific preconditioning or hybrid training strategies that combine data-driven regression with physics-informed refinement.

\paragraph{Structured vs.\ unstructured roughness.}
FI-KAN's advantage requires that the target's roughness be \emph{structured}: deterministic, PDE-inherited, or self-similar.
On single realizations of fractional Brownian motion ($H = 0.1$, 1000 training samples), KAN outperforms FI-KAN across all Hurst parameters.
The explanation is that a single fBm path, while genuinely rough, does not exhibit the repeating self-affine structure that FIF bases are designed to capture (\cref{thm:fif-selfaffine}).
At 1000 samples, no model resolves more than 35\% of the variance ($\text{MSE} > 0.6$), and KAN wins by virtue of having fewer parameters (416 vs.\ 840) and a simpler optimization landscape.
This negative result sharpens the regularity-matching claim: it is not roughness \emph{per se} that FI-KAN exploits, but structured roughness with learnable multi-scale correlations.
The rough-coefficient diffusion results ($65$--$79\times$) confirm this distinction: the PDE operator transforms the unstructured fBm coefficient field into a solution with structured, deterministic roughness that FI-KAN captures effectively.

\paragraph{High-dimensional targets.}
Our experiments focus on low-dimensional regression ($d \leq 10$).
The behavior of FI-KAN on high-dimensional targets and the interaction between fractal bases and the curse of dimensionality remain open questions.

\section{Conclusion}
\label{sec:conclusion}

We have introduced Fractal Interpolation KAN (FI-KAN), which incorporates learnable fractal interpolation bases from iterated function system theory into the Kolmogorov--Arnold Network framework.
Two variants, Pure FI-KAN (Barnsley framework) and Hybrid FI-KAN (Navascu\'es framework), provide complementary perspectives on the regularity-matching hypothesis: the geometric regularity of the basis functions should be adapted to the geometric regularity of the target function.

The experimental evidence supports this hypothesis across multiple axes.
On fractal and non-smooth targets, FI-KAN provides up to $6.3\times$ MSE reduction over KAN on genuine fractal targets ($\dimB = 1.5$).
On non-smooth PDE solutions with structured roughness, computed via finite elements (\texttt{scikit-fem}) and exact spectral methods, Hybrid FI-KAN achieves up to $79\times$ improvement over KAN, with the strongest gains on diffusion equations with rough coefficients and corner singularity problems.
The H\"older regularity sweep demonstrates consistent improvements across the full regularity spectrum.
The fractal dimension regularizer provides interpretable, geometry-aware complexity control whose learned values correlate with true target regularity.
An additional advantage in noise robustness ($6.1\times$ on fractal targets) suggests that fractal bases provide structural benefits beyond pure approximation quality.

The contrast between Pure and Hybrid FI-KAN is itself a scientific finding: it demonstrates that basis geometry genuinely matters, and that the optimal choice depends on the regularity of the target.
A further distinction emerges between structured and unstructured roughness: FI-KAN's advantage requires that the target's non-smooth character exhibit learnable multi-scale correlations (as in PDE-inherited roughness or deterministic fractal functions), rather than purely stochastic roughness from single random path realizations.
This establishes regularity-matched basis design as a viable and principled strategy for neural function approximation, opening the door to architectures that explicitly adapt their geometric structure to the problem at hand.
The immediate practical implications lie in scientific domains where non-smooth structure is the rule rather than the exception: subsurface flow in heterogeneous media, elliptic PDEs on non-convex domains, stochastic PDE solutions, geophysical signal processing, and biomedical signals with fractal scaling.
In each of these settings, the ability to learn the appropriate regularity from data, rather than assuming smoothness a priori, addresses a fundamental modeling gap that current neural architectures leave open.

\bibliographystyle{plain}
\bibliography{references}

\FloatBarrier
\appendix

\section{Performance on Smooth Benchmarks}
\label{app:smooth}

For completeness, we evaluate FI-KAN on the canonical toy functions from the KAN paper~\cite{liu2024kan} and on Feynman physics equations from the symbolic regression benchmark~\cite{udrescu2020aifeynman}.
These are smooth, analytic functions where B-spline bases are near-optimal.

\subsection{KAN Paper Toy Functions}

\begin{table}[!htbp]
\centering
\caption{KAN paper toy functions~\cite{liu2024kan}. These are smooth, low-dimensional targets where B-spline KAN is near-optimal. Hybrid FI-KAN wins 1 of 5; Pure FI-KAN wins 0 of 5. This is consistent with the theoretical analysis: fractal bases provide no advantage on smooth analytic targets.}
\label{tab:kan-toys}
\small
\begin{tabular}{@{}l|ccc|ccc@{}}
\toprule
& \multicolumn{3}{c|}{Mean MSE} & \multicolumn{3}{c}{Params} \\
Target & MLP & KAN & Hybrid & MLP & KAN & Hybrid \\
\midrule
$J_0$ Bessel     & 8.62e-2 & \textbf{1.54e-4} & 2.18e-4 & 424 & 208 & 424 \\
$e^x \sin(y) + y^2$ & 7.79e-1 & \textbf{4.32e-2} & 6.16e-2 & 609 & 312 & 608 \\
$x \cdot y$      & \textbf{4.6e-5} & 8.3e-5 & 8.2e-5 & 609 & 312 & 608 \\
10D sum        & 4.57e-3 & 1.21e-2 & \textbf{4.03e-4} & 1081 & 572 & 1080 \\
4D composite   & 2.50 & \textbf{1.38} & 4.27 & 639 & 338 & 652 \\
\bottomrule
\end{tabular}
\end{table}

On smooth, low-dimensional targets ($J_0$ Bessel, $e^x\sin(y)+y^2$, $x \cdot y$), KAN's B-spline bases are near-optimal and the fractal correction provides no advantage.
The one Hybrid FI-KAN win (10D sum) suggests that fractal bases may help in higher dimensions even for relatively smooth targets, possibly by providing additional representational diversity.

Pure FI-KAN performs substantially worse on all five targets (\cref{tab:kan-toys-pure}), consistent with the approximation-theoretic analysis in \cref{sec:theory}.

\begin{table}[!htbp]
\centering
\caption{KAN paper toy functions (Pure FI-KAN).}
\label{tab:kan-toys-pure}
\small
\begin{tabular}{@{}lccc@{}}
\toprule
Target & MLP & KAN & Pure FI-KAN \\
\midrule
$J_0$ Bessel & 8.62e-2 & \textbf{1.54e-4} & 7.30e-3 \\
$e^x\sin(y)+y^2$ & 7.93e-1 & \textbf{4.32e-2} & 1.98 \\
$x \cdot y$ & \textbf{6.1e-5} & 8.3e-5 & 8.22e-4 \\
10D sum & \textbf{8.17e-3} & 1.21e-2 & 3.05e-1 \\
4D composite & 2.75 & \textbf{1.38} & 9.59 \\
\bottomrule
\end{tabular}
\end{table}

\begin{figure}[!htbp]
\centering
\includegraphics[width=\textwidth]{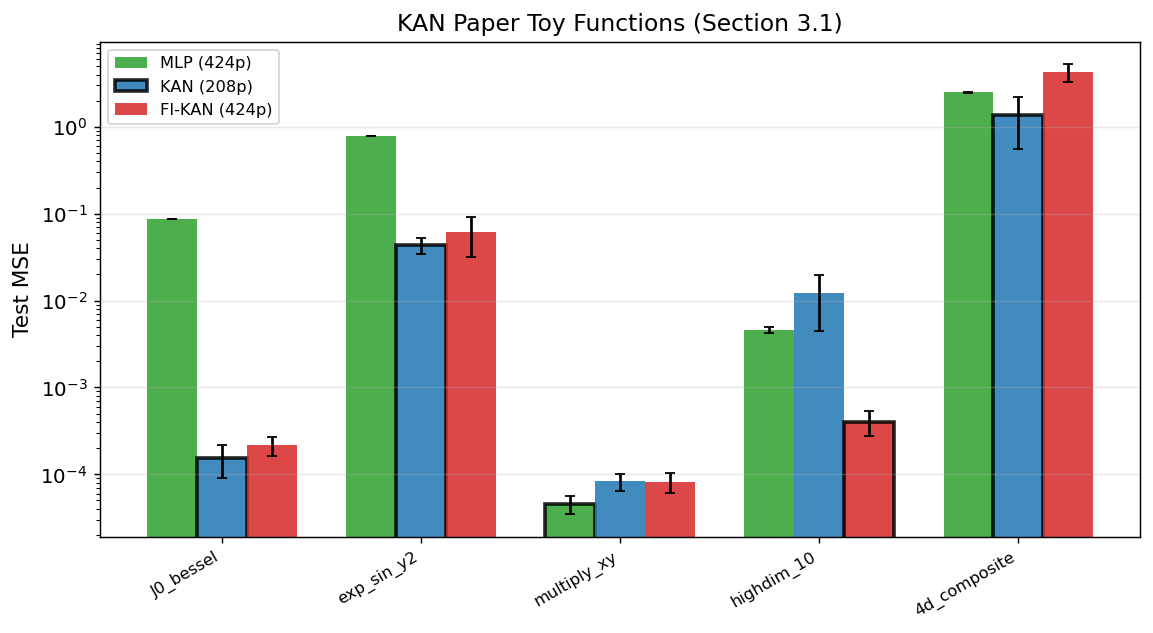}
\caption{KAN paper toy functions (Hybrid FI-KAN). Bar chart comparison of test MSE across five canonical smooth targets. KAN wins on 3 of 5 targets where B-splines are near-optimal. The one Hybrid FI-KAN win (10D sum, $\sim\!30\times$ improvement) suggests that the fractal correction provides additional representational diversity that benefits higher-dimensional targets even when the underlying function is smooth.}
\label{fig:kan-toys}
\end{figure}

\subsection{Feynman Physics Equations}

\begin{table}[!htbp]
\centering
\caption{Feynman physics equations (Hybrid FI-KAN). FI-KAN wins 3 of 7 on these smooth physics targets.}
\label{tab:feynman-hybrid}
\small
\begin{tabular}{@{}lcccl@{}}
\toprule
Equation & MLP & KAN & Hybrid FI-KAN & Best \\
\midrule
I.6.2 Gaussian & 6.46e-1 & 1.32e-2 & \textbf{9.77e-3} & FI-KAN \\
I.12.11 Lorentz & \textbf{9.40e-4} & 2.15e-2 & 4.53e-2 & MLP \\
I.16.6 Relativity & 8.27e-2 & 7.35e-3 & \textbf{1.97e-3} & FI-KAN \\
I.29.16 Distance & 2.00e-2 & \textbf{2.15e-3} & 2.92e-3 & KAN \\
I.30.3 Diffraction & 2.10e-1 & 1.07e-1 & \textbf{5.45e-2} & FI-KAN \\
I.50.26 Cosine & 7.33e-2 & \textbf{1.12e-2} & 1.33e-2 & KAN \\
III.9.52 Sinc & 7.43e-4 & \textbf{2.50e-4} & 4.22e-4 & KAN \\
\bottomrule
\end{tabular}
\end{table}

On smooth physics equations, KAN wins 4 of 7, consistent with the expectation that B-splines are well-adapted to analytic targets.
The FI-KAN wins (Gaussian, relativity, diffraction) occur on functions with sharp transitions or oscillatory structure that benefit from multi-scale representation.

\begin{figure}[!htbp]
\centering
\includegraphics[width=\textwidth]{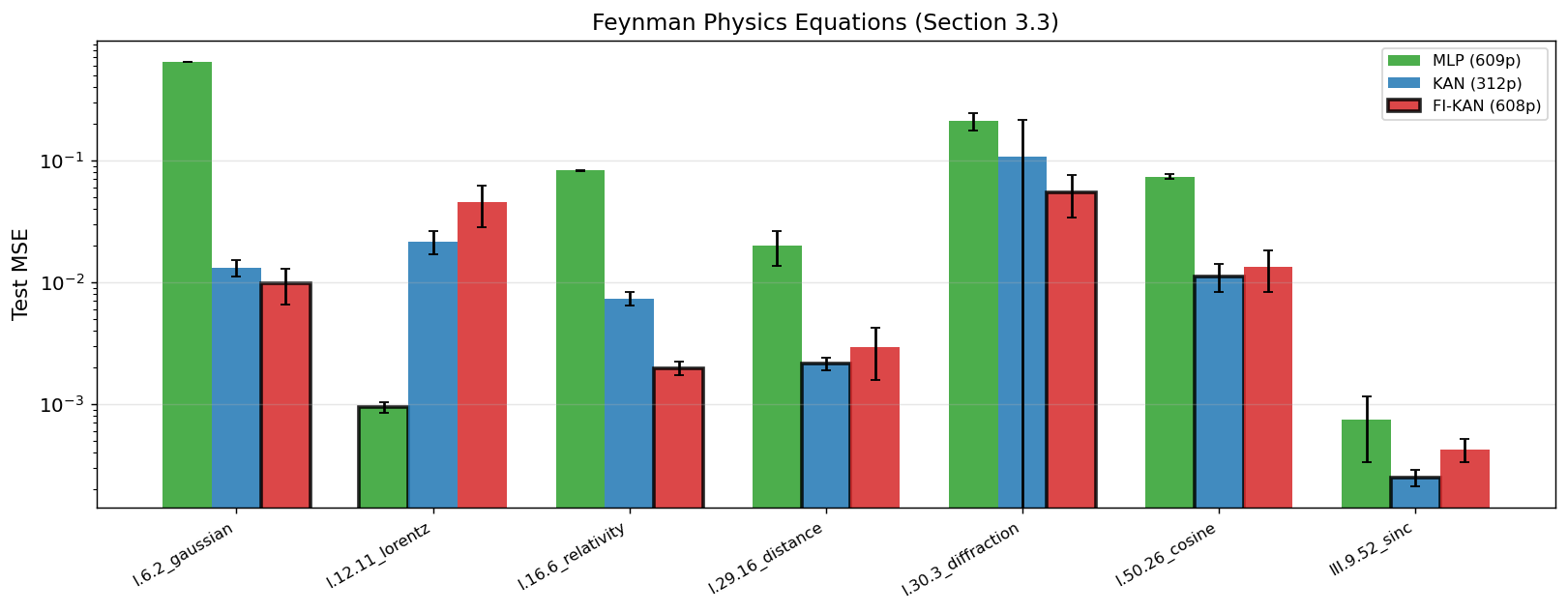}
\caption{Feynman physics equations (Hybrid FI-KAN). Bar chart comparison across seven analytic physics targets. KAN wins 4 of 7, consistent with the expectation that B-splines are well-adapted to analytic targets. The FI-KAN wins (Gaussian, relativity, diffraction) occur on functions with sharp transitions or oscillatory structure that benefit from multi-scale representation.}
\label{fig:feynman}
\end{figure}

\section{Pure FI-KAN Scaling Law Analysis}
\label{app:scaling-pure}

On smooth targets, Pure FI-KAN exhibits negative scaling exponents on both exp\_sin and Weierstrass.
This means test MSE \emph{increases} with parameter count, a pathological behavior explained by \cref{cor:smooth-obstruction}: adding parameters expands the capacity for fractal structure, which is counterproductive when the target is smooth.

\begin{figure}[!htbp]
\centering
\includegraphics[width=\textwidth]{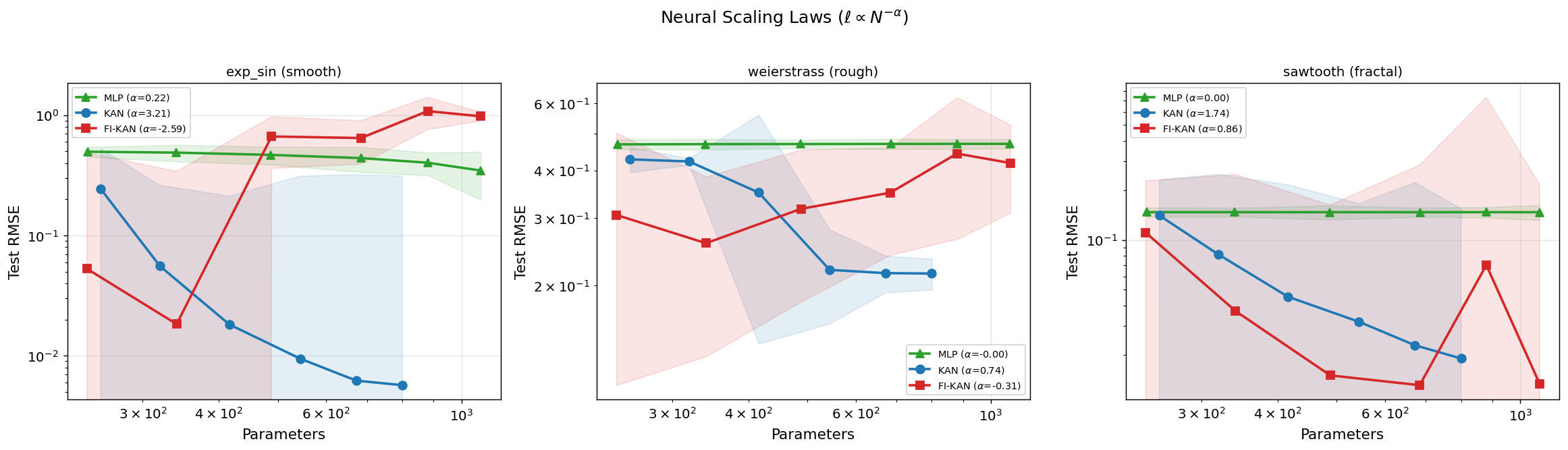}
\caption{Scaling laws (Pure FI-KAN): test RMSE vs.\ parameter count. \textbf{Left (exp\_sin):} Pure FI-KAN exhibits a \emph{negative} scaling exponent: test error \emph{increases} with model size. This is the pathological behavior predicted by \cref{cor:smooth-obstruction}: additional grid points introduce more oscillatory fractal basis functions, making the cancellation required for smooth approximation increasingly fragile. \textbf{Center (Weierstrass):} mild negative scaling. \textbf{Right (sawtooth):} negative scaling at large grid sizes, extending the smooth-approximation obstruction to fractal targets when too many uncontrolled fractal bases are introduced without the spline backbone. At small-to-moderate grid sizes, positive scaling is observed before the pathology sets in.}
\label{fig:scaling-pure}
\end{figure}

On the sawtooth target, Pure FI-KAN exhibits pathological behavior at large grid sizes ($G \geq 16$), with a negative overall scaling exponent.
This extends the smooth-approximation obstruction (\cref{cor:smooth-obstruction}): even on a fractal target, uncontrolled fractal basis growth is destabilizing when too many basis functions are introduced without the spline backbone.
At moderate grid sizes ($G \leq 8$), Pure FI-KAN does achieve positive scaling, confirming that the architecture can productively use moderate additional capacity when the target matches its inductive bias.

These scaling pathologies are not present in the Hybrid variant, where the spline path provides a smooth baseline that cannot be degraded by the fractal correction.

\section{Additional Pure FI-KAN Results}
\label{app:pure-additional}

This appendix collects complementary results for the Pure FI-KAN variant, mirroring the main-body analyses conducted with the Hybrid variant. The consistent pattern across all experiments confirms the regularity-matching hypothesis: Pure FI-KAN excels on rough/fractal targets and struggles on smooth targets, the inverse of standard KAN's behavior.

\begin{figure}[!htbp]
\centering
\includegraphics[width=\textwidth]{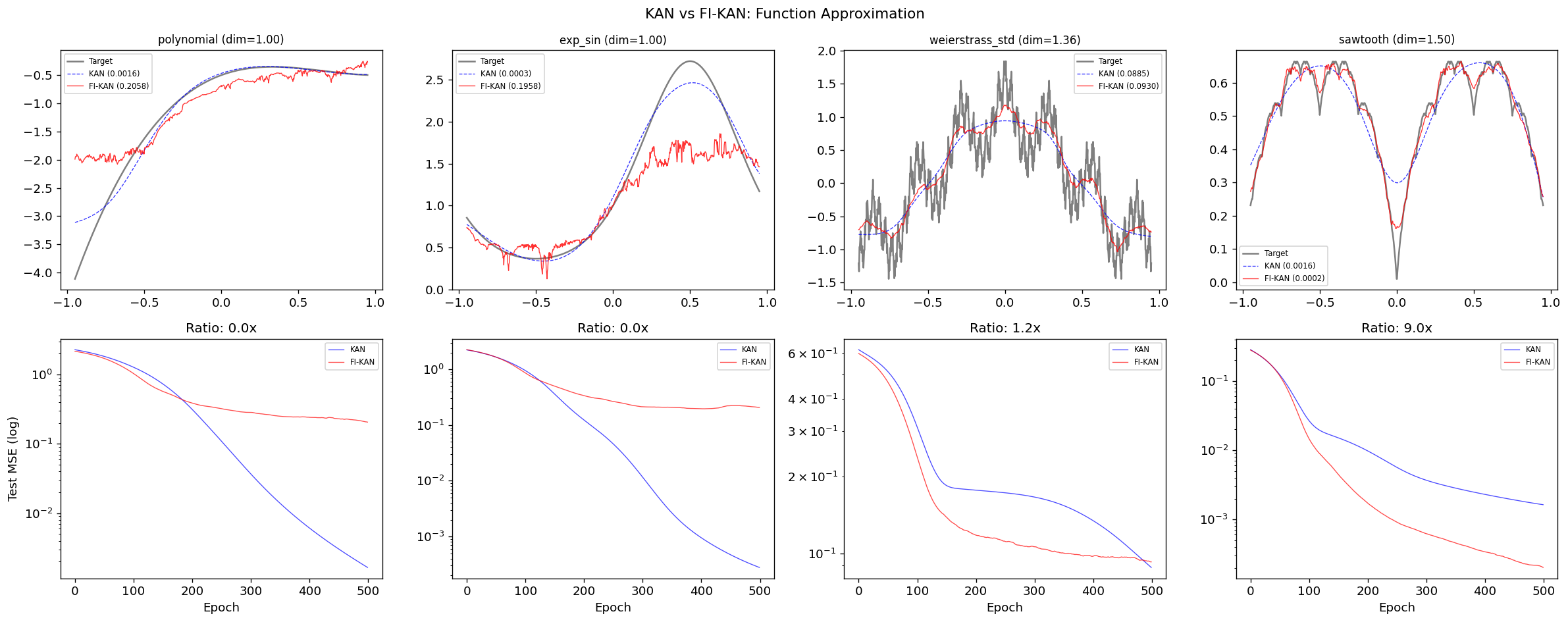}
\caption{Pure FI-KAN vs.\ KAN on four representative targets. \textbf{Top row:} function fits. \textbf{Bottom row:} training curves with improvement ratios. The pattern is complementary to Hybrid FI-KAN (\cref{fig:hybrid-function-approx}): Pure FI-KAN loses badly on smooth targets (ratio 0.0x on polynomial, exp\_sin) but wins on the sawtooth (ratio 2.3x). This asymmetry is the strongest controlled evidence for the regularity-matching hypothesis.}
\label{fig:pure-function-approx}
\end{figure}

\begin{figure}[!htbp]
\centering
\includegraphics[width=\textwidth]{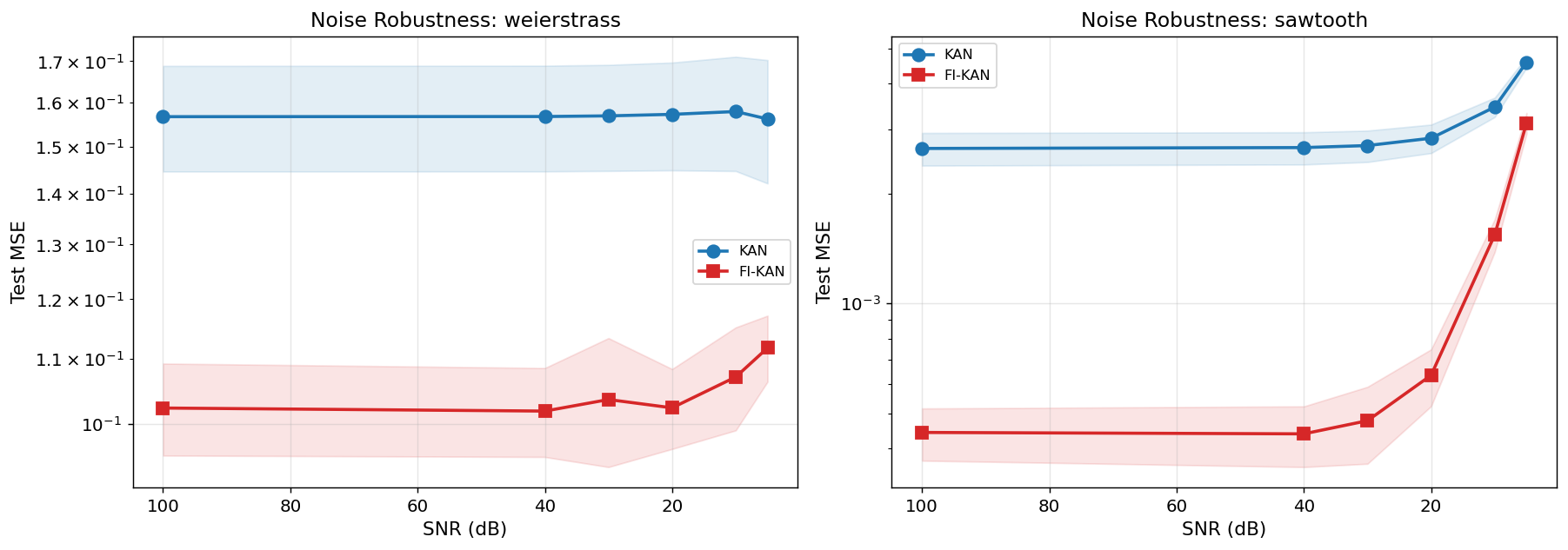}
\caption{Noise robustness (Pure FI-KAN). The Pure variant also outperforms KAN on fractal targets under noise, though with smaller margins than the Hybrid. \textbf{Left (Weierstrass):} $\sim\!1.5\times$ advantage. \textbf{Right (sawtooth):} up to $2.2\times$ advantage at clean SNR.}
\label{fig:noise-pure}
\end{figure}

\begin{figure}[!htbp]
\centering
\includegraphics[width=0.75\textwidth]{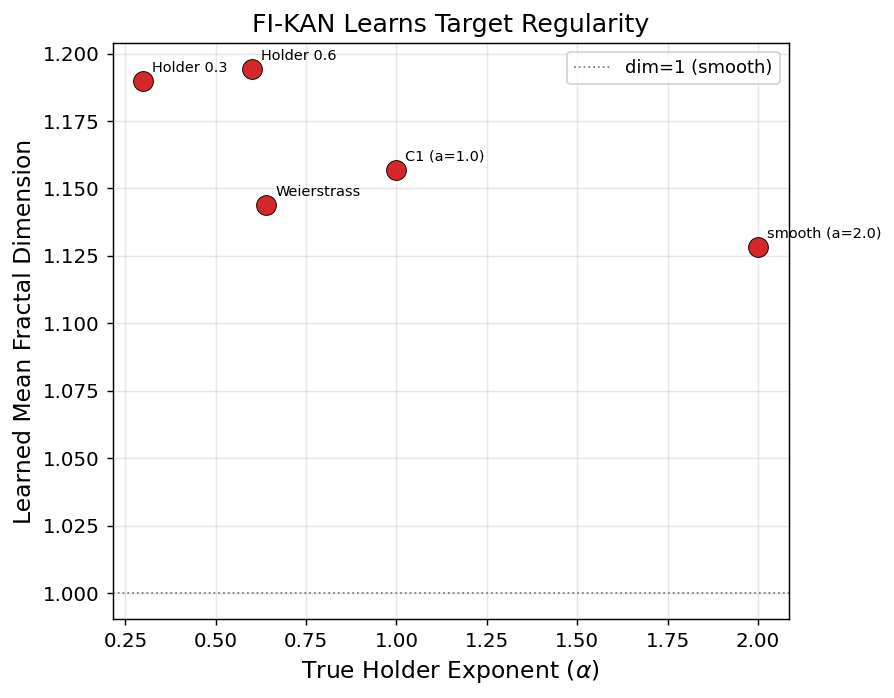}
\caption{Learned fractal dimension vs.\ true H\"older exponent (Pure FI-KAN). The monotonic correlation is present but weaker than the Hybrid variant (\cref{fig:diagnostic-hybrid}): smooth targets receive $\dimB \approx 1.13$ rather than 1.0, because the Pure variant is forced to use some fractal structure even when inappropriate. This systematic overestimation further confirms that fractal-only bases are suboptimal for smooth targets.}
\label{fig:diagnostic-pure}
\end{figure}

\begin{figure}[!htbp]
\centering
\includegraphics[width=\textwidth]{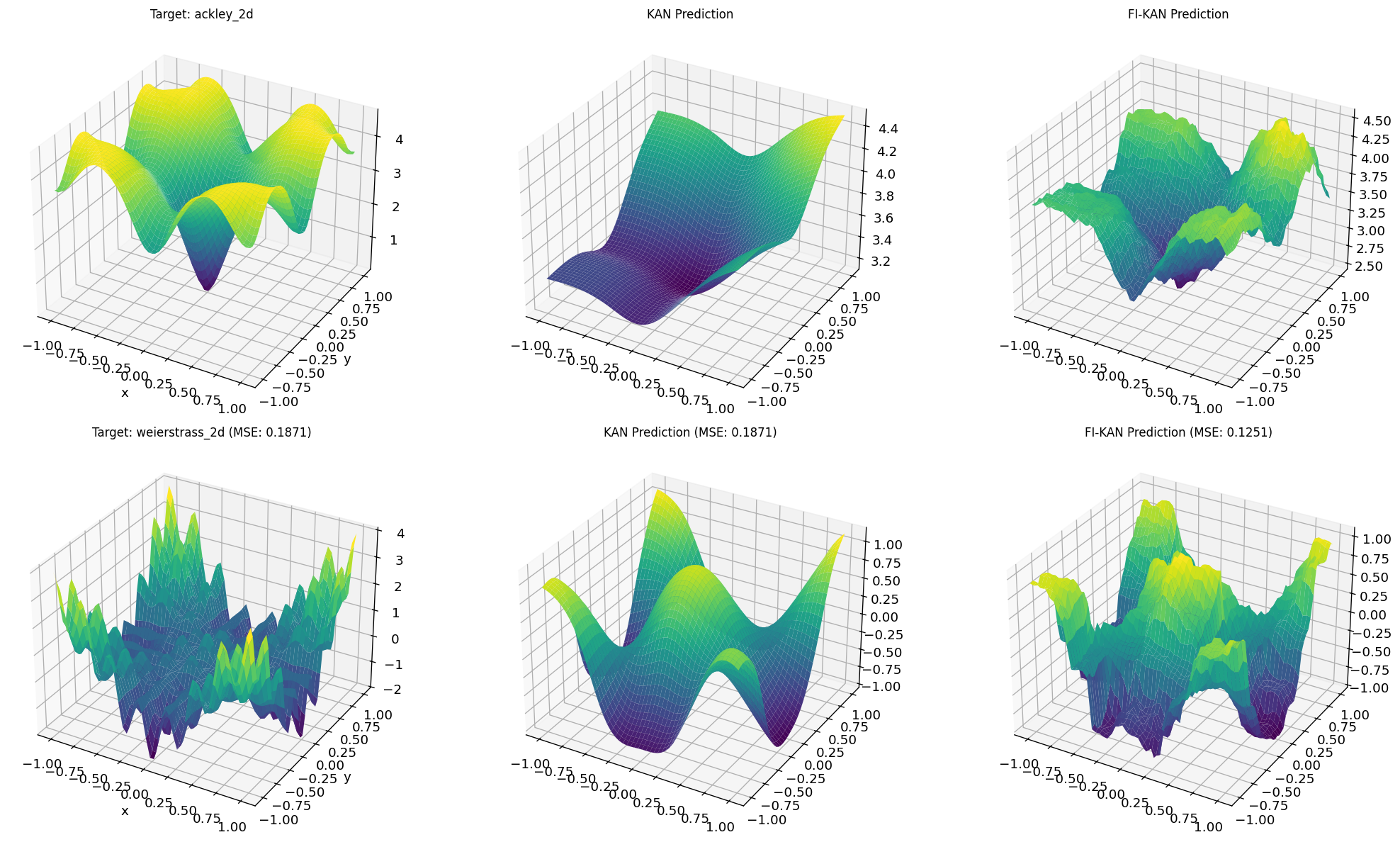}
\caption{2D function regression (Pure FI-KAN). Pure FI-KAN loses on Ackley (smooth outside origin) but captures more fine-scale structure on the 2D Weierstrass target than KAN (MSE 0.1251 vs.\ 0.1871), consistent with the regularity-matching thesis.}
\label{fig:2d-pure}
\end{figure}

\end{document}